\newif\ifllncs  %
\newif\iffull   %

\fulltrue

\def\shownotes{1}  %

\ifnum\shownotes=1
\newcommand{\authnote}[2]{\textcolor{red}{\textsf{#1 }\textcolor{blue}{ #2}}}
\else
\newcommand{\authnote}[2]{}
\fi

\ifllncs
  \documentclass[runningheads,a4paper]{llncs}

\else
\documentclass[11pt]{article}
\fi

\usepackage
[pdftex,colorlinks=true,pdfpagemode=UseNone,urlcolor=black,linkcolor=black,citecolor=black,pdfstartview=FitH]{hyperref}
\setcounter{secnumdepth}{3}
\usepackage{amsmath,amsfonts, amssymb}
\usepackage{graphicx}
\usepackage{amsthm}
\usepackage{algorithm}
\usepackage{cleveref}
\usepackage{algpseudocode}
\usepackage{tikz}
\usepackage{url}
\usepackage{caption}
\usepackage{subcaption}
\usepackage{booktabs}
\usepackage{diagbox}

\bibliographystyle{alpha}

\iffull
\setlength{\oddsidemargin}{-0.22in}
\setlength{\evensidemargin}{-0.22in}
\setlength{\textwidth}{6.8in}
\setlength{\topmargin}{-0.1in}
\setlength{\textheight}{8.5in}

\setlength{\parindent}{0in}
\setlength{\parskip}{5px}
\fi

\ifllncs\else

\newtheorem{theorem}{Theorem}[section]
\newtheorem{lemma}[theorem]{Lemma}
\newtheorem{definition}[theorem]{Definition}

\fi

\newtheorem{setting}[theorem]{Setting}

\newtheorem{Empirical}{Empirical Conclusion}

\newcommand{\LPN}{\mathsf{LPN}}

\newcommand{\la}{\leftarrow}

\newcommand{\pr}{\mbox{Prob}}

\newcommand{\ipd}[2]{\left \langle {#1}, {#2} \right \rangle}

\newcommand{\mat}[1] { \mathbf{#1} }		%
\newcommand{\ary}[1] { \mathbf{#1} }		%

\newcommand{\N}{\mathbb{N}}

\newcommand{\R}{\mathbb{R}}
\newcommand{\set}[1]{ \left\{ #1 \right\}  }   %

\newcommand{\Z}{\mathbb{Z}}

\newcommand{\HW}{\mathsf{HW}}

\newcommand{\distribution}{\mathcal{D}}
\newcommand{\datadistribution}{\distribution_{\mathrm{data}}}
\newcommand{\functionclass}{\mathcal{H}}
\newcommand{\dataset}{\mathcal{S}}
\newcommand{\trainingset}{\mathcal{S}_{\mathrm{Train}}}
\newcommand{\boostingset}{\mathcal{S}_{\mathrm{Boost}}}
\newcommand{\trainset}{\trainingset}
\newcommand{\testset}{\mathcal{S}_{\mathrm{Test}}}

\newcommand{\populoss}{\mathcal{L}_{\distribution}}
\newcommand{\trainloss}{\mathcal{L}_{\mathrm{Train}}}
\newcommand{\testloss}{\mathcal{L}_{\mathrm{Test}}}
\newcommand{\batchloss}{\mathcal{L}_{\mathrm{Batch}}}
\newcommand{\algo}{\mathcal{A}}
\newcommand{\E}{\mathrm{E}}
\newcommand{\nonneg}{\R^{\ge 0}}
\newcommand{\argmin}{\arg \min}
\newcommand{\relu}{\mathrm{ReLU}}
\newcommand{\sigmoid}{\mathrm{Sigmoid}}
\newcommand{\cosine}{\mathrm{Cos}}
\newcommand{\zerooneloss}{\ell_{0-1}}
\newcommand{\logisticloss}{\ell_{\mathrm{log}}}
\newcommand{\maeloss}{\ell_{\mathrm{mae}}}
\newcommand{\mseloss}{\ell_{\mathrm{mse}}}
\newcommand{\one}[1]{\mathrm{1}[#1]}
\newcommand{\model}{\mathcal{M}}
\newcommand{\weight}{W}
\newcommand{\optimizer}{\mathcal{O}}
\newcommand{\sampler}{\mathcal{DS}}
\newcommand{\stopcriterion}{\mathcal{SC}}
\newcommand{\timestopcriterion}{\mathcal{SC}_{\mathrm{time}}}
\newcommand{\stepstopcriterion}{\mathcal{SC}_{\mathrm{step}}}
\newcommand{\accstopcriterion}{\mathcal{SC}_{\mathrm{acc}}}
\newcommand{\regularization}{\mathcal{R}}
\newcommand{\batch}{\mathcal{S}_{\mathrm{Batch}}}
\newcommand{\eps}{\epsilon}
\newcommand{\identity}{\mathcal{I}}
\newcommand{\tw}{\tilde \weight}
\newcommand{\basemodel}{\mathrm{Base}}
\newcommand{\rtime}{t_{\mathrm{time}}}
\newcommand{\step}{t_{\mathrm{step}}}

\algdef{SE}%
[STRUCT]%
{Struct}%
{EndStruct}%
[1]%
{\textbf{struct} \textsc{#1}}%
{\textbf{end struct}}%

\algdef{SE}[VARIABLES]{Variables}{EndVariables}
   {\algorithmicvariables}
   {\algorithmicend\ \algorithmicvariables}
\algnewcommand{\algorithmicvariables}{\textbf{state variables}}

\algdef{SE}[PARAMS]{Params}{EndParams}
   {\algorithmicparams}
   {\algorithmicend\ \algorithmicparams}
\algnewcommand{\algorithmicparams}{\textbf{parameters}}

\begin{document}

\title{Practically Solving LPN in High Noise Regimes Faster \\ Using Neural Networks}

\ifllncs
\titlerunning{Solving LPN Using Neural Networks}
\fi

\author{ 
Haozhe Jiang\thanks{Equal Contribution.}~~\thanks{IIIS, Tsinghua University. Email: \texttt{jianghz20@mails.tsinghua.edu.cn}. } 
\and Kaiyue Wen$^{*}$~\thanks{IIIS, Tsinghua University. Email: \texttt{wenky20@mails.tsinghua.edu.cn}. }
\and Yilei Chen\thanks{IIIS, Tsinghua University, Shanghai Artificial Intelligence Laboratory, and Shanghai Qi Zhi Institute. Email: \texttt{chenyilei@mail.tsinghua.edu.cn}.  
Research supported by Tsinghua University startup funding. }
}

\maketitle

\begin{abstract}
\label{sec:abstract}
We conduct a systematic study of solving the learning parity with noise problem (LPN) using neural networks. Our main contribution is designing families of two-layer neural networks that practically outperform classical algorithms in high-noise, low-dimension regimes. We consider three settings where the numbers of LPN samples are abundant, very limited, and in between. In each setting we provide neural network models that solve LPN as fast as possible. For some settings we are also able to provide theories that explain the rationale of the design of our models. 

Comparing with the previous experiments of Esser, K{\"{u}}bler, and May (CRYPTO 2017), for dimension $n=26$, noise rate $\tau = 0.498$, the ``Guess-then-Gaussian-elimination'' algorithm takes 3.12 days on 64 CPU cores, whereas our neural network algorithm takes 66 minutes on 8 GPUs. Our algorithm can also be plugged into the hybrid algorithms for solving middle or large dimension LPN instances. 
\end{abstract}

\section{Introduction}
\label{sec:intro}
Neural networks are magical, capable of learning how to play various board games \cite{alphago,CAMPBELL200257,Tesauro1995TDGammonAS} and video games \cite{Mnih2015,Vinyals2019,DBLP:journals/corr/abs-1912-06680}, how to control fusion reactors \cite{Degrave2022}, how to predict spatial structures of proteins \cite{Jumper2021}, etc. In recent years, the rise of neural networks not only revolutionizes the field of artificial intelligence but also greatly impacts other fields in computer science. It is natural to ask whether neural networks can help us with cryptography. 

One of the signature cryptographic hard problems is the Learning Parity with Noise problem (LPN), also known as decoding binary random linear codes, a canonical problem in coding theory.
Let $n$ be the dimension, $\tau\in(0, 0.5)$ be the error rate. Let $\ary{s}$ be a secret vector in $\Z_2^n$. The LPN problem asks to find the secret vector $\ary{s}$ given an oracle which, on its $i^{th}$ query, outputs a random vector $\ary{a}_i\in \Z_2^n$, and a bit $y_i := \ipd{\ary{s}}{\ary{a}_i}+e_i \pmod 2$, where $e_i$ is drawn from the error distribution that outputs 1 with probability $\tau$, 0 with probability $1-\tau$. 
When $\tau$ is very small, say $\tau\in (0,1/n)$, then as long as we obtain $\Omega(n)$ LPN samples, we can efficiently find out which $n$ of the samples are error-free and use Gaussian elimination to find the secret. However, for large $\tau\in (1/n^c, 0.5)$ where $0<c<1$, no classical or quantum algorithm is known for solving LPN in polynomial time in $n$. 
\textbf{\textbf{}}
The LPN problem was proposed by machine learning experts as a conjectured hard problem for good cryptographic use~\cite{DBLP:conf/crypto/BlumFKL93}.
Since its proposal, researchers have found numerous interesting cryptographic applications from LPN, including authentication protocols~\cite{DBLP:conf/asiacrypt/HopperB01,DBLP:conf/eurocrypt/KiltzPCJV11}, public-key encryptions~\cite{DBLP:conf/focs/Alekhnovich03,DBLP:conf/crypto/YuZ16}, identity-based encryptions~\cite{DBLP:conf/eurocrypt/BrakerskiLSV18,DBLP:conf/pkc/DottlingGHM18}, and efficient building blocks for secure multiparty computation~\cite{boyle2019efficient}. The LPN problem also inspires the formulation of the learning with errors problem  (LWE)~\cite{DBLP:journals/jacm/Regev09}, which is more powerful in building cryptographic tools.

For LPN with constant noise rates, the asymptotically fastest algorithm, due to Blum, Kalai, and Wasserman~\cite{DBLP:journals/jacm/BlumKW03}, takes $2^{O\left(\frac{n}{\log n}\right)}$ time and requires $2^{O\left(\frac{n}{\log n}\right)}$ samples.
However, for cryptosystems based on LPN, the number of samples is typically a small polynomial in $n$. In this setting, Lyubashevsky gives a $2^{O\left(\frac{n}{\log\log n}\right)}$ time algorithm~\cite{DBLP:conf/approx/Lyubashevsky05} by first amplifying the number of samples and then running the BKW algorithm. 
To improve the concrete running time for solving LPN, researchers develop more sophisticated hybrid algorithms, see for example~\cite{DBLP:conf/scn/LevieilF06,DBLP:conf/asiacrypt/GuoJL14,bogos2016solving,zhang2016faster,DBLP:conf/asiacrypt/BogosV16,DBLP:conf/crypto/EsserKM17,DBLP:conf/crypto/EsserHK0S18}. The hybrid algorithms combine the BKW reduction with other tools like the ``Guess-then-Gaussian-elimination'' (henceforth Gauss) algorithm and decoding algorithms in coding theory~\cite{DBLP:conf/asiacrypt/MayMT11,DBLP:conf/eurocrypt/BeckerJMM12}.  
Using hybrid algorithms, Esser, K{\"{u}}bler, and May~\cite{DBLP:conf/crypto/EsserKM17} show that middle-size LPN instances are within the reach of the current computation power. For example, they show that LPN with dimension $n = 135$, noise rate $\tau = 0.25$ can be solved within 5.69 days using 64 CPU cores. The practical running time for solving a larger instance of $n = 150$,  $\tau = 0.25$ is recently reported as less than 5 hours by Wiggers and Samardjiska~\cite{wiggers2021practically} by
using 80 CPU cores and more carefully designed reduction chains. %

\paragraph{Machine learning and LPN.}

The most elementary setting of machine learning is the \emph{supervised learning} paradigm. In this setting, we are given a set of labeled data that consists of some input data (e.g. pictures) and corresponding labels (e.g. whether the picture contains animals, not necessarily correct). The input data is presumed to be drawn from a fixed distribution and the given dataset often does not contain most of the possible inputs. The goal is to find a function that predicts the labels from inputs. The predictions are required to be accurate on both seen and unseen inputs. The size of the dataset required to find a good enough function is called the \emph{sample complexity} and the function is often called \emph{model}. The function is always chosen from the \emph{function class}, a pre-determined set of functions. If the function class is simple, it would be easy to find the best function, but none of them would be sophisticated enough to perform well on complex tasks. In contrast, a complex function class holds greater expressive power but may be hard to optimize and has higher sample complexity. We often call the optimization procedure \emph{learning}. Neural networks provide us with huge and adjustable expressiveness as well as an efficient learning algorithm. The recent success of neural networks largely hinges on their expressive power, as well as recent advances in big data and computing resources to make them useful.

The LPN problem perfectly fits into the supervised learning paradigm. Specifically, let the queries be inputs and the parity with noise be labels. If we can find a model that simulates the LPN oracle without noise, we can use this model to sample some data and use gaussian elimination to recover the secret. Thus it is tempting to try using the power of neural networks to break LPN. 
However, perhaps surprisingly, there are very few instances where neural networks outperform the conventional algorithms in the cryptanalysis literature. In fact, most of the successful examples are in the attack of blockciphers, see e.g.~\cite{alani2012neuro,DBLP:conf/crypto/Gohr19,benamira2021deeper}.
The only documented attempt of using neural networks in solving problems related to LPN is actually aiming at solving LWE, conducted by Wenger et al.~\cite{DBLP:journals/iacr/WengerCCL22}. But their neural network algorithm is not yet competitive with the traditional algorithms. 
Besides, the limited theoretical understanding of neural networks we have is in stark contrast to its splendid empirical achievements. Hence this paper mainly focuses on empirically demonstrating neural networks' usefulness in breaking the LPN problem with the most rudimentary networks. We hope this paper can serve as a starting point for breaking LPN and other applications in cryptography with neural networks. We also hope this paper can motivate further theoretical works on relevant problems.

\subsection{Our contributions}

Our main contribution is designing families of neural networks that practically outperform classical algorithms in high-noise, low-dimension regimes. \footnote{We release our code in \url{https://github.com/WhenWen/Solving-LPN-using-Neural-Networks.git}.} 
We consider three settings. In the \emph{abundant sample setting}, the time complexity is considered the prime. In the \emph{restricted sample setting}, efforts are made to reduce the sample complexity required by neural networks. In the \emph{moderate sample setting}, we consider optimizing time complexity given sample complexity typically given by the reduction phase of the hybrid algorithm.

\begin{table}[h]
    \centering
    \begin{tabular}{|c|c|c|c|}
    \hline 
     Model $\model$ & Initialization& Loss & Optimizer\\
    \hline
         $\basemodel_{1000}$ (\Cref{def:basemodel})& Kaiming \cite{he2015delving}& Logistic & Adam
    \\
    \hline
    \end{tabular}
    \caption{Shared Features in All Three Settings}
    \label{tab:intro_shared_hyperparameter}
\end{table}

Our main experimental results suggest that two-layer neural networks
with Adam optimizer and logistic loss function work the fastest in all three settings. Some shared features of our algorithms are presented in~\Cref{tab:intro_shared_hyperparameter}. Our neural network architectures are quite different from the ones used by Wenger et al.~\cite{DBLP:journals/iacr/WengerCCL22} where they use the transformer model to attack LWE. Let us also remark that previous attempts of using neural networks for solving decoding problems use more complicated network structures with more than five layers~\cite{nachmani2018deep,bennatan2018deep}, in contrast to our design that only uses two layers. 

Although the power of neural networks is usually hard to explain, for some settings we are able to explain the rationale of the design of our machine learning models in theory, in terms of their representation capability, optimization power, and the generalization effect. We present some first-step analysis in~\Cref{sec:theory}.

Comparing with the previous experiments of Esser, K{\"{u}}bler, and May~\cite{DBLP:conf/crypto/EsserKM17}, for dimension $n=26$, noise rate $\tau = 0.498$, the ``Guess-then-Gaussian-elimination'' algorithm takes 3.12 days on 64 CPU cores, whereas our algorithm takes $66$ minutes on 8 GPUs. For large instances like $n=125$, $\tau = 0.25$, our algorithm can also be plugged into hybrid algorithms to reduce the running time from $4.22$ days\footnote{The experiment reported in \cite[Page~28]{DBLP:conf/crypto/EsserKM17} solved an LPN instance of $n=135$, $\tau = 0.25$ in $5.69$ days. They first spent $1.47$ days to enumerate 10 bits of secrets. We didn't repeat the enumeration step and directly start from $n=125, \tau = 0.25$.} to $3.5$ hours. To reach this performance, we first use BKW to reduce $1.2e10$ LPN samples of dimension $125$, noise rate $0.25$ to $1.1e8$ LPN samples of dimension $26$, noise rate $0.498$ in $40$ minutes with $128$ CPUs. Then we enumerate the last $6$ bits and apply our neural network algorithm (\Cref{alg:moderate}) with 8 GPUs on LPN with $n=20, \tau = 0.498$ to find out $26$ bits of the secret in $66$ minutes. The whole process (BKW+neural network) is then repeated to solve another $26$ bits in another $106$ minutes. The final reduced problem has dimension $73$ and noise rate $0.25$ and can be solved by the MMT algorithm in $1$ minute. In total, the running time is $3.55$ hours.

Let us remark that we haven't compared our neural network algorithm with the more sophisticated decoding algorithms such as Walsh-Hadamard or MMT. %
But our experiments have already shown that neural networks can achieve competitive performance for solving LPN in the high-noise, low-dimension regime compared to the Gauss decoding, when running on hardware with similar costs, and there is a huge potential for improvement for our neural network algorithm.
There is also clear room for improvement in our reduction algorithm. For example, the time complexity of the second round can be significantly reduced given that we already know $26$ bits of the secret. However, our algorithm mainly focuses on accelerating the decoding phase of the hybrid algorithm. In this sense, our contributions are orthogonal to~\cite{DBLP:conf/asiacrypt/BogosV16,wiggers2021practically}, where the improvement is made possible by constructing a better reduction chain.

\paragraph{More details in the three settings.}

As mentioned, we consider three settings, named \emph{abundant}, \emph{restricted}, and \emph{moderate} sample setting. Let us now explain those three settings.

\begin{setting}[Abundant Sample] \label{setting:abundant}
In the ``Abundant Sample'' setting, we assume an unlimited amount of fresh LPN samples are given, and we look for the algorithm that solves LPN as fast as possible. This setting has a clear definition and will serve as the starting point for our algorithms.
\end{setting}

Under~\Cref{setting:abundant}, we first present the naive algorithm that directly considers the LPN problem as a supervised learning algorithm. This algorithm ignores the special structures of the LPN problem. However, our experiments in~\Cref{sec:abundant} show that given enough samples, neural networks can learn the LPN encoding perfectly. One example run on an LPN problem with dimension $20$ and noise rate $0.498$ is shown in~\Cref{fig:intro}. We also compare the time complexity of our algorithm with the Gauss algorithm in \Cref{tab:perf_intro}, showing the supremacy of our algorithm in the heavy noise regime. 

\begin{figure}
    \centering
    \includegraphics[scale = 0.4]
    {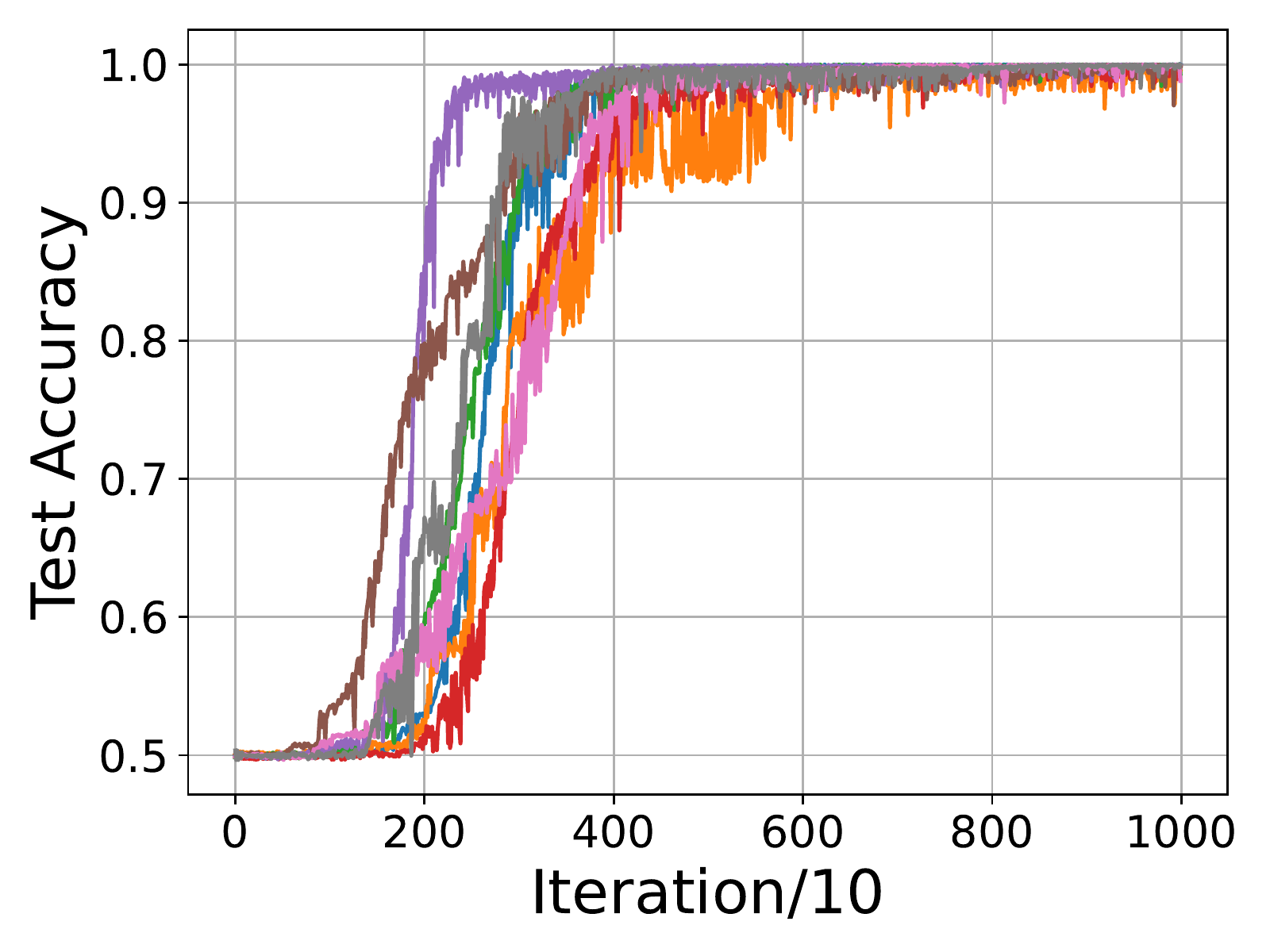}
    \caption{\textbf{Experiments on 
    $\text{LPN}_{20,\infty,0.498}$}. The horizontal axis represents the training iteration. One unit on this axis represents 10 iterations. The vertical axis represents the accuracy of the model on the clean test set without noise. There are 8 curves in this graph, each corresponding to a random initialization.}
    \label{fig:intro}
\end{figure}

\begin{table}[t]
\centering
    \begin{subtable}[t]{0.45\textwidth}
    \centering
    \begin{tabular}{cccccc}
        \toprule
        \diagbox{$n$}{$\tau$} & 0.4 & 0.45 & 0.49 & 0.495 & 0.498 \\
        \midrule
        20 & 39   & 70      & 197     & 323 & 730\\
        30 & 139  & 374     & 1576 &  & \\
        \bottomrule
    \end{tabular}
    \caption{\textbf{Neural Network}}
    \label{tab:intro_abundant}
    \end{subtable}
    \begin{subtable}[t]{0.45\textwidth}
    \centering
    \begin{tabular}{cccccc}
        \toprule
        \diagbox{$n$}{$\tau$} & 0.4 & 0.45 & 0.49 & 0.495 & 0.498\\
        \midrule
        20 & 0.40 & 5.76 & 22.0 & 312 & 6407\\
        30 & 26.4 & 682 &  & \\
        \bottomrule
    \end{tabular}
    \caption{\textbf{Gaussian Elimination}}
    \label{tab:intro_gauss}
    \end{subtable}
    \caption{\textbf{Time Complexity w.r.t Dimension and Noise Rate}. Each entry represents the running time (in seconds). The experiments for neural networks are performed on a single GPU. The experiments for Gaussian elimination are performed on a single 64 cores processor. For a neural network, the criterion for solving the LPN is that the accuracy of the network reaches $80\%$ on clean data. For Gaussian Elimination, the criterion for success is to get at least 7 correct secrets out of 10 attempts. The running time here is averaged over all runs that recover the correct secret in the time limit (3 hours). For the empty cell, no runs are successful within 3 hours.   }
    \label{tab:perf_intro}
\end{table}

\begin{setting}[Restricted Sample] \label{setting:restricted}
In the ``Restricted Sample'' setting, sample complexity is considered the prime. For the same LPN task, an algorithm that solves the task with fewer LPN oracle queries is more desirable. 
\end{setting}

In~\Cref{setting:restricted}, as the sample complexity is bounded, we would apply the \emph{search-decision} reduction first to simplify our goal. However, even distinguishing LPN data from random data is hard in this case due to the phenomenon of~\emph{overfitting}. This means when training on a small amount of data, the dataset might render the neural network to memorize all the data, resulting in poor performance on unseen inputs. We show that the commonly used regularization method named L2 regularization, or weight decay, is significantly helpful under this setting in~\Cref{sec:restricted}.
Empirical evaluations show that our algorithms achieve comparable sample complexity with state-of-the-art algorithms, the results are shown in~\Cref{tab:restricted_dimension_error}.
Together with the positive results in~\Cref{setting:abundant}, we show that neural-network based algorithms have the potential to improve the breaking algorithm of cryptography primitives.

\begin{setting}[Moderate Sample] \label{setting:moderate}
In the ``Moderate Sample'' setting, we constrain the sample complexity and seek the smallest running time. This setting is typically used as part of hybrid algorithms, where our algorithm is used to solve LPN instances reduced from BKW or other algorithms.  
\end{setting}

To further validate our points, we consider the more refined~\cref{setting:moderate}, where we consider minimizing time complexity given the sample complexity. We show with experiments that with the number of samples provided by reduction algorithms like BKW, neural networks can already learn a model with moderate accuracy, even with noise as high as $0.498$. This resolves the impracticability of the algorithm we design for~\Cref{setting:abundant}, which requires too many samples. With a combination with the traditional algorithm Gauss, we can leverage neural networks to solve low dimension, high noise LPN instances faster than previously reported~\cite{DBLP:conf/crypto/EsserKM17}.

Concluding the three settings, we show that neural networks have huge potential in solving LPN in a practical sense, for metrics spanning from pure time complexity to pure sample complexity. We also show that we can already include neural networks as a building block of the \emph{reduction-decoding} scheme to accelerate the breaking of large instances of LPN problem, which to our knowledge is the first time for neural-network based algorithm to achieve comparable, or even better, performance for LPN problem.

\paragraph{Future directions.}
At the end of the introduction we would like to mention a few interesting open problems:
\begin{enumerate}
    \item The neural network structure used by our solver is quite simple -- we only use two-layer, fully-connected networks. Are there any neural networks with more dedicated structures that help to solve LPN? 
    \item Is it possible to use our technique to solve the LWE problem? Compared to LPN, LWE uses a large modulus and uses $\ell_2$ norm to measure the length of the noise. Are those differences crucial for the competitiveness of neural networks?
    \item In addition to the previous works that use neural networks in designing decoding algorithms in coding theory~\cite{nachmani2018deep,bennatan2018deep}, our work shows the neural network has a plausible advantage in decoding binary random linear codes in the high noise regimes. 
    Can we develop practically fast neural network decoding algorithms for other codes?
\end{enumerate}

\paragraph{Organization.}
The paper is organized as follows. In~\Cref{sec:prelim} we fix some notations throughout the paper. This section also contains a brief tutorial on machine learning and neural networks. In~\Cref{sec:method} we explain our techniques and algorithms used for breaking LPN. In~\Cref{sec:experiment} we conduct experiments to demonstrate the usefulness of techniques used and compare the performance of our algorithms to SOTA prior algorithms. We also provide a guideline for tuning hyperparameters for our algorithm. In~\Cref{sec:theory} we provide theories that explain the rationales of our network architectures.

\section{Preliminary}\label{sec:prelim}

\paragraph{Notations and terminology.} 
In cryptography, the security parameter is a variable that is used to parameterize the computational complexity of the cryptographic algorithm or protocol, and the adversary's probability of breaking security. 

Let $\R, \Z, \N$ be the set of real numbers, integers, and positive integers. 
For $q\in\N_{\geq 2}$, denote $\Z/q\Z$ by $\Z_q$. 
For $n\in\N$, $[n] := \set{1, ..., n}$. A vector in $\R^n$ (represented in column form by default) is written as a bold lower-case letter, e.g. $\ary{v}$. For a vector $\ary{v}$, the $i^{th}$ component of $\ary{v}$ will be denoted by $v_i$. %
A matrix is written as a bold capital letter, e.g. $\mat{A}$. The $i^{th}$ column vector of $\mat{A}$ is denoted $\ary{a}_i$. 
The length of a vector is the $\ell_p$-norm $\|\ary{v}\|_p := (\sum v_i^p)^{1/p}$, or the infinity norm given by its largest entry $\|\ary v\|_{\infty} := \max_i\{|v_i|\}$. 
The length of a matrix is the norm of its longest column: $\|\mat{A}\|_p := \max_i \|\ary{a}_i\|_p$. 
By default, we use $\ell_2$-norm unless explicitly mentioned. 
For a binary vector $\ary{v}$, let $\HW(\ary{v})$ denote the Hamming weight of $\ary{v}$.
Let $B^n_p$ denote the open unit ball in $\R^n$ in the $\ell_p$ norm.
We will write $x\ e\ y$ as short hands for $x \times 10^y$.

When a variable $v$ is drawn uniformly random from the set $S$ we denote it as $v\la U(S)$. 
When a function $f$ is applied on a set $S$, it means $f(S) := \sum_{x\in S}f(x)$. %

\subsection{Learning Parity with Noise}\label{sec:prelim_coding}

The learning parity with noise problem (LPN) is defined as follows

\begin{definition}[LPN~\cite{DBLP:conf/crypto/BlumFKL93,DBLP:journals/jacm/BlumKW03}] \label{def:lpn}
Let $n \in\N$ be the dimension, $m\in\N$ be the number of samples, $\tau\in(0, 1/2)$ be the error rate. 
Let $\eta_\tau$ be the error distribution that output 1 with probability $\tau$, 0 with probability $1-\tau$. 
A set of $m$ LPN samples is obtained from sampling $\ary{s}\la U(\Z_2^n)$, $\mat{A}\la U(\Z_2^{n\times m})$, $\ary{e}\la\eta_\tau^m$, and outputting $(\mat{A}, \ary{y}^t := \ary{s}^t\mat{A}+\ary{e}^t \mod 2)$.

We say that an algorithm solves $\LPN_{n, m, \tau}$ if it outputs $\ary{s}$ given $\mat{A}$ and $\ary y$ with non-negligible probability. 
\end{definition}

An algorithm solves the decisional version of LPN if it distinguishes the LPN sample $\LPN_{n, m, \tau}$ from random samples over $\Z_2^{n\times m}\times \Z_2^m$ with probability greater than $1/2 + 1/poly(n)$.
The decisional LPN problem is as hard as the search version of LPN~\cite{DBLP:conf/crypto/BlumFKL93}.

The LPN problem reduces to a variant of LPN where the secret is sampled from the error distribution~\cite{DBLP:conf/crypto/ApplebaumCPS09}. The reduction is simple and important for our application so we sketch the theorem statement and the proof here. 

\begin{lemma}
\label{lem:s_sparse}
If $\LPN_{n, m, \tau}$ is hard, then so is the following variant of LPN: we sample each coordinate of the secret $\ary{s}\in\Z_q^n$ from the same distribution as the error distribution, i.e., $\eta_\tau$, and then output $m-n$ LPN samples. 
\end{lemma}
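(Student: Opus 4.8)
The plan is to give a direct, distribution-preserving reduction that maps an instance of standard $\LPN_{n,m,\tau}$ to an instance of the claimed variant, so that any algorithm solving the variant immediately yields one solving standard LPN. The key structural observation is that a block of $n$ standard samples whose coefficient matrix is invertible can be used as a linear change of variables that swaps the roles of the secret and part of the noise. Concretely, I would partition the given matrix as $\mat{A} = [\mat{A}_0 \mid \mat{A}_1]$ with $\mat{A}_0 \in \Z_2^{n\times n}$ and $\mat{A}_1 \in \Z_2^{n\times(m-n)}$, and split $\ary{y}^t = (\ary{y}_0^t, \ary{y}_1^t)$ and $\ary{e}^t = (\ary{e}_0^t, \ary{e}_1^t)$ accordingly. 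Conditioned on $\mat{A}_0$ being invertible, the relation $\ary{y}_0^t = \ary{s}^t\mat{A}_0 + \ary{e}_0^t$ lets me write $\ary{s}^t = (\ary{y}_0^t + \ary{e}_0^t)\mat{A}_0^{-1}$ over $\Z_2$.

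Next, for each remaining column $\ary{a}_i$ of $\mat{A}_1$ with label $y_i = \ipd{\ary{s}}{\ary{a}_i} + e_i$, I would define the transformed sample $\ary{a}_i' := \mat{A}_0^{-1}\ary{a}_i$ and $y_i' := y_i + \ary{y}_0^t\ary{a}_i' \pmod 2$. Substituting the expression for $\ary{s}^t$ above collapses the $\ary{y}_0$ terms and gives $y_i' = \ipd{\ary{e}_0}{\ary{a}_i'} + e_i$. This is exactly a fresh LPN sample whose secret is $\ary{e}_0 \la \eta_\tau^n$ and whose noise is $e_i \la \eta_\tau$, and the construction produces precisely $m-n$ such samples, matching the statement. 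Finally, once a variant-solver returns the secret $\ary{e}_0$, I recover the original secret by computing $\ary{s}^t = (\ary{y}_0^t + \ary{e}_0^t)\mat{A}_0^{-1}$.

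The remaining work is to verify that the transformed distribution is exactly that of the variant oracle. I would check three points: (i) each $\ary{a}_i'$ is uniform over $\Z_2^n$ and the $\ary{a}_i'$ are mutually independent, since $\mat{A}_0^{-1}$ is a fixed bijection applied to the independent uniform columns $\ary{a}_i$; (ii) the new secret $\ary{e}_0$ is distributed as $\eta_\tau^n$ and is independent of the $\ary{a}_i'$, because $\ary{e}_0$ and the columns of $\mat{A}$ are independent and because $\ary{a}_i'$ depends only on $\mat{A}_0$ and $\ary{a}_i$; and (iii) the $e_i$ are fresh, independent $\eta_\tau$ noise inherited directly from the original samples. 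Together these show the output is a genuine instance of the variant with the correct secret distribution.

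The main obstacle is the conditioning on $\mat{A}_0$ being invertible. Over $\Z_2$ a uniform $n\times n$ matrix is invertible only with constant probability (roughly $0.288$), so I would argue that this constant-factor loss still preserves non-negligible success probability, which suffices for the hardness statement; alternatively, with a slightly larger $m$ one can re-randomize by drawing successive blocks of $n$ samples until an invertible $\mat{A}_0$ is found, driving the failure probability down geometrically. I should also confirm the subtle point that conditioning on invertibility does not distort the law of $\ary{e}_0$ — this holds precisely because $\mat{A}_0$ and $\ary{e}_0$ are independent, so the event $\{\mat{A}_0 \text{ invertible}\}$ carries no information about $\ary{e}_0$.
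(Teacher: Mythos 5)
Your proposal is correct and is essentially the paper's own proof: you partition $\mat{A} = [\mat{A}_0 \mid \mat{A}_1]$ with $\mat{A}_0$ invertible and transform each remaining sample to $\left(\mat{A}_0^{-1}\ary{a}_i,\; y_i + \ary{y}_0^t\mat{A}_0^{-1}\ary{a}_i\right)$, which is exactly the paper's $\bar{\mat{A}} := -\mat{A}_1^{-1}\mat{A}_2$ and $\bar{\ary{y}}^t := \ary{y}_1^t\bar{\mat{A}} + \ary{y}_2^t$ written column by column, turning the noise block $\ary{e}_0 \la \eta_\tau^n$ into the new secret. Your extra steps --- verifying uniformity and independence of the transformed columns, handling the invertibility conditioning explicitly, and recovering $\ary{s}^t = (\ary{y}_0^t + \ary{e}_0^t)\mat{A}_0^{-1}$ from the variant-solver's output --- simply spell out details the paper's terser argument leaves implicit.
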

\begin{proof}
Given $m$ standard LPN samples, denoted as $(\mat{A}, \ary{y}^t := \ary{s}^t\mat{A}+\ary{e}^t \mod 2)$. 
Write $\mat{A} = [\mat{A}_1 \mid \mat{A}_2]$ where $\mat{A}_1\in\Z_2^{n\times n}$. Without a loss of generality, assume $\mat{A}_1$ is invertible (if not, pick another block of $n$ full-rank columns from $\mat{A}$ as $\mat{A}_1$). 
Write $\ary{y}^t = [\ary{y}^t_1 \mid \ary{y}^t_2]$ where $\ary{y}_1\in\Z_2^n$.  
Let $\bar{\mat{A}}:= -\mat{A}_1^{-1}\cdot \mat{A}_2$. 
Let $\bar{\ary{y}}^t := \ary{y}^t_1\cdot \bar{\mat{A}} + \ary{y}^t_2$. Then 
$\bar{\ary{y}}^t = (\ary{s}^t\mat{A}_1+\ary{e}^t_1)\cdot (-\mat{A}_1^{-1}\cdot \mat{A}_2) + (\ary{s}^t\mat{A}_2+\ary{e}^t_2) = \ary{e}^t_1\cdot \bar{\mat{A}} + \ary{e}^t_2$, meaning that
$\bar{\mat{A}}, \bar{\ary{y}}^t$ is composed of $m-n$ LPN samples where the secret is sampled from the error distribution.
\end{proof}

\subsection{Machine Learning}

\paragraph{Supervised Learning}The goal of supervised learning is to learn a function that maps inputs to labels. The input $x \in \mathcal{X}$ and the label $y  \in \mathcal{Y}$ are usually assumed to obey a fixed distribution $\distribution$ over $\mathcal{X} \times \mathcal{Y}$. Usually, $\distribution$ is not directly accessible to the learner, instead, another distribution $\datadistribution$, known as empirical distribution, is provided to the learner. This distribution is usually a uniform distribution over a finite set of inputs and labels $\trainingset \triangleq \{ (x_i, y_i)\}_{i \in [1:N]}$. This set $\trainingset$ is usually named \textit{training set} and $(x_i,y_i)$ is assumed to obey $\distribution$ independently.

The goal of \textit{learning} is to choose from a function class $\functionclass$ a function $f: X \to Y$  given $\datadistribution$. To measure the quality of $f$, \textit{loss function} $\ell: Y \times Y \to \nonneg$ is often considered. We now provide some examples of loss functions that will be used in our paper.

\begin{definition}[Zero-one Loss]
    $\zerooneloss(y_1, y_2) = \one{y_1 \neq y_2}$.
\end{definition}

\begin{definition}[Logistic Loss]
\label{def:logistic}
    $\logisticloss(y_1, y_2) = -y_2 \log( 1 - y_1) - (1 - y_2) \log y_1, y_2 \in \{0,1\},y_1 \in [0,1]$.
\end{definition}

\begin{definition}[Mean Absolute Error Loss]
    $\maeloss (y_1, y_2) = |y_1 - y_2|$.
\end{definition}

\begin{definition}[Mean Square Error Loss]
    $\mseloss (y_1, y_2) = |y_1 - y_2|^2$.
\end{definition}

\begin{definition}
    Given a loss function $\ell: Y \times Y \to \nonneg$, the \emph{population loss} $\populoss: \functionclass \to \nonneg$  is defined as
    \begin{align*}
        \populoss(f) = \E_{(x,y) \sim \distribution}[\ell(f(x), y)].
    \end{align*}
\end{definition}

For zero-one loss, $1$ minus the expected loss is also called \textit{accuracy}. We usually abuse this notation when $f$'s co-domain is $[0,1]$ by calling the accuracy of the rounding of $f$ as the accuracy of $f$. The \emph{training accuracy} is defined as accuracy with the underlying population as the uniform distribution over the training set.
The goal of learning can then be rephrased to find $f \in \functionclass$ with low population loss. Two questions then naturally arise, (1) How to evaluate population loss? and (2) How to effectively minimize population loss?

To evaluate the loss, the \textit{test set} $\testset \triangleq \{(x_i, y_i) \}_{i \in [N+1, N+M]}$ is usually considered. The element in $\testset$ is also assumed to obey $\distribution$ independently and is also independent of the elements in $\trainingset$. 
\begin{definition}
    Given a loss function $\ell: Y \times Y \to \nonneg$, the \emph{test loss} $\testloss: \functionclass \to \nonneg$  is defined as
    \begin{align*}
        \testloss(f) = \frac{1}{M}\sum_{i = N+1}^{N+M} \ell(f(x_i), y_i).
    \end{align*}
    The \emph{test accuracy} is defined as accuracy with the underlying population as the uniform distribution over the test set.
\end{definition}
When $f$ is chosen by the learning algorithm given the training data, $\testloss(f)$ can then serve as an unbiased estimator of $\populoss(f)$. In the traditional machine learning community, $\testloss$ is usually only measured once after training, and another set called \textit{validation set} is used to track the performance of the algorithm through the course of the training. However, this boundary is blurred in modern literature and we ignore this subtlety here because our final objective is to utilize machine learning to solve LPN secrets instead of fitting the data. 

To effectively minimize the loss, the learner would use a learning algorithm $\algo$ that maps training distribution to a function $f \in H$ (usually with randomness). As the learner only has access to the data distribution, $\algo$ is usually designed to minimize \emph{training loss}.
\begin{definition}
    Given a loss function $\ell: Y \times Y \to \nonneg$, the \emph{training loss} $\trainloss: \functionclass \to \nonneg$  is defined as
    \begin{align*}
        \trainloss(f) = \frac{1}{M}\sum_{i = 1}^{M} \ell(f(x_i), y_i).
    \end{align*}
\end{definition}

When trying to characterize the gap between the learned function and the best available function in the function class, the following decomposition is common in machine learning literature.
\begin{align}
&\populoss(\algo(\datadistribution)) - \underbrace{\min_{f \in \functionclass} \populoss(f)}_{\mathrm{Representation\ Gap}} \notag\\
=&\underbrace{\populoss(\algo(\datadistribution)) - \trainloss(\algo(\datadistribution))}_{\mathrm{Generalization\ Gap}} + \underbrace{\trainloss(\algo(\datadistribution)) - \trainloss(f^*) }_{\mathrm{Optimization\ Gap}} \notag \\&+ \underbrace{ \trainloss(f^*)  - \populoss(f^*)}_{\mathrm{Stochastic\ Error}}, \quad f^* = \argmin_{f \in \functionclass} \populoss(f). \label{eq:decompose}
\end{align}

The three gaps in the above equation characterize different aspects of machine learning. 
\begin{enumerate}
    \item The function class needs to be chosen to be general enough to minimize the representation gap.
    \item The learning algorithm needs to be chosen to find the best trade-off between the generalization gap and the optimization gap given the function class.
\end{enumerate}
In the recent revolution brought by neural networks, it is shown that choosing the function class as $\{ f \mid f \text{ can be represented by a fixed neural architecture}\}$ and learning algorithm as the gradient-based optimization method can have surprising effects over various domains. We will now briefly introduce neural networks and gradient-based optimization algorithms.

\paragraph{Neural Networks}

Neural networks are defined by \emph{architecture}, which maps differentiable weights to a function from $\mathcal{X}$ to $\mathcal{Y}$. This function is called the \emph{neural network} and the weights are called the \emph{parameterization} of the network. The most simple architecture is \emph{Multi-Layer Perceptron (MLP)}.
\begin{definition}[MLP]
\label{def:mlp}
    Multi-Layer Perceptron is defined as a mapping $\model$ from $\R^{d_1}$ to $\R^{d_{L+1}}$, with
    \begin{align*}
        \model[\theta_1, ..., \theta_L](x) = (\sigma_L \circ T[\theta_L] \circ  ... \circ \sigma_1 \circ T[\theta_1])(x),
    \end{align*}
    where $\theta_i = (W_i, b_i), W_i \in \R^{d_{i+1} \times d_{i}}, b_i \in \R^{d_{i+1}}$ and $T[\theta_i]$ is an affine function with $T[\theta_i](x) = W_ix + b_i$. $\sigma_i: \R^d \to \R^d $ is a function that is applied \emph{coordinate-wise} and is called \emph{activation function}. $L$ and $L - 1$ are called the number of \emph{layers} and \emph{depth} of the MLP, and $\{ d_i \}_{i = 2,..,L}$ is  called the \emph{widths} of the MLP.
\end{definition}

We now provide some examples of activation functions that will be used in our work.

\begin{definition}[ReLU]
$\relu: \R^d \to \R^d$ is defined as $(\relu(x))_i = x_i \one{x_i \ge 0} $.
\end{definition}

\begin{definition}[Sigmoid]
$\sigmoid: \R^d \to \R^d$ is defined as $(\sigmoid(x))_i = \frac{1}{1 + e^{-x_i}} $.  
\end{definition}

\begin{definition}[Cosine]
$\cosine: \R^d \to \R^d$ is defined as $(\cosine(x))_i = \cos(x_i) $.  
\end{definition}

The base model we used in this work is defined as followed.
\begin{definition}[Base Model]
\label{def:basemodel}
    Our base model is defined as MLP with depth $1$ with activation $\sigma_1 = \relu$ and $\sigma_2 = \sigmoid$. We will denote this model as $\basemodel_{d}$ with $d$ specifying $d_2$. We will write $\basemodel$ with activation $\sigma$ to indicate replacing $\sigma_1$ by $\sigma$. 
\end{definition}

\paragraph{Gradient-based Optimization}

It is common to use gradient-based optimization methods to optimize the neural network. A standard template is shown in~\Cref{alg:optim_example}. The unspecified parameters such as $\model, \weight$, and $\ell$ in the algorithm are often called \emph{hyperparameters}. 

\begin{algorithm}
\caption{Gradient-based Optimization}\label{alg:optim_example}
\begin{algorithmic}
\Require $\text{A neural network architecture }\model$
\Require $\text{An initialization parameter for the model }\weight_0$
\Require $\text{A differentiable loss function } \ell$
\Require $\text{A Stop Criterion } \stopcriterion$ 
\Require $\text{A Data Sampler } \sampler$ \Comment{See~\Cref{def:sampler}}
\Require $\text{An Optimizer } \optimizer$ \Comment{See~\Cref{def:optimizer}}
\Require $\text{A Regularization Function } \regularization$  \Comment{See~\Cref{def:regularization}}
\State $step \gets 0$
\While{$\stopcriterion$ \text{ is not reached}} \Comment{See~\Cref{def:stop_criterion}}
\State $f \gets \model[\weight_{step}]$
\State $\batch = \{(x_i, y_i)\}_{i = 1,...,B} \gets \sampler.GetData()$.
\State $\batchloss \gets \frac{1}{B} \sum_{i = 1}^B l(f(x_i), y_i) + \regularization(\weight_{step})$.  \Comment{Calculate regularized loss}
\State  $g_W \gets -\frac{\partial \batchloss}{\partial W} \mid_{W = W_{step}}$. \Comment{Calculate gradient w.r.t the model parameter}
\State $W_{step + 1} \gets \optimizer.GetUpdate(W_{step}, g_W)$.
\State $step \gets step + 1$.
\EndWhile
\State \Return $\model[\weight_{step}]$
\end{algorithmic}
\end{algorithm}

\begin{definition}[Sampler]
\label{def:sampler} 
A \emph{sampler} is a finite-state machine, on each call of method $GetData$, it will return a set of $B$ samples $\{(x_i, y_i)\}$ satisfying $x_i,y_i \sim \datadistribution$. The number $B$ is called \emph{batch size} and the $B$ samples are called a \emph{batch}.
\end{definition}

We hereby provide two examples of samplers that will be used in our papers. \Cref{alg:fix_sample} is the sampler used in Settings~\ref{setting:restricted} and~\ref{setting:moderate} and \Cref{alg:inf_sample} is the sampler used in \Cref{setting:abundant}.

\begin{algorithm}
\caption{Batch Sampler}\label{alg:fix_sample}
\begin{algorithmic}
\Params
 \State $\trainset = \{(x_i, y_i)\}_{i = 1,..,N}$, training set
\State $B \le N$, batch size
\EndParams
\Procedure{GetData}{$\ $}
\State Sample i.i.d from $\{1,2,..,N\}$ $B$ index to form index set $I_{\text{Batch}}$
\State $\batch = \{ (x_i, y_i) \mid i \in  I_{\text{Batch}}\}$.
\State \Return $\batch$.
 \EndProcedure
\end{algorithmic}
\end{algorithm}

\begin{algorithm}
\caption{Oracle Sampler}\label{alg:inf_sample}
\begin{algorithmic}
\Params
 \State $\distribution$, the underlying distribution
\State $B \le N$, batch size
\EndParams
\Procedure{GetData}{$\ $}
\State $\batch = \{ (x_{k}, y_{k}) \}_{k = 1...B}$ with $(x_k, y_k)$ i.i.d sampled from $\distribution$.
\State \Return $\batch$.
 \EndProcedure
\end{algorithmic}
\end{algorithm}

\begin{definition}[Optimizer]
\label{def:optimizer}
An \emph{optimizer} is an automaton, on each call of method $GetUpdate$, it will update states given the current parameter and gradient, and return an updated parameter.
\end{definition}

We now provide some examples of optimizers. The \emph{stochastic gradient descent (SGD) Optimizer} is shown in \Cref{alg:sgd}. When the batch size equals the size of the training set, this algorithm is often called \emph{gradient descent} directly. In our paper, we use a more complicated optimizer named \emph{Adam}, as shown in \Cref{alg:adam}. This optimizer, although poorly understood theoretically, has been widely applied across domains by the current machine-learning community.

\begin{algorithm}[h]
\caption{SGD Optimizer}\label{alg:sgd}
\begin{algorithmic}
\Params
 \State $\eta$, learning rate
 \State $\lambda$, weight decay
\EndParams
\Procedure{GetUpdate}{$W, g_W$}
\State \Return $W - \eta(\lambda W +  g_W)$.
 \EndProcedure
\end{algorithmic}
\end{algorithm}

\begin{algorithm}[t]
\caption{Adam Optimizer}\label{alg:adam}
\begin{algorithmic}
\Variables
\State $m$, first moment, initialized to be $0$.
\State $v$, second moment, initialized to be $0$.
\State $\eps$, a small positive constant, by default $1e-8$
\EndVariables
\Params
 \State $\eta$, learning rate
 \State $\lambda$, weight decay
 \State $\beta_1$, $\beta_2$, moving average factor for moments, by default $0.9, 0.999$.
\EndParams
\Procedure{GetUpdate}{$W, g_W$}
\State  $dW \gets \lambda W + g_W$.
\State  $m \gets \beta_1 m + (1 - \beta_1)dW$.
\State  $v \gets \beta_2 m + (1 - \beta_2)dW^2$.
\State  $\hat m \gets m / (1 - \beta_1)$.
\State  $\hat v \gets v / (1 - \beta_2)$.
\State  \Return $W - \eta \hat m / (\sqrt{\hat v} + \eps)$
 \EndProcedure
\end{algorithmic}
\end{algorithm}

\begin{definition}[Regularization]
\label{def:regularization}
A \emph{regularization function} is defined as a mapping from the parameter space to $\R$. 
\end{definition}

Theoretically and empirically, a proper choice of a regularization function can improve the generalization of the learned model in previous literature.
We hereby provide two examples of regularization functions that will be used in our paper. 
One can easily notice that L2 regularization (\Cref{def:l2}) applied in the gradient-based optimization method is simply another form of weight decay.
L1 regularization (\Cref{def:l1}) applied with the linear model is known as \emph{LASSO} and can induce sparsity in the model parameters (meaning the model parameters contain more zeroes).

\begin{definition}[L2 Regularization]
\label{def:l2}
    L2 Regularization $R_2: (\R^{d_1},... ,\R^{d_k}) \to \R$ with penalty factor $\lambda$ is defined as $R_2(w_1, ..., w_d) = \frac{\lambda}{2} \sum_i \|w_i\|_2^2$.
\end{definition}

\begin{definition}[L1 Regularization]
\label{def:l1}
    L1 Regularization $R_1: (\R^{d_1},... ,\R^{d_k}) \to \R$ with penalty factor $\lambda$ is defined as $R_1(w_1, ..., w_d) = \lambda \sum_i \|w_i\|_1$.
\end{definition}

\begin{definition}[Stop Criterion]
\label{def:stop_criterion}
A \emph{stop criterion} is defined as a function that returns True or False determining whether the procedure should terminate.
\end{definition}
There are typically three kinds of stop criteria, which we list as below.

\begin{definition}[Stop-by-time]
\label{def:stop_time}
A stop-by-time criterion $\timestopcriterion(\rtime)$ returns true if physical running time exceeds threshold $t$.
\end{definition}

\begin{definition}[Stop-by-step]
\label{def:stop_step}
A stop-by-step criterion $\stepstopcriterion(\step)$ returns true if the weight update step exceeds threshold $T$.
\end{definition}

\begin{definition}[Stop-by-accuracy]
\label{def:stop_acc}
A stop-by-accuracy criterion $\accstopcriterion(\dataset, \gamma)$ returns true if the accuracy of the learned function on $\dataset$ exceeds threshold $\gamma$.
\end{definition}

\section{Methods}\label{sec:method}

In this section, we will introduce our methods under the three different settings introduced in~\Cref{sec:intro}.
This section is organized as follows.~\Cref{sec:method_unlimited} introduces our methods when the data distribution coincides with the population distribution and the time complexity is a major concern. In other words, we assume an unrestricted number of fresh LPN samples are given and our goal is to learn the secret as fast as possible.
We will show that under this setting, direct application of the gradient-based optimization (cf.~\Cref{alg:optim_example}) works well both empirically and theoretically.
~\Cref{sec:method_restricted} introduces our methods when the sample complexity is greatly limited. Under this setting, reduction to decision version LPN and proper regularization is essential.
~\Cref{sec:method_abundant} introduces our methods when the sample complexity and the time complexity need a subtle balance such that our method can be utilized as 
a building block in the classical \emph{reduction-decoding} scheme of the LPN problem. Under this setting, we will use the samples to approximate the population distribution with a bootstrapping technique.

Due to~\Cref{lem:s_sparse}, we will always assume secret $\ary{s}$ has Hamming weight $\lfloor n\tau \rfloor$. 
\footnote{This is the expected hamming weight of a secret. We fix the sparsity here mainly for the convenience of comparing and adjusting other parameters.} We will first highlight the common hyperparameters for all three settings in~\Cref{tab:shared_hyperparameter}. The first row is applied in all our implemented algorithms and the second row is used in our theoretical analysis in~\Cref{sec:theory}.

We further outline other hyperparameters specified for each algorithm in~\Cref{tab:hyperparameter}. Readers should note that the hyperparameters are tuned specifically towards the typical problem specification we list out in~\Cref{tab:problem} on which our hyperparameters selection is conducted on. The power of our method is not limited to the typical specification listed and we provide generic meta~\cref{alg:abundant_meta,alg:restricted_meta} for hyperparameter selection in other cases.

\begin{table}[h]
    \centering
    \begin{tabular}{|c|c|c|c|c|c|}
    \hline 
    & Model $\model$ & Initialization & Regularization & Loss & Optimizer\\
    \hline
        Practical  &  $\basemodel_{1000}$ (\Cref{def:basemodel})& Kaiming & None & Logistic & Adam\\
    \hline
       Theory  &  $\basemodel_{d}$ with smooth activation $\sigma$ & Any & None & MAE & SGD
    \\
    \hline
    \end{tabular}
    \caption{Shared Hyperparameters}
    \label{tab:shared_hyperparameter}
\end{table}

\begin{table}[]
    \centering
    \begin{tabular}{|c|c|c|c|c|c|} 
    \hline & Sampler & Learning Rate & Weight Decay &  Batch Size & Stop Criterion \\
    \hline
      \Cref{alg:abundant}   & Oracle & $2e-4$ to $6e-3$ & 0 & 131072 to 1048576& $\timestopcriterion(\rtime)$ \\
    \hline
      \Cref{alg:abundant_theory} & Oracle & Any
    & Any & Any &$\stepstopcriterion(\step)$\\
    \hline
    \Cref{alg:restricted} & Fix Batch & $2e-5$ to $1e-4$ & $2e-3$  & Training Set Size&$\stepstopcriterion(\step)$ \\
    \hline
    \Cref{alg:moderate} & Fix Batch & $2e-3$ & 0 & 1048576 & $\timestopcriterion(\rtime)$ \\
    \hline
    \end{tabular}
    \caption{Hyperparameters for Different Algorithms}
    \label{tab:hyperparameter}
\end{table}
\

\begin{table}[]
    \centering
    \begin{tabular}{|c|c|c|c|} 
    \hline & Dimension $n$ & Noise Rate $\tau$ & Sample Size\\
    \hline
      \Cref{alg:abundant}   & 20 to 40 & 0.4 to 0.498 & Abundant \\
    \hline
    \Cref{alg:restricted} & 20 to 50 & 0.2 to 0.3  & Less than $1e6$ \\
    \hline
    \Cref{alg:moderate} & 20  & 0.498 & 8e7\\
    \hline
    \end{tabular}
    \caption{Typical Problem Specification for Different Algorithms }
    \label{tab:problem}
\end{table}

\subsection{Abundant Sample}\label{sec:method_unlimited}

In~\Cref{setting:abundant}, we assume the learner has access to the Oracle Sampler (\Cref{alg:inf_sample}) with underlying distribution following LPN instances with a fixed secret, a fixed noise rate $\tau$, and any batch size $B$. Under this setting, the memory of the devices is the key constraint and the goal of this algorithm is to 
reduce the time complexity. We propose to use a direct variant of the gradient-based optimization algorithm~\Cref{alg:optim_example}, which is shown in~\Cref{alg:abundant}.

\begin{algorithm}
    \caption{Abundant Sample Algorithm: Practical Version}\label{alg:abundant}
    \begin{algorithmic}
    \Require $n$, the dimension, $\tau$ the error rate.
    \State Run~\Cref{alg:optim_example} with hyperparameters specified in~\Cref{tab:hyperparameter,tab:shared_hyperparameter} to get a learned model $\model[\weight_T]$.
    \State Set $\hat s$ as all zero vector in $\R^n$.
    \For{$i \in \{1,2,...n\}$}
    \State Set $\hat s[i] = \one{\model[\weight_T][e_i] > 0.5}$ with $e_i \in \R^n$ as the unit vector with the $i-$th coordinate being $1$.
    \EndFor
    \State \Return $\hat s$.
    \end{algorithmic}
\end{algorithm}

One may notice that there are some unspecified hyperparameters, including the time threshold $\rtime$ in~\Cref{alg:abundant}. Also for settings outside the scope of~\Cref{tab:problem}, some currently fixed parameters such as the learning rate $\eta$ and the batch size $B$ may also require tuning based on the problem setting, i.e, $n$ and $\tau$. Under such case, we propose to use~\Cref{alg:abundant_meta}, a meta algorithm for hyperparameter selection. In~\Cref{sec:abundant_hyper}, we show that~\Cref{alg:abundant_meta} 
effectively finds hyperparameters that leads to low time complexity of~\Cref{alg:abundant}, for example,~\Cref{alg:abundant} with $\eta =6e-3$ and $B = 1048576$, can solve $n = 20, \tau = 0.495$ in $6$ minutes with a single GPU.
An example running results of~\Cref{alg:abundant_meta} is shown in~\Cref{tab:ablation_1_lr_bz}.

\begin{algorithm}
    \caption{Abundant Sample Algorithm: Theoretical Version}\label{alg:abundant_theory}
    \begin{algorithmic}
    \Require $n$, the dimension, $\tau$ the error rate.
    \State Run~\Cref{alg:optim_example} with hyperparameters specified in~\Cref{tab:hyperparameter,tab:shared_hyperparameter} to get a learned model $\model[\weight_T]$ and return the model.
    \end{algorithmic}
\end{algorithm}
\begin{algorithm}
    \caption{Abundant Sample Meta Algorithm For Hyperparameters Selection}\label{alg:abundant_meta}
    \begin{algorithmic}
    \Require $n$, the dimension, $\tau$ the error rate.
    \Require $\gamma$, the accuracy threshold, fixed as $80\%$ in our experiments,.
    \Require $Repeat$,  the number of different datasets to estimate $\rtime$, fixed as $3$ in our experiments
    \Require A set of hyperparameters profiles $\mathcal{P}$.
    \State Initialize an empty hashmap $\mathrm{Map}_t$ to record running time.
    \For{Profile $P \in \mathcal{P}$}
    \State Set $\mathrm{Map}_{\rtime}[P] = \infty$.
    \For{$\mathrm{r} \le Repeat$}
    \State Randomly generate secret $s$ with Hamming weight $\lfloor n\tau \rfloor$.
    \State Create an oracle sampler $\sampler$ with batch size $B$ and error rate $\tau$ corresponding to $s$.
    \State Sample a test dataset $\testset$ using $\sampler$, with size $O(n)$. 
    \State \Comment{Fixed as 131072 in our experiments.}
    \State Set stop criterion $\stopcriterion'$ as $\accstopcriterion(\testset, \gamma)$.
    \State Run a variant of~\Cref{alg:abundant} with stop criterion $\stopcriterion'$ and  profile $P$ and record running time $\rtime$.
    \State Set $\mathrm{Map}_{\rtime}[(B, \eta)] = \min\{\rtime, \mathrm{Map}_{\rtime}[(B, \eta)] \}$.
    \State \Comment{Only require solving by constant probability.}
    \EndFor
    \EndFor
    \State \Return $(P,\mathrm{Map}_{\rtime}[P])$ with the smallest $\mathrm{Map}_{\rtime}[P]$ for $P \in \mathcal{P}$. 
    \end{algorithmic}
\end{algorithm}

We also perform some theoretical analysis on the time complexity of our method, showing that the dependency on $\tau$ is merely $(\frac{1}{2} - \tau)^{-2}$. 
However, due to the complexity of our choice of the optimizer and the loss function in~\Cref{alg:abundant}, our analysis is conducted on a simpler version (\Cref{alg:abundant_theory}) with more clear theoretical structure.
The analysis is deferred to~\Cref{sec:theory_optimization}.

\subsection{Restricted Sample}\label{sec:method_restricted}

In~\Cref{setting:restricted}, the sample complexity is assumed to be highly limited. 
Due to the restriction on sample complexity, we could no longer expect the learned model to fully recover the secret directly as in~\Cref{sec:method_unlimited}.
Hence we design our algorithm to solve the decisional version of LPN and use neural network as a reduction method instead. We present our algorithm in~\Cref{alg:restricted}.
The returning value of~\Cref{alg:restricted} is with high probability the value of secret $s$ at index $n-1$.
In practice, the inner for loop of~\Cref{alg:restricted} is parallelized across multiple GPUs.

\begin{algorithm}
    \caption{Restricted Sample Algorithm}\label{alg:restricted}
    \begin{algorithmic}
    \Require $\dataset$ with $m$ samples drawn from LPN with dimension $n$ and error rate $\tau$.
    \Require $repeat$ the number of random initializations to try, typically set as $8$.
    \Require $\gamma$, the accuracy threshold.
    \State Split $\dataset$ into $\trainset$ and $\testset$, each with $\lfloor m/2 \rfloor$ sample.
    \For{Guess $g \in \{0,1\}$}
    \State Generate $\trainset^g = \{ (x[1:d-1], y + x[d] \times g \mod 2) \mid (x,y) \in \trainset \}$.
    \State Generate $\testset^g = \{ (x[1:d-1], y + x[d] \times g \mod 2) \mid (x,y) \in \testset \}$.
    \State Set sampler $\sampler$ as batch sampler with batch size $\lfloor m/2 \rfloor$ on $\trainset^g$.
    \For{$r \le repeat $}
    \State Run~\Cref{alg:optim_example} with the hyperparameters specified in~\Cref{tab:hyperparameter,tab:shared_hyperparameter} to get a learned model $\model[\weight_T]$.
    \If{the accuracy of $\model[\weight_T]$ on $\testset^g$ exceeds $\gamma$} {\Return $g$.}
    \EndIf
    \EndFor
    \EndFor
    \end{algorithmic}
\end{algorithm}

Again, there are some unspecified hyperparameters in~\Cref{alg:restricted} that need to be pivoted down by experiments.
In our experiments, we mainly consider $n \le 50$, $\tau \le 0.3$ and $m \in [2^{10}, 2^{20}]$. In this regime, we propose to set $\step = 300k$ and $\gamma$ as $1/2 + \sqrt{\log(20)/{m}}$. 
The dependency on $\tau$ is dropped here because the noise we considered here is relatively small. We observe in practice that the accuracy of test set on successful runs almost always exceeds this threshold by a large margin. For other hyperparameters including the exact value of sample size $m$, we apply meta algorithm~\Cref{alg:restricted_meta}.

We would like to stress a few major differences in hyperparameters selection between~\Cref{setting:abundant} and~\Cref{setting:restricted}.
\begin{enumerate}
    \item L2 regularization, or weight decay, is not helpful in~\Cref{setting:abundant}, but can reduce the sample complexity significantly in~\Cref{setting:restricted}.
    \item Under~\Cref{setting:restricted}, it is generally better to use the whole dataset as a batch instead of using a smaller batch size. 
    \item The learning rate required by~\Cref{alg:restricted} is typically smaller than the learning rate required by~\Cref{alg:abundant} by a factor of $10$ to $100$.
\end{enumerate}

\begin{algorithm}[t]
    \caption{Restricted Sample Meta Algorithm For Hyperparameters Selection}\label{alg:restricted_meta}
    \begin{algorithmic}
    \Require $n$, the dimension, $\tau$ the error rate.
    \Require A set of sample numbers $m_0 < m_1 < m_2 < ... < m_L$.
    \Require $Repeat$, the number of different datasets we tested on, fixed as $3$ in our experiments
    \Require A set of hyperparameters profiles $\mathcal{P}$.
    \For{Profile  $P \in \mathcal{P}$}
    \State Set $\mathrm{Map}_m[P] = 0$ to record the sample needed to perform reduction.
    \State Set low and high index to perform a binary search, $Left = 0$ and $Right = L$.
    \While{$Left \neq Right$}
    \State Middle index $Mid = \lfloor (Left + Right)/2 \rfloor$.
    \State Set sample number $m = m_{Mid}$.
    \State Set success Count $c = 0$.
    \For{$\mathrm{r} \le Repeat$}
    \State Randomly generate secret $s$ with Hamming weight $\lfloor n\tau \rfloor$ 
    \State Sample a dataset $\dataset$ generated with secret $s$ and error rate $\tau$, with size $m$. 
    \State Run a variant of~\Cref{alg:restricted} with hyperparameter profile $P$ 
    \State Increment success count $c$ by $1$ if the return value equals $s[d-1]$.
    \EndFor
    \If{$c \ge \lfloor 2Repeat / 3 \rfloor$} {Right = Mid}
    \EndIf
    \If{$c < \lfloor 2Repeat / 3 \rfloor$} {Left = Mid + 1}
    \EndIf
    \EndWhile
    \State Set $\mathrm{Map}_m[P] = m_{Left}$.
    \EndFor
    \State \Return $(P, \mathrm{Map}_m[P])$ with the smallest $\mathrm{Map}_m[P]$.
    \end{algorithmic}
\end{algorithm}

\subsection{Moderate Sample}\label{sec:method_abundant}
\begin{algorithm}[h]
    \caption{Moderate Sample Algorithm}\label{alg:moderate}
    \begin{algorithmic}
    \Require $\dataset$ with $m$ samples drawn from LPN with dimension $n$ and error rate $\tau$.
    \Require Repeat number $repeat$ for starting with different initializations, set to $1$ in our experiments.
    \Require Repeat number $repeat_{post}$ for post processing run, set to $20$ in our experiments.
    \Require The size of boosting set $m'$ for post processing run.
    \Require The hypothesis test error rate threshold $\tau'$.
    \State Set $m_1 = \frac{2n}{(1/2 - \tau)^2}$. \Comment{See Lemma 3 in \cite{DBLP:conf/crypto/EsserKM17}}
    \State Split the dataset with a training set $\trainingset$ of size $m - m_1$ and a test set $\testset$ of size $m_1$.
    \For{$r \le repeat$}
    \State Run~\Cref{alg:optim_example} with the parameters specified in~\Cref{tab:hyperparameter,tab:shared_hyperparameter} to get a learned model $\model[\weight_T]$.
    \State Randomly generate $m'$ boolean vector $x_i \in \{0,1\}^{n + 1}$ and use $\model[\weight_T]$ to predict the pseudo label $\one{\model[\weight_T](x_i[1:n]) > 0.5} + x_i[n+1] \mod 2$ to form $\boostingset$. 
    \State \Comment{Rebalance Step}
    \State Run Pooled Gaussian algorithm \cite{DBLP:conf/crypto/EsserKM17} on $\boostingset$ for $repeat_{post}$ number of times to get a set of possible secret (discard the last bit) using hypothesis test error rate threshold $\tau'$. 
    \State \Comment{Post Processing Step}
    \State Return the secret if one of them reaches accuracy $1 - \tau - \sqrt{\frac{3(\frac{1}{2} - \tau) n}{m_1}}$  on $\testset$.
    \EndFor
    \end{algorithmic}
\end{algorithm}

\Cref{setting:restricted} and~\Cref{setting:abundant} can be viewed as two extreme settings where in the first one only time complexity is considered and in the second one only sample complexity is considered. Although they both show interesting properties and strong performances, difficulties are faced 
when trying to fit them in the classical \emph{reduction-decoding} paradigm of LPN algorithms, in which the sample complexity after reduction is mediocre. This calls for investigating whether and how we can apply machine learning methods as a part of the \emph{reduction-decoding} paradigm, especially as the decoding algorithm.

Our work gives an affirmative answer to the first question and proposes the following~\Cref{alg:moderate} as a candidate. 
We would like to point out that the algorithm design process of~\Cref{alg:moderate} centered around LPN problems with medium dimension $n \approx 20$ and heavy error rate $\tau > 0.495$, as this is the 
setting after the reduction phase where classical algorithms like Pooled Gaussian and MMT require huge time complexity.
Similar to the previous section, a meta algorithm is required to determine the running time $\rtime$ and predicted accuracy $\tau'$. The algorithm is analogous to~\Cref{alg:abundant_meta} and is omitted here.

One can see the major differences between~\Cref{alg:abundant} and~\Cref{alg:moderate} is that
\begin{enumerate}
    \item The sampler is replaced by the fixed batch sampler. We use this bootstrapping technique to simulate sampling from $\distribution$ with the sample we possess.
    \item The direct inference is replaced by a pooled Gaussian step. This is because the training part of~\Cref{alg:moderate} typically returns a model with low accuracy, for example around $52\%$ on clean data when $n = 20, \tau = 0.498$ and $m - m_1  = 1e8$. We empirically observe that the rebalance and post-processing step usually takes less than 2 minutes to recover the correct secret under the typical setting. The experiment details are deferred to~\Cref{sec:experiment}.
\end{enumerate}

To simulate the training dynamics of~\Cref{alg:moderate} with~\Cref{alg:abundant}, we propose to use the same hyperparameter setting except for the sampler. The initial sample $m$ under this setting is mostly determined by the reduction phase. For example, in our case study in~\Cref{sec:experiment}, it is set to $1e8$ as the BKW algorithm reduces LPN problem with $n = 125, \tau = 0.2, m = 1.2e10$ to LPN problem with 
$n = 26, p = 0.498, m = 1.1e8$ in about $0.5$ hours with 128 cores server.

\section{Experiment}
\label{sec:experiment}

In this section, we perform experiments on our methods and classical methods like BKW and Gauss as well. Without otherwise mention, the experiment results we report are conducted on 8 NVIDIA 3090 GPUs, which cost approximately 10000 dollars. For other experiments using CPUs, we use a 128 CPU cores server with 496GB memory. Specifically, the server contains two AMD EPYC 7742 64-Core Processors, which cost approximately 10000 dollars as well.

This section is organized as follows. The results for three settings are shown respectively in~\Cref{sec:abundant,sec:restricted,sec:moderate}. In each subsection, we will first show our method's overall performance with a comparison to classical algorithms. Then we will present a case study section, in which pilot experiments are demonstrated to show the experiment observation we have under the setting. It will be followed by a hyperparameter selection section, in which we showcase how our meta hyperparameter selection algorithm is performed and also our conclusion on the importance of different hyperparameters.

\subsection{Abundant Sample Setting}\label{sec:abundant}

In this section, we study the empirical performance of Algorithm \ref{alg:abundant}. The performance of Algorithm \ref{alg:abundant} is summarized in~\Cref{tab:abundant}. As a comparison, we list the performance of Gaussian Elimination in~\Cref{tab:gauss}. The hyperparameters may not be optimal for all $n$ and $\tau$. Nevertheless, it loosely reflects how the performance of Algorithm \ref{alg:abundant} varies with $n$ and $\tau$. A key observation is that the performance does not deteriorate much when we raise the noise. For instance, the runtime only triples when raising $\tau$ from 0.45 to 0.49. However, the running time rises quickly when we raise the dimension. In contrast, the running time of Gaussian Elimination rockets with both dimension and noise rate. These features makes~\Cref{alg:abundant} competitive in medium dimension with super high noise cases (note that LPN instances with medium dimension and super high noise often appear at the final stage of the BKW reduction or other dimension-reduction algorithms). In~\Cref{sec:abundant_case}, we will first illustrate typical phenomena that arise when running the algorithm. In~\Cref{sec:abundant_hyper}, we show how to tune the algorithm to its best performance.

\begin{table}[t]
\centering
    \begin{subtable}[t]{0.45\textwidth}
    \centering
    \begin{tabular}{cccccc}
        \toprule
        \diagbox{$n$}{$\tau$} & 0.4 & 0.45 & 0.49 & 0.495 & 0.498 \\
        \midrule
        20 & 39   & 70      & 197     & 323 & 730\\
        30 & 139  & 374     & 1576 &  & \\
        \bottomrule
    \end{tabular}
    \caption{\textbf{Neural Network}}
    \label{tab:abundant}
    \end{subtable}
    \begin{subtable}[t]{0.45\textwidth}
    \centering
    \begin{tabular}{cccccc}
        \toprule
        \diagbox{$n$}{$\tau$} & 0.4 & 0.45 & 0.49 & 0.495 & 0.498\\
        \midrule
        20 & 0.40 & 5.76 & 22.0 & 312 & 6407\\
        30 & 26.4 & 682 &  & \\
        \bottomrule
    \end{tabular}
    \caption{\textbf{Gaussian Elimination}}
    \label{tab:gauss}
    \end{subtable}
    \caption{\textbf{Time Complexity w.r.t. Dimension and Noise Rate}. Each entry represents the running time (in seconds) for the corresponding algorithm to solve the corresponding LPN instance with constant probability. For a neural network, the criterion for solving the LPN is that the accuracy of the network reaches $80\%$ on clean data. For Gaussian Elimination, the criterion for success is to get at least 7 correct secrets out of 10 attempts. The experiments presented in~\Cref{tab:abundant} are performed on a single GPU and in~\Cref{tab:gauss} on a single 64 cores Processor. The running time here is averaged over all runs that recover the correct secret in the time limit (3 hours). For the empty cell, no runs are successful within 3 hours. }
    \label{tab:perf}
\end{table}

\subsubsection{Case Study} 
\label{sec:abundant_case}
We run Algorithm \ref{alg:abundant} on $\LPN_{44,\infty,0.2}$. The training accuracy and the test accuracy are shown in Figure \ref{fig:abundant_case}. 

\begin{figure}
    \centering
    \begin{subfigure}{0.45\textwidth}
    \centering
        \includegraphics[scale = 0.35]{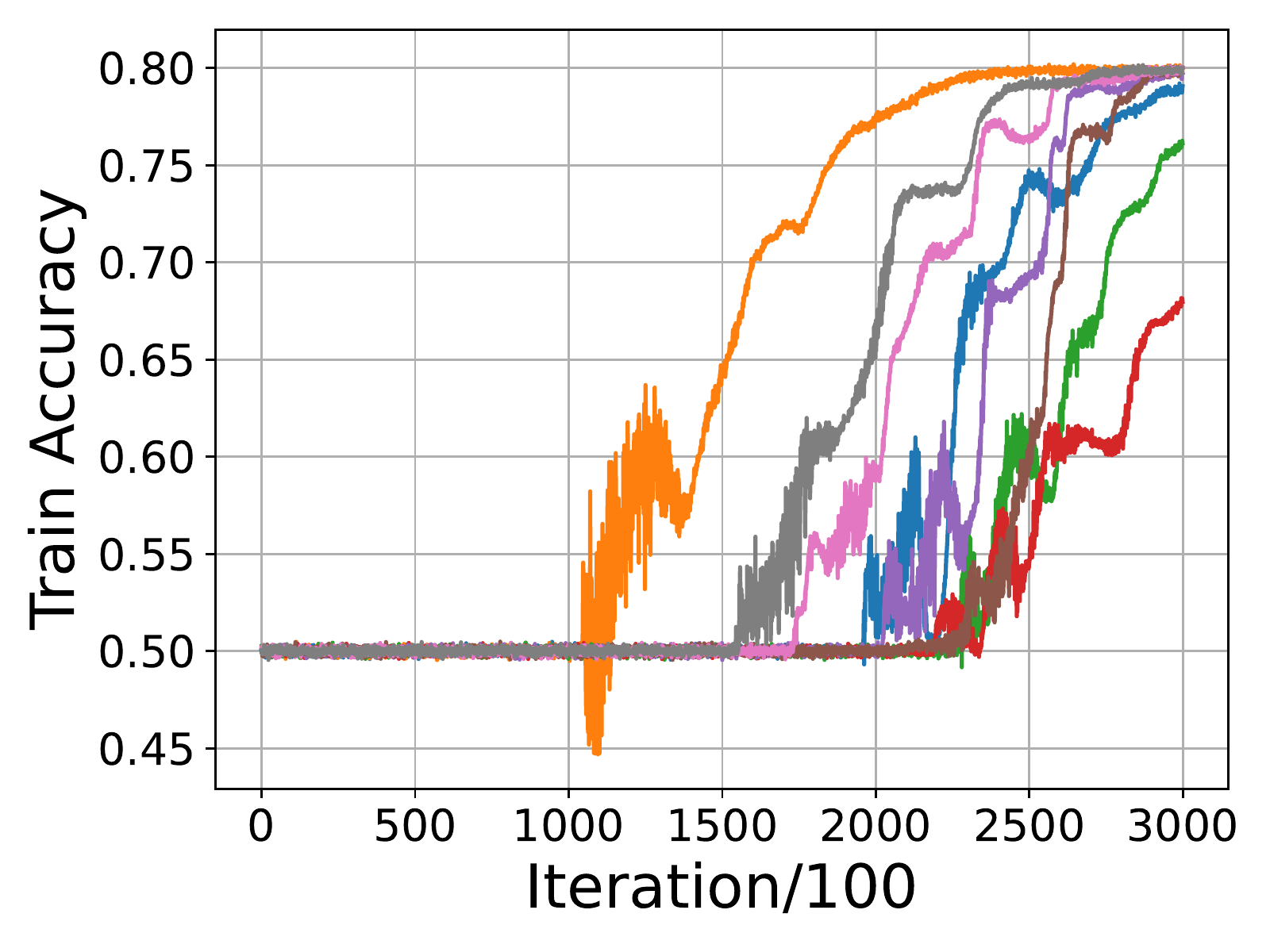}
        \caption{Training Accuracy}
    \end{subfigure}
    \begin{subfigure}{0.45\textwidth}
    \centering
        \includegraphics[scale = 0.35]{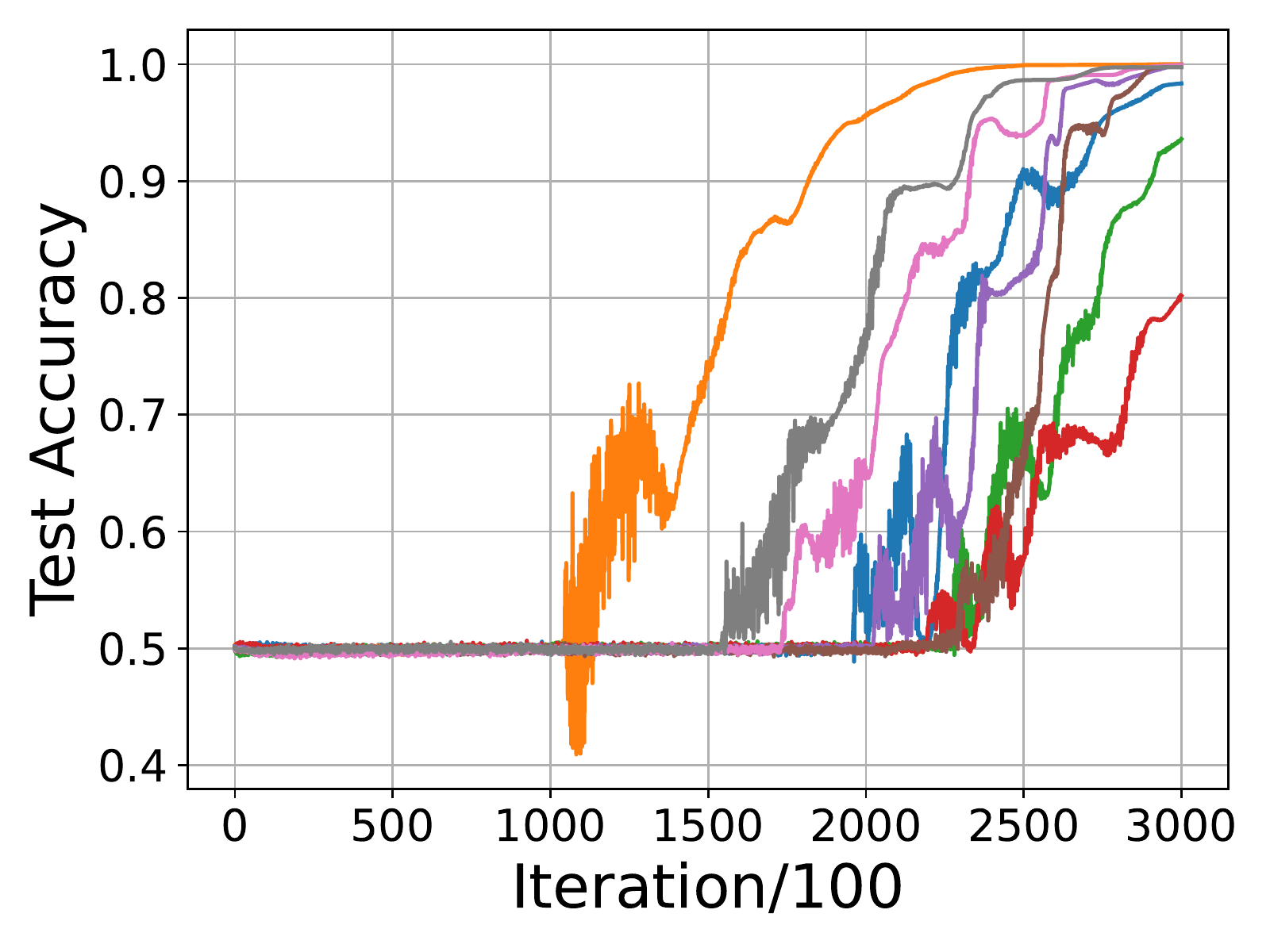}
        \caption{Test Accuracy}
    \end{subfigure}
    \caption{\textbf{Experiments on $\text{LPN}_{44,\infty,0.2}$}. Figures (a) and (b) show the training accuracy and the evaluation accuracy respectively. The horizontal axis represents the training iteration. One unit on this axis represents 100 iterations. The vertical axis represents the accuracy of the model on the corresponding dataset. The training accuracy here is calculated on the batch used for training the network before the gradient is used to update the weight. There are 8 curves in every graph, each corresponding to a random initialization. The same type of graphs will appear frequently in this paper.}
    \label{fig:abundant_case}
\end{figure}

\begin{Empirical}
    Under~\Cref{setting:abundant}, we find that (i) The randomness in initialization has little impact on the running time or the final converged accuracy of the model. (ii) The training and test accuracy tie closely and test accuracy usually reaches $100\%$ eventually on clean data after it departs from $50\%$.
\end{Empirical}

\paragraph{Initialization} We use Kaiming initializations. As depicted in Figure \ref{fig:abundant_case}, it is observed that whether a model can learn the task is independent of the randomness in the Kaiming initialization. We also note that the running time needed to learn the LPN instance does not differ much among different initializations. Considering this phenomenon, we do not try multiple initializations in designing~\Cref{alg:abundant}.

\paragraph{Matching Training and Test Accuracy.} One can observe an almost identical value between the training and test accuracy in~\Cref{fig:abundant_case}.  This is expected because, on each train iteration, fresh data is sampled so the training accuracy, similar to the test accuracy, reflects the population accuracy. 
It is further observed that after the test accuracy starts to depart from 50\%, it always reaches 100\% accuracy eventually (here 100\% refers to the test accuracy on noiseless data). This phenomenon suggests that most local minima of $\populoss(f)$ found by algorithm \ref{alg:optim_example} are global minima of $\populoss(f)$.

\paragraph{Direct Inference of the Secret} We experiment with models with accuracy $80\%$ on the post processing step of~\Cref{alg:abundant} and find out that the post processing step returns secret with high probability with typical time complexity less than $1$s.

\subsubsection{Hyperparameter Selection}
\label{sec:abundant_hyper}

In this section, we fix an LPN problem setup with $n = 20$ and $\tau = 0.498$ and showcase the application of our~\Cref{alg:abundant_meta}. Recalling our goal is to find hyperparameters that minimize the time complexity, we report the minimal running time in seconds for the model to reach accuracy $80\%$ on noiseless data for one out of the four datasets we experimented on. For each experiment, we fix all except the investigated hyperparameters the same as in the corresponding column of~\Cref{alg:abundant} in~\Cref{tab:hyperparameter,tab:shared_hyperparameter}.

\begin{Empirical}
Under~\Cref{setting:abundant},  we find that (i) Larger batch size has tolerance for larger learning rate, however, all sufficiently large batch size yields similar running time. (ii) The depth of the network should be fixed to 1 and the width should be carefully tuned.
\end{Empirical}

\paragraph{Learning Rate and Batch Size} The performance of algorithm \Cref{alg:abundant} under different learning rates and batch sizes are listed in Table \ref{tab:ablation_1_lr_bz}. For a fixed batch size, if the learning rate is too small, the model will fit slowly to prolong the learning time. If on contrary, the learning rate is too large, the gradient steps would be so large that the learner cannot locate the local minima of the loss. The upper threshold of the learning rate increases with respect to the batch size because, for a fixed learning rate, a small batch size would result in an inaccurate gradient estimate and render the learning to fail. On the contrary, a large batch size can make the running time unnecessarily long. The best running times for all batch sizes are similar and for all learning rates are similar as well. As a result, we would recommend one first finds a batch size and learning rate combination that can fit the data and then tune one of the factors while fixing the other for higher performance.

\begin{table}[h]
    \centering
    \begin{tabular}{ccccc}
        \toprule
        \diagbox{Learning Rate}{$\log_2$Batch Size} & 17 & 18 & 19 & 20 \\
        \midrule
        $6\times10^{-5}$ & 1807    & 2405    & 3569    & 4960\\
        $2\times10^{-4}$ & 806     & 1140    & 1389    & 1650\\
        $6\times10^{-4}$ & $>4000$ & $>4000$ & 692     & 970\\
        $2\times10^{-3}$ & $>4000$ & $>4000$ & $>4000$ & 750\\
        $6\times10^{-3}$ & $>4000$ & $>4000$ & $>4000$ & 723\\
        \bottomrule
    \end{tabular}
    \caption{\textbf{Time Complexity w.r.t. Learning Rate and Batch Size}. Each entry represents the running time (in seconds) for~\Cref{alg:abundant} with corresponding hyperparameters to solve $\text{LPN}_{20,\infty,0.498}$ with probability approximately $1/4$.}
    \label{tab:ablation_1_lr_bz}
\end{table}

\paragraph{L2 Regularization} L2 regularization does not help to learn at all and increases running time over all settings we experimented on. As mentioned in~\Cref{sec:abundant_case}, generalization is not an issue under this setting hence L2 regularization is unnecessary.

\paragraph{Width and Depth of the Model}
We experiment with different architectures including MLP models with different widths in $\{500, 1000, 2000\}$ and depths in $\{1,2,3\}$. The base model we apply outperforms other architectures significantly and returns the correct secret at least $5$ times faster. This experiment, alongside the following architecture experiment in~\Cref{sec:restricted_hyper}, shows that one should tend to use a shallow neural network with depth $1$ and carefully tune the width of the model.

\subsection{Restricted Sample Setting}\label{sec:restricted}

In this section, we study empirically the minimal sample complexity required by neural networks to learn LPN problems. The investigation here is mostly limited to the case where the noise is low, as a complement to the cases in Settings~\ref{setting:abundant} and~\ref{setting:moderate}. As mentioned in~\Cref{sec:method_restricted}, we utilize~\Cref{alg:restricted}, which aims to solve the decision version of the problem. The performance of~\Cref{alg:restricted} is shown in~\Cref{tab:restricted_dimension_error}. These sample complexities are typically comparable with the classical algorithm, such as BKW. In~\Cref{sec:restricted_case}, we will show some important observations that guide us in designing~\Cref{alg:restricted}. In~\Cref{sec:restricted_hyper}, we will show how to use~\Cref{alg:restricted_meta} to find hyperparameters of the~\cref{alg:restricted}.
\begin{table}[h]
    \centering
    \begin{tabular}{cccc}
        \toprule
        \diagbox{Dimension}{Error Rate} & 0.1  & 0.2 & 0.3\\
        \midrule
        25 & 7    & 10.5 & 13  \\
        30 & 9.5  & 12.5 & 16  \\
        35 & 9.5  & 15 & 17.5  \\
        40 & 10.5 & 16 & 20.5  \\
        \bottomrule
    \end{tabular}
    \caption{\textbf{Sample Complexity w.r.t. Dimension and Noise Rate}. Each entry represents the logarithm of the minimal number of samples with base $2$ for~\Cref{alg:restricted} to return the correct guess of the last bit of the secret with a probability of approximately $2/3$.}\label{tab:restricted_dimension_error}
\end{table}

\subsubsection{Case Study}
\label{sec:restricted_case}
\begin{figure}[t]
    \centering
    \begin{subfigure}{0.3\textwidth}
        \centering
        \includegraphics[scale = 0.23]{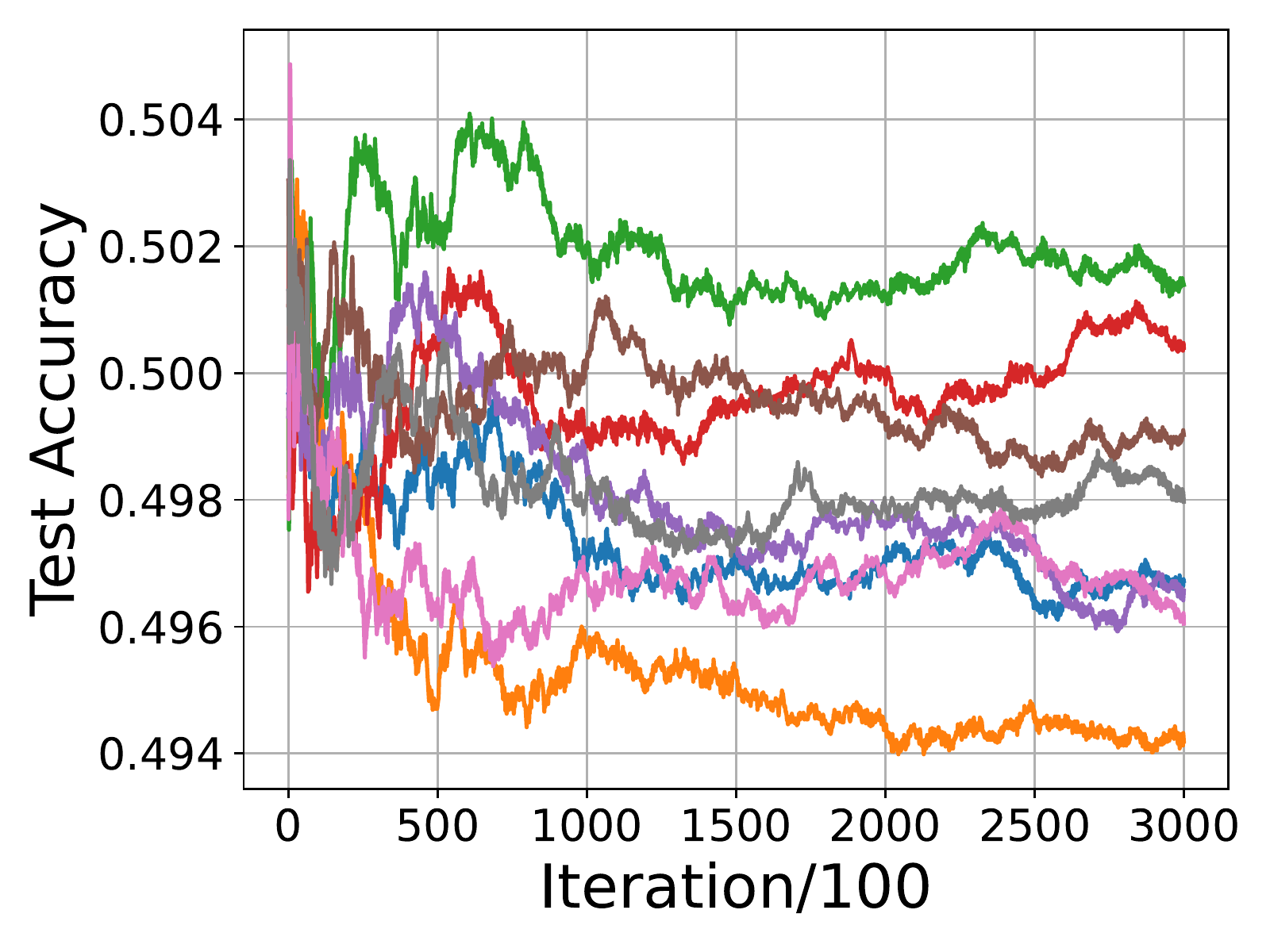}
        \caption{}
    \end{subfigure}
    \begin{subfigure}{0.3\textwidth}
        \centering
        \includegraphics[scale = 0.23]{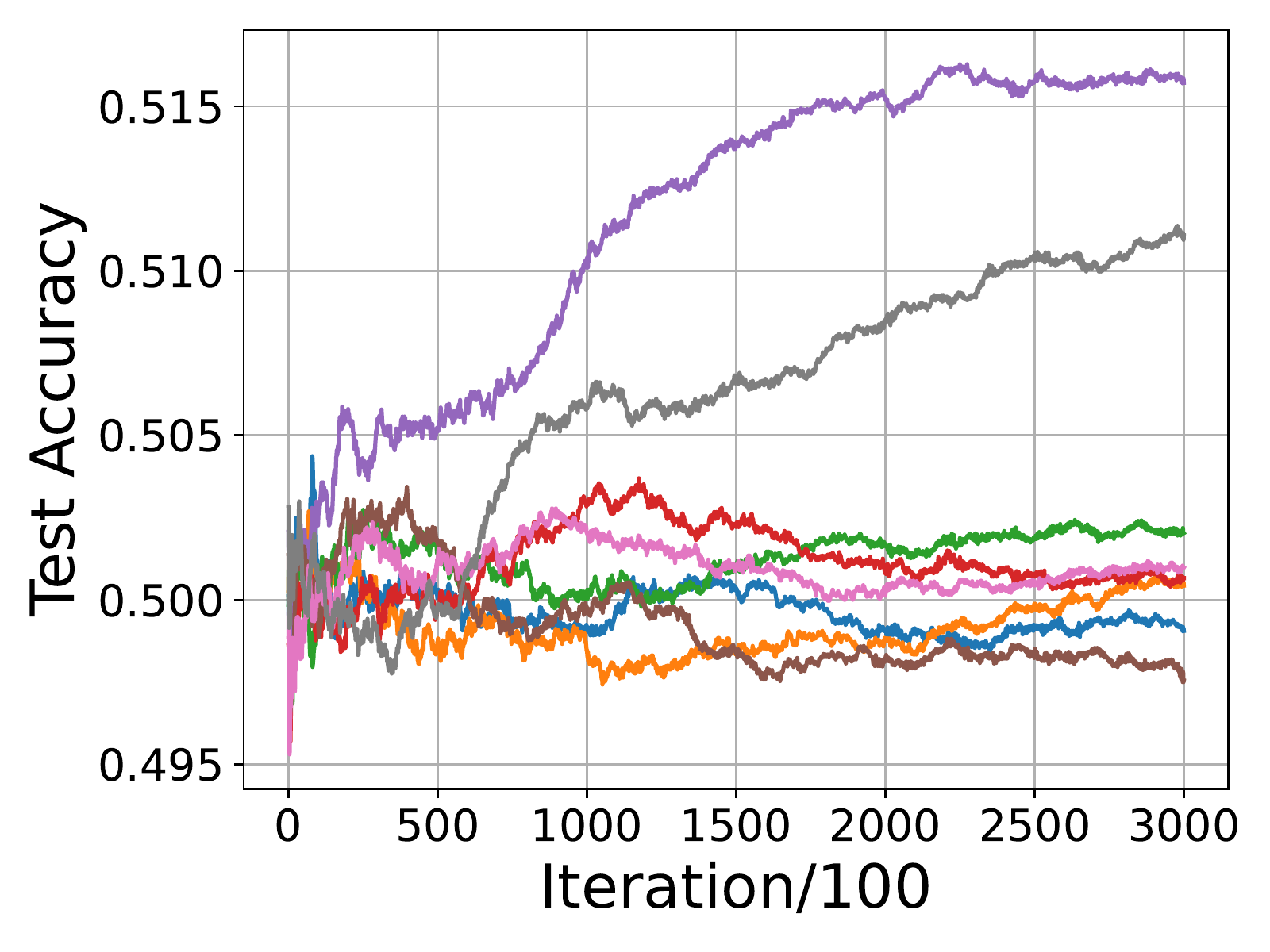}
        \caption{}
    \end{subfigure}
    \begin{subfigure}{0.3\textwidth}
        \centering
        \includegraphics[scale = 0.23]{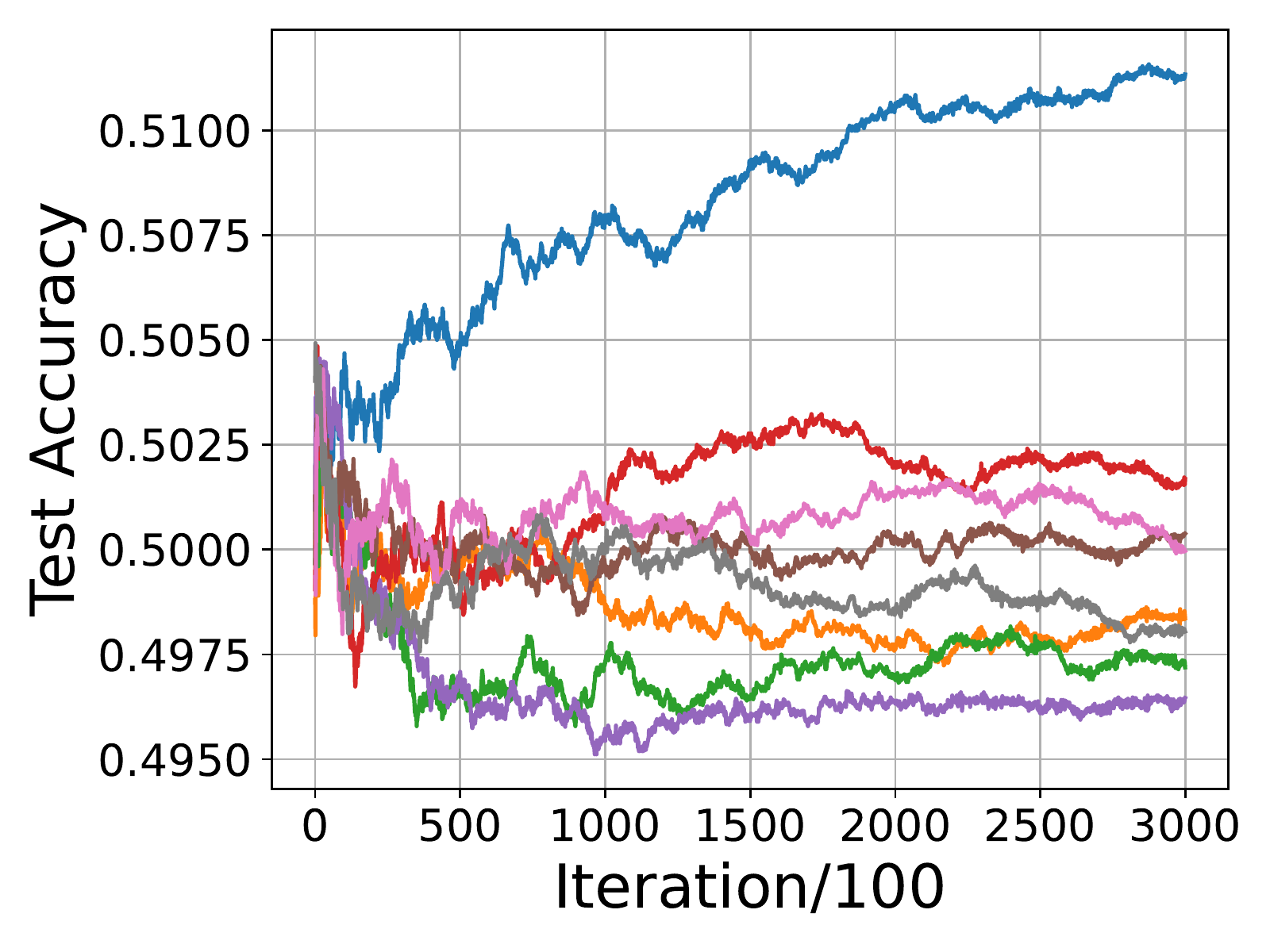}
        \caption{}
    \end{subfigure}
    \begin{subfigure}{0.3\textwidth}
        \centering
        \includegraphics[scale = 0.23]{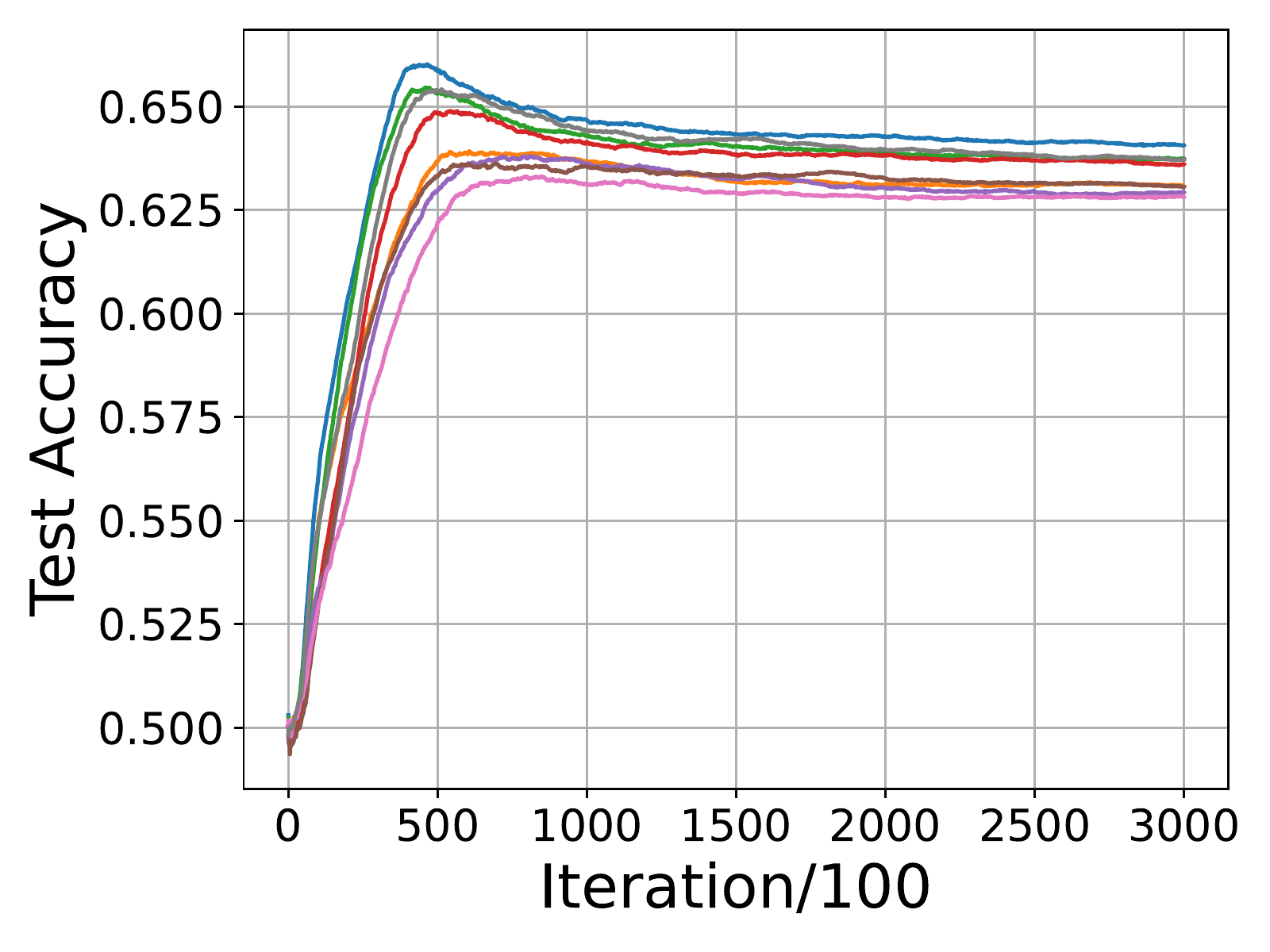}
        \caption{}
    \end{subfigure}
    \begin{subfigure}{0.3\textwidth}
        \centering
        \includegraphics[scale = 0.23]{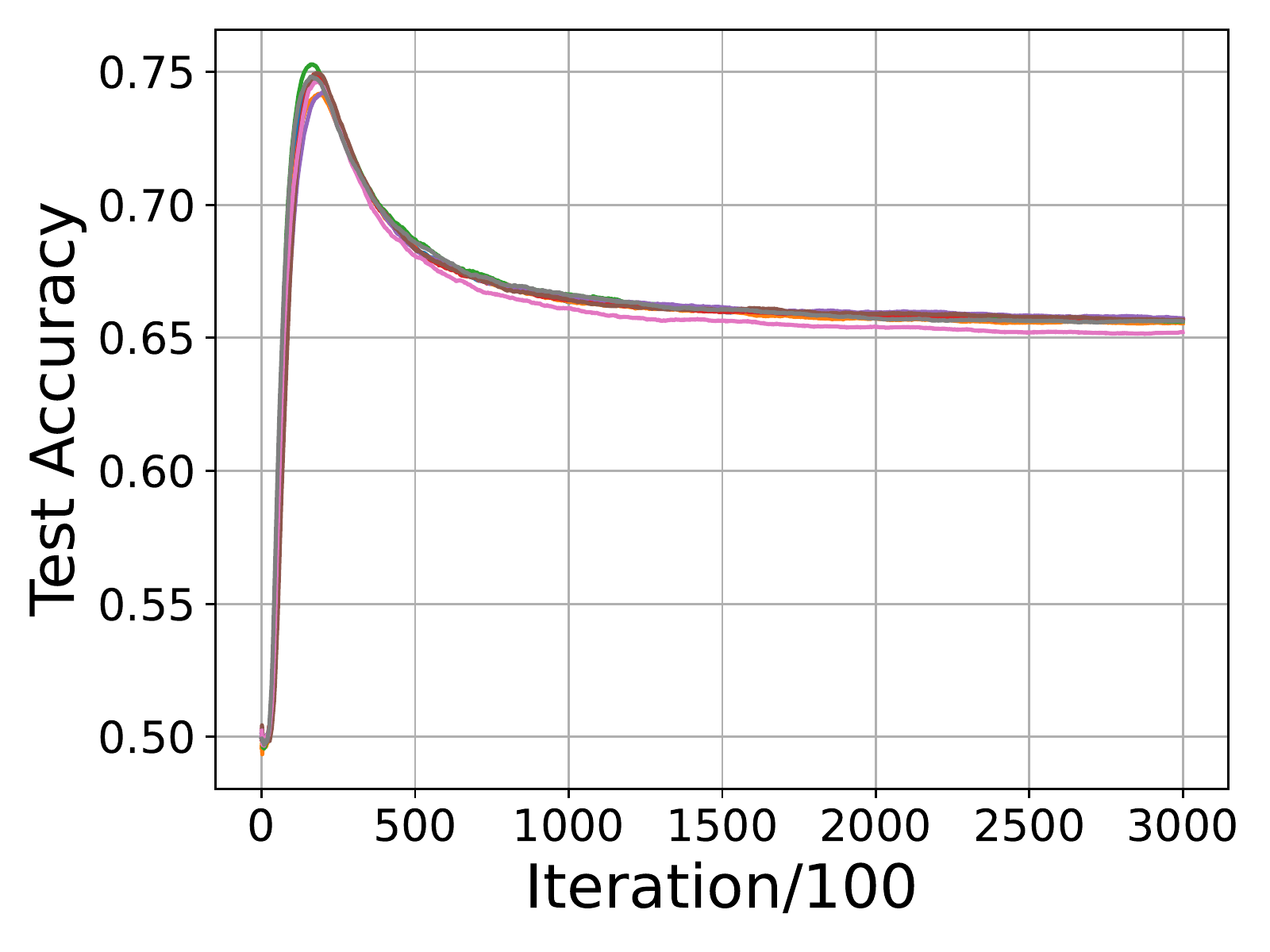}
        \caption{}
    \end{subfigure}
    \begin{subfigure}{0.3\textwidth}
        \centering
        \includegraphics[scale = 0.23]{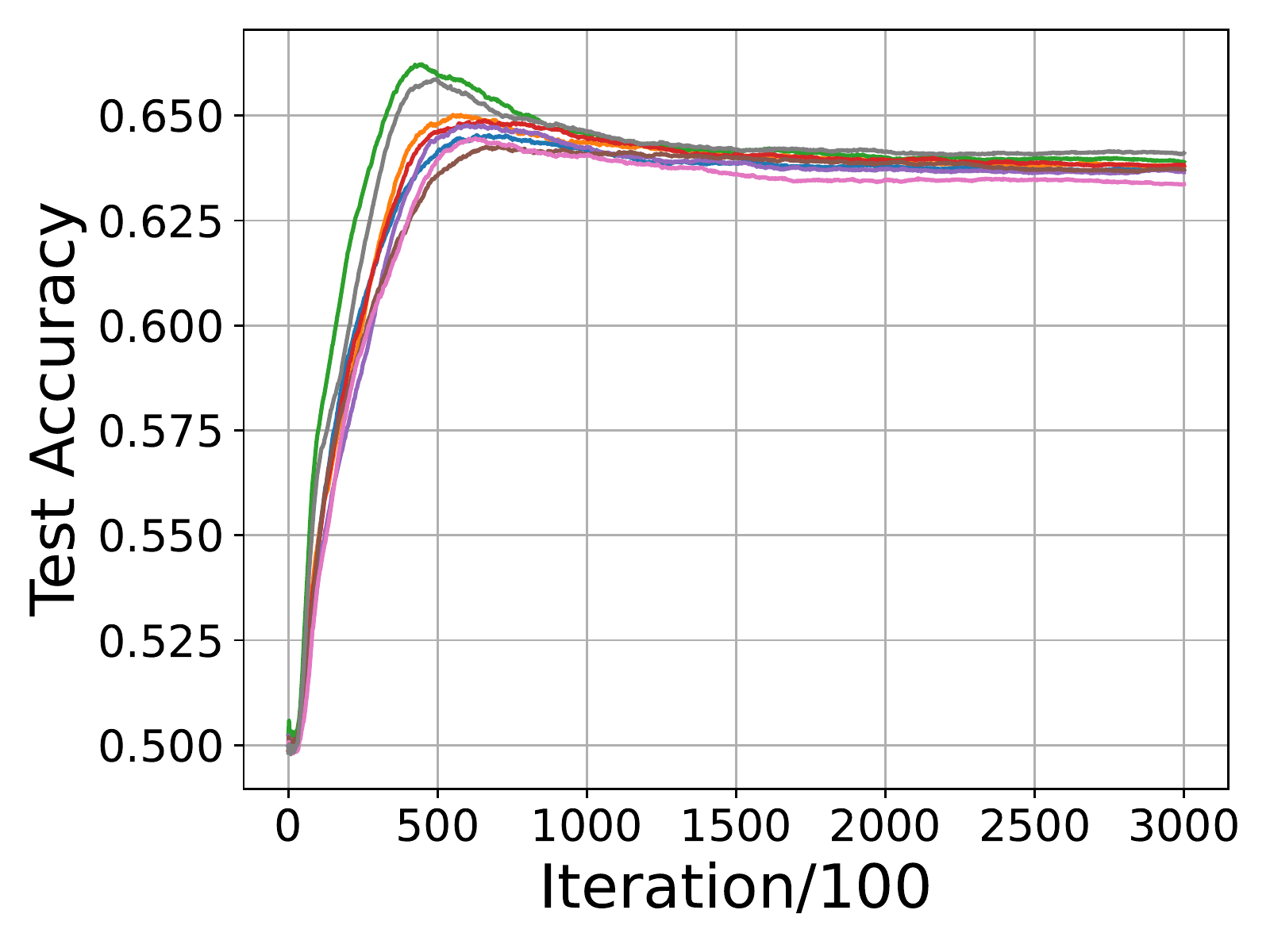}
        \caption{}
    \end{subfigure}
    \begin{subfigure}{0.3\textwidth}
        \centering
        \includegraphics[scale = 0.23]{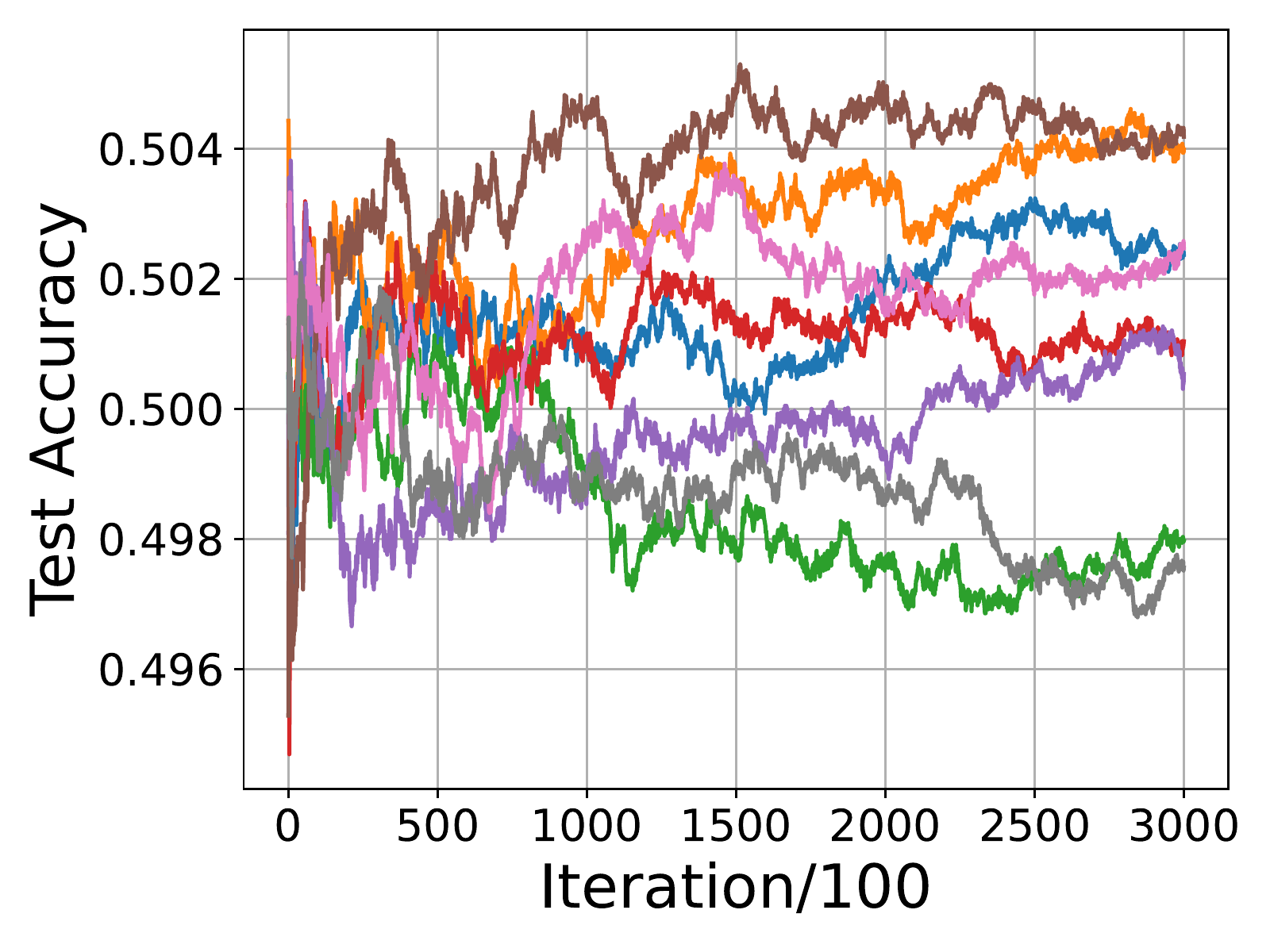}
        \caption{}
    \end{subfigure}
    \begin{subfigure}{0.3\textwidth}
        \centering
        \includegraphics[scale = 0.23]{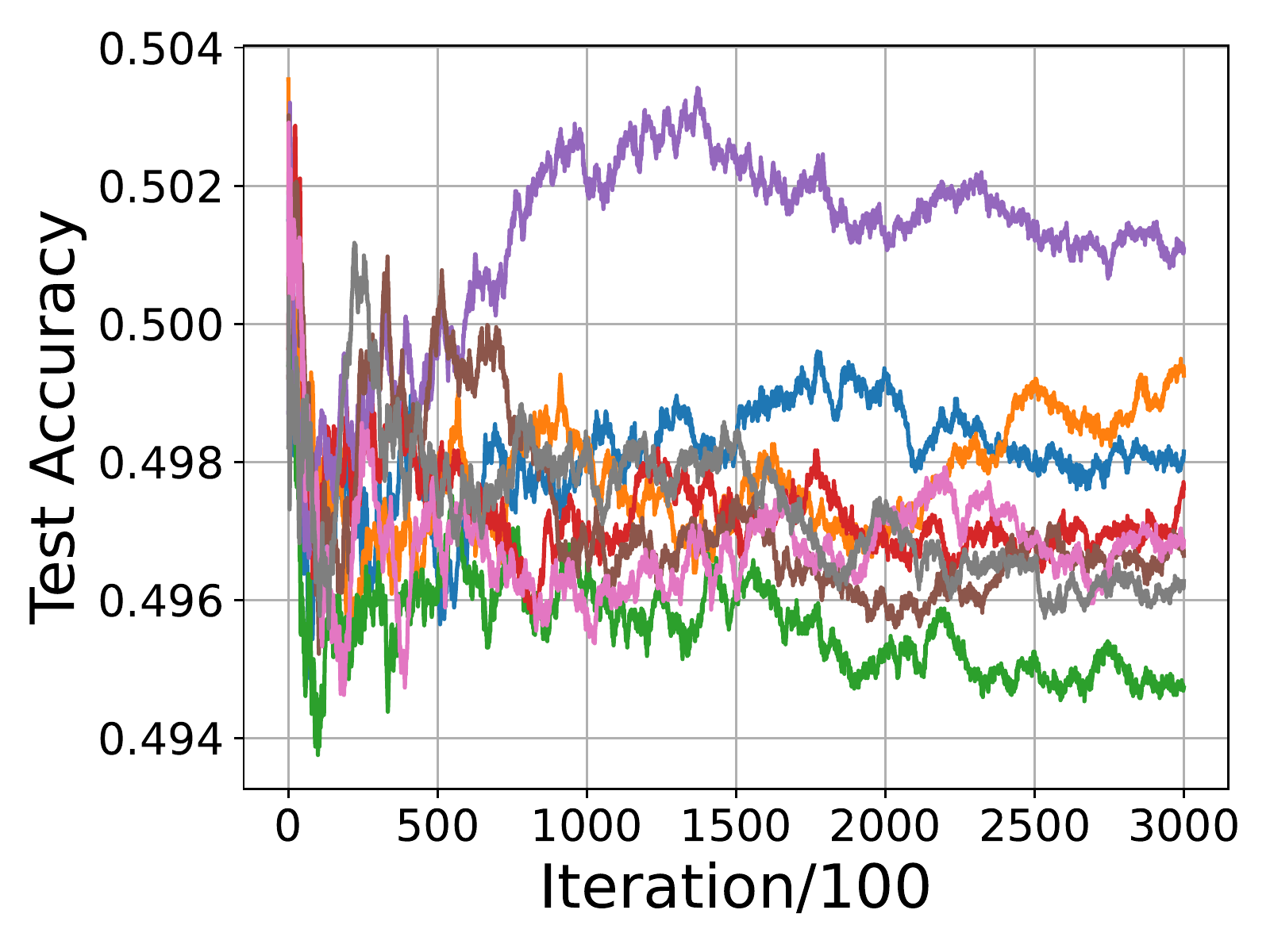}
        \caption{}
    \end{subfigure}
    \begin{subfigure}{0.3\textwidth}
        \centering
        \includegraphics[scale = 0.23]{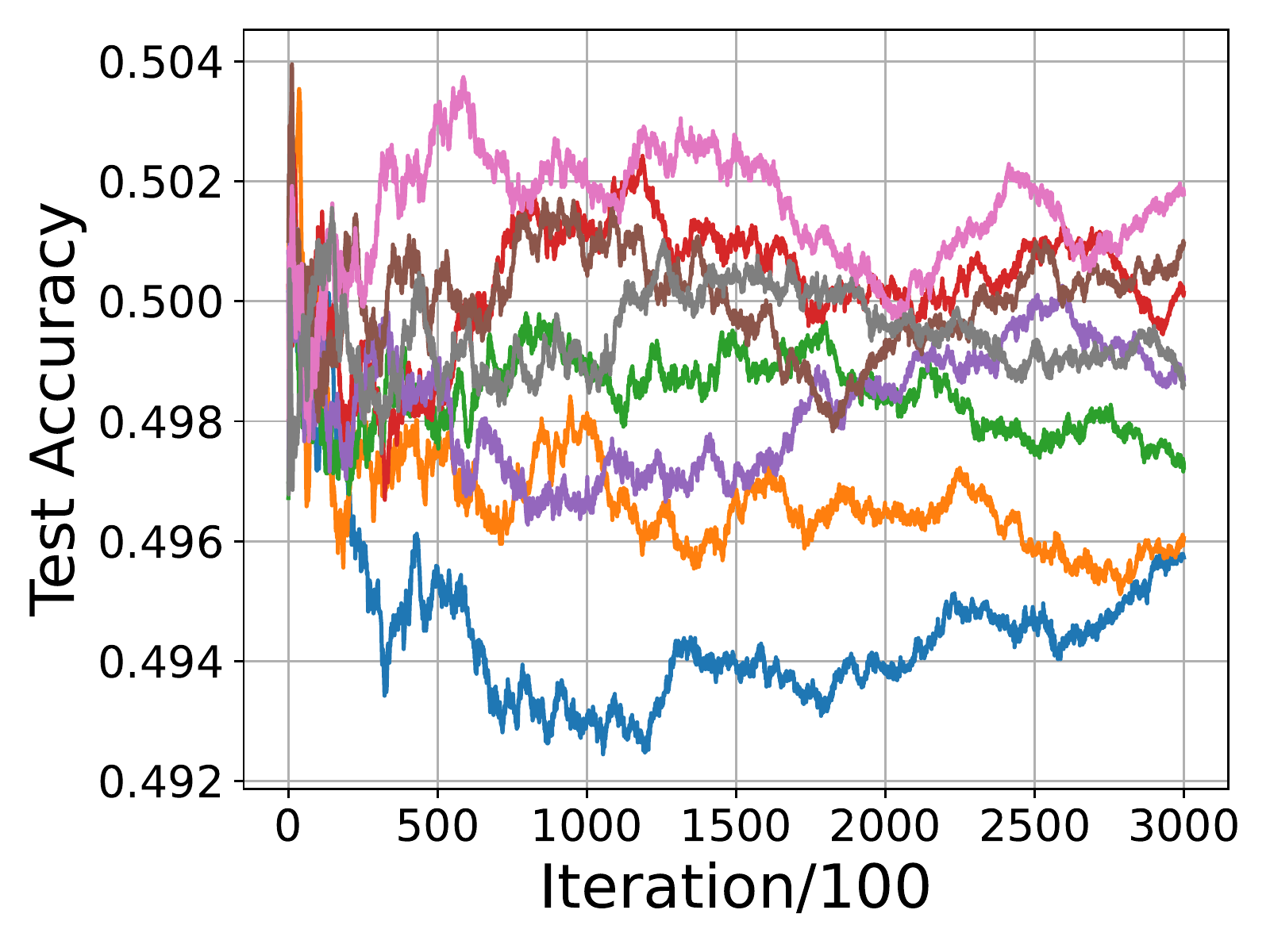}
        \caption{}
    \end{subfigure}
    \caption{\textbf{Performance of the Models} These nine graphs show the performance of models during training under three different setups specified in~\Cref{tab:restricted_setup}. The first, second, and third row corresponds to Setup 1, 2, and 3 respectively. Different pictures in the same row correspond to different datasets. Different colors in the same picture correspond to different initializations of the networks.} 
    \label{fig:naive}
\end{figure}

\begin{table}[h]
    \centering
    \begin{tabular}{|c|c|c|c|c|}
        \hline
        Setup & $n$  & $\tau$ & $m$ & Sparsity\\
        \hline
        1 & 44   & $0.2$ & $2^{17}$  & $0.2$ \\
        \hline
        2 & 29  &  $0.2$ & $2^{17}$  & $0.2$\\
        \hline
        3 & 30  & $0.2$ & $2^{17}$  & $0.5$ \\
        \hline
    \end{tabular}
    \caption{\textbf{Three Different Setups Shown in~\Cref{fig:naive}}. The sparsity column means the hamming weight of the secret over $n$.}\label{tab:restricted_setup}
\end{table}

\begin{figure}[h]
    \centering
    \begin{subfigure}{0.45\textwidth}
    \centering
        \includegraphics[scale = 0.35]{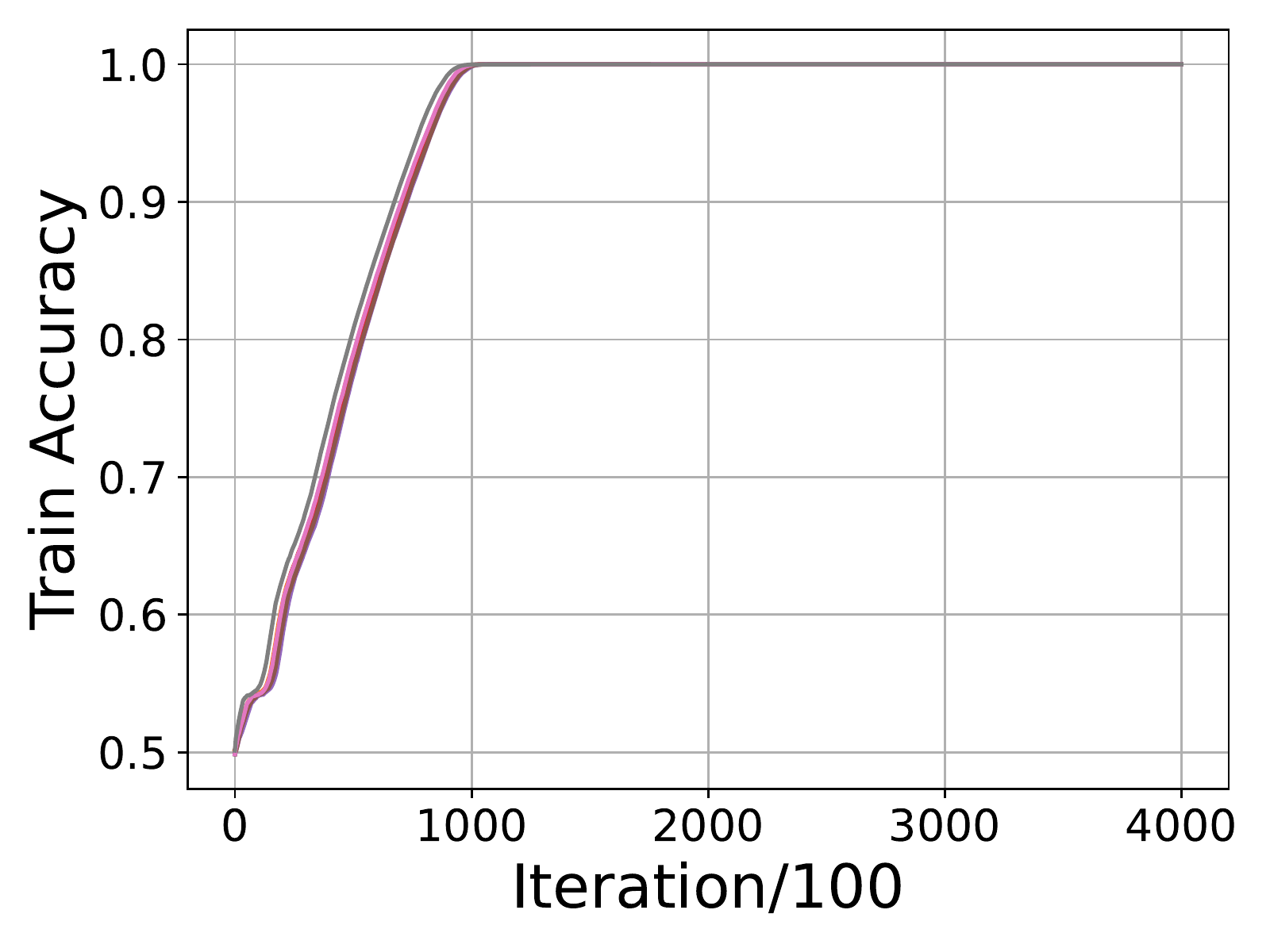}
        \caption{Training Accuracy}
    \end{subfigure}
    \begin{subfigure}{0.45\textwidth}
    \centering
        \includegraphics[scale = 0.35]{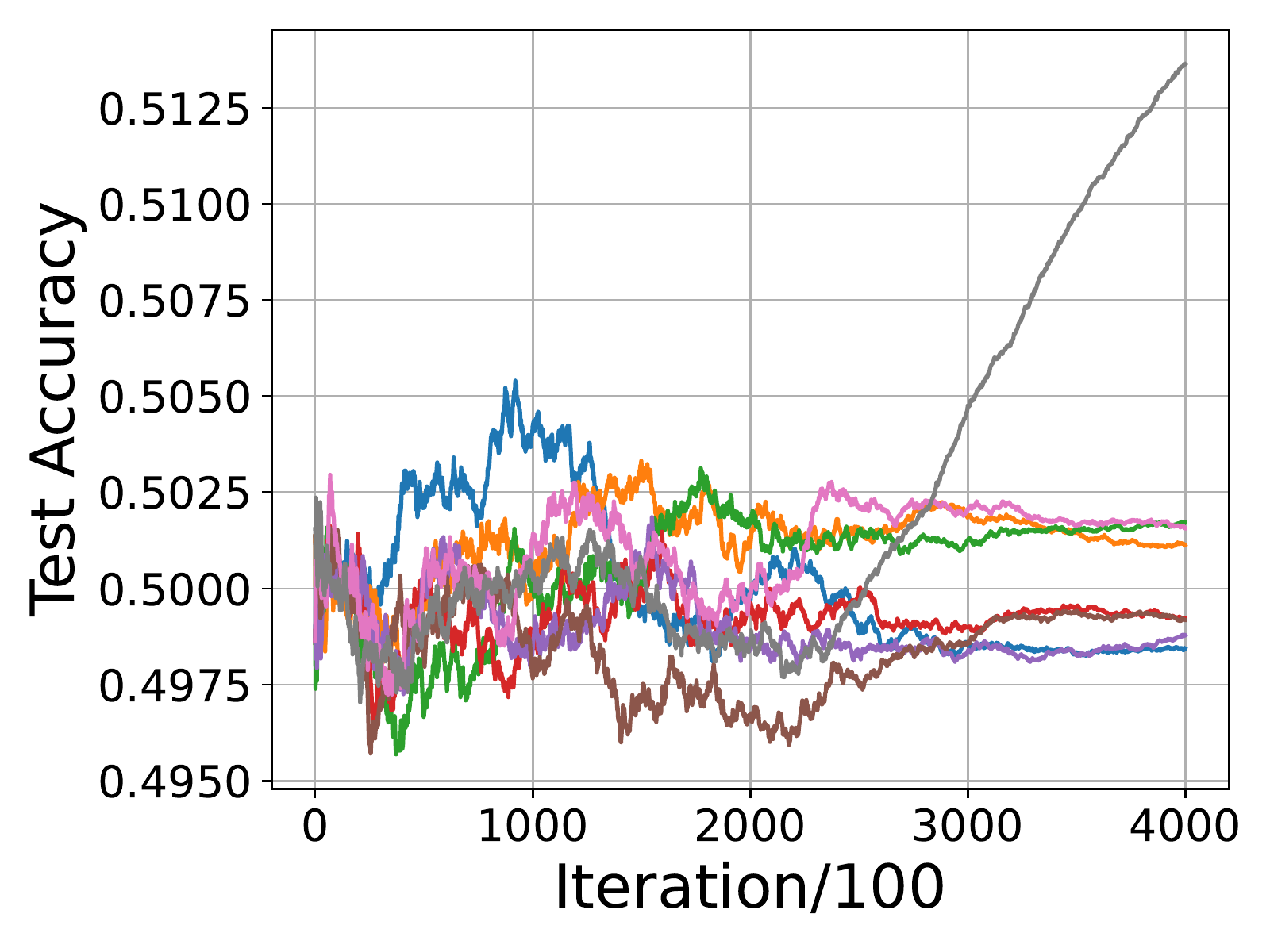}
        \caption{Test Accuracy}
    \end{subfigure}
    \caption{\textbf{Experiments on $\text{LPN}_{44,185362,0.2}$} The legend in this figure is the same as in~\Cref{fig:abundant_case}. We can observe a large deviation between the training and test accuracy curve. In fact, for the initialization that the accuracy boosts the highest, the accuracy increases after all the training data are correctly predicted, or in other words, memorized.}. 
    \label{fig:restricted_case}
\end{figure}

We first illustrate the common phenomena observed under~\Cref{setting:restricted} that guide us in designing~\Cref{alg:restricted}.

We conducted experiments with neural networks using~\Cref{alg:optim_example} under three setups specified in~\Cref{tab:restricted_setup}. The hyperparameters follows~\Cref{tab:hyperparameter,tab:shared_hyperparameter}, and the step threshold $\step$ is set to $300k$. To test the effectiveness of using sparse secrets (cf.~\Cref{lem:s_sparse}), we vary the sparsity of the secret (Hamming weight / dimension) in the third setup. We plot the test accuracy of the model with respect to training iterations in~\Cref{alg:abundant}. Notice here we test on LPN data directly hence the highest possible population accuracy is $80\%$.

\begin{Empirical}
    Under~\Cref{setting:restricted}, we find that (i) The sparse secret makes training easier. (ii) Initialization significantly affects the performance of the converged trained model. (iii) The reduction to the decision problem enables us to learn LPN instances with larger $n$. (iv) Typically there is a significant gap between training and test accuracy under this setting, which calls for methods to improve generalization.
\end{Empirical}

\paragraph{Sparsity of Secret} Comparing the first and the last row of~\Cref{fig:naive}, Algorithm \ref{alg:restricted} succeeded in solving $\LPN_{44,2^{17},0.2}$ with secrets of sparsity $0.2$ for two out of three datasets, while completely fail on solving $\LPN_{30,2^{17},0.2}$ with secrets of sparsity $0.5$. Hence, using sparse secret greatly reduce the hardness of learning. This implies we should always apply~\Cref{lem:s_sparse} to reduce the secret to a sparse secret when the noise level is low.

\paragraph{Initialization Matters} Graph (a),(b), and (c) in~\Cref{fig:naive} show that initialization can affect whether a neural network can distinguish the correct and the wrong guesses. This shows the necessity to try different initialization on the same dataset (possibly in parallel).

\paragraph{Necessity to Transfer to Decision Problem} Previously we have established that Algorithm \ref{alg:optim_example} can solve the decision version of $\LPN_{44,m,0.2}$. Graphs (d), (e), (f) show that even if we reduce $n$ to 29, direct learning cannot yield a model that perfectly simulates the LPN oracle without noise, in which case the test performance should get close to 0.8. However, all the trained models only yield an accuracy lower than $0.7$. Hence with the reduction, we can solve an LPN instance with larger $n$ with the same $m$ and $\tau$.

\paragraph{Divergence of Training and Test Accuracy} In~\Cref{fig:restricted_case}, we plot a run of~\Cref{alg:optim_example} on LPN data when the sample complexity is greatly limited. Because the size of the training set is very limited, the model easily reaches $100\%$ accuracy on the training set. This phenomenon is known as \emph{overfitting} in the machine learning community. However, one can observe that the test accuracy of one of the model in (b) boost after the model overfits the training set. Though strange at first sight, this phenomenon, known as \emph{grokking} where validation accuracy suddenly increases after the training accuracy plateau is common in deep learning experiments and repetitively appears in our experiments on LPN data. The exact implication of this phenomenon is still opaque. At the very least, we can infer from the figures that \emph{generalization}, instead of optimization is the key obstacle in~\Cref{setting:restricted}.

\subsubsection{Hyperparameter Selection}
\label{sec:restricted_hyper}

\begin{table}[t]
\centering
\begin{subtable}[t]{0.2 \textwidth}
\centering
    \begin{tabular}{ccc}
        \toprule
        1 & 2 & 3\\
        \midrule
        17 & 19 & $>$19\\
       \bottomrule
    \end{tabular}
    \caption{Depth}
    \label{tab:ablation_3_depth}  
\end{subtable}
\begin{subtable}[t]{0.4 \textwidth}
\centering
    \begin{tabular}{cccccc}
        \toprule
        500 & 800 & 1000 & 1200 & 1500 & 2000\\
        \midrule
        $>$19 & 17.5 & 17 & 17.5 & 18.5 & 17.5\\
        \bottomrule
    \end{tabular}
    \caption{Width}
    \label{tab:ablation_3_width}
\end{subtable}
\begin{subtable}[t]{0.30 \textwidth}
    \centering
    \begin{tabular}{ccc}
        \toprule
        ReLU & Sigmoid & COS\\
        \midrule
        17 & 18.5 & 18 \\
        \bottomrule
    \end{tabular}
    \caption{Activation}
    \label{tab:ablation_3_activation}
\end{subtable}

\begin{subtable}[t]{0.15 \textwidth}
    \centering
    \begin{tabular}{cc}
        \toprule
        MSE & Logistic \\
        \midrule
        18 & 17 \\
        \bottomrule
    \end{tabular}
    \caption{Loss}
    \label{tab:ablation_3_loss}
\end{subtable}
\begin{subtable}[t]{0.46 \textwidth}
    \centering
    \begin{tabular}{cccccc}
        \toprule
        0 & $5e-4$ & $1e-3$ & $2e-3$ & $3e-3$ & $6e-3$ \\
        \midrule
        17 & 17 & 16.5 & \textbf{16} & 16.5 & $>19$ \\
        \bottomrule
    \end{tabular}
    \caption{L2 regularization}
    \label{tab:ablation_3_l2}
\end{subtable}
\begin{subtable}[t]{0.25 \textwidth}
    \centering
    \begin{tabular}{ccc}
        \toprule
        Full & Full/2 & Full/4 \\
        \midrule
        17 & 17 & 17 \\
        \bottomrule
    \end{tabular}
    \caption{Batch Size}
    \label{tab:ablation_3_batch}
\end{subtable}

\begin{subtable}[t]{0.8 \textwidth}
    \centering
    \begin{tabular}{ccccccccc}
        \toprule
        $1e-5$ & $2e-5$ & $1e-4$ & $2e-4$ & $1e-3$ & $2e-3$ & $1e-2$ & $2e-2$ & $1e-1$\\
        \midrule
        18.5 & 17 & 17 & 18 & 18 & 18.5 & 18 & 18.5 & 19\\
        \bottomrule
    \end{tabular}
    \caption{Learning Rate}
    \label{tab:ablation_3_lr}
\end{subtable}
\caption{\textbf{Time Complexity w.r.t. Multiple Hyperparameters.} Each entry represents the logarithm of sample complexity with base $2$ for~\Cref{alg:restricted} with corresponding hyperparameters to solve $\text{LPN}_{44,m,0.2}$ with probability approximately $2/3$. The hyperparameter profile we considered in our experiments always shares all except the investigated hyperparameter as~\Cref{tab:shared_hyperparameter,tab:base_hyperparameter}.}
\label{tab:ablation_3}
\end{table}

In this subsection, we fix an LPN problem setup with $n = 44$ and $\tau = 0.2$ and illustrate the conclusion of our hyperparameter selection process based on~\Cref{alg:restricted_meta}. Recalling our goal is to find hyperparameters that minimize the sample complexity, we report $\log_2 \text{\# Sample}$ for our reduction algorithm to successfully return the secret with probability approximately equal to $2/3$ following the convention of~\cite{bogos2016solving}. The hyperparameter profiles we considered in our experiments are identical except for the investigated hyperparameter as~\Cref{tab:shared_hyperparameter,tab:base_hyperparameter}. The aggregated results are shown in~\Cref{tab:ablation_3}. 

\begin{Empirical}
Under~\Cref{setting:restricted}, the architecture of the model, learning rate, and L2 regularization are the most important components in determining the sample complexity. In the meantime, tuning batch size or applying other regularization methods are in general not helpful.
\end{Empirical}

\begin{table}[]
    \centering
    \begin{tabular}{|c|c|c|c|c|} 
    \hline  Sampler & Learning Rate & Weight Decay &  Batch Size &Step Threshold $\step$ \\
    \hline
      Batch & $2e-5$ & 0 & Size of Train Set & $300k$ \\
    \hline
    \end{tabular}
    \caption{Base Hyperparameters}
    \label{tab:base_hyperparameter}
\end{table}

\paragraph{Depth of Network}  In contrast to the trend of using \emph{deep} neural network, the depth of the network on LPN problem should not be large. Based on~\Cref{tab:ablation_3_depth}, we find depth $1$ neural network performs significantly better than any larger depth network.

\paragraph{Width of Network} Similar to the case in~\Cref{sec:abundant}, width of the network is critical, as shown in~\Cref{tab:ablation_3_width}. However, the blessing here is that the width is not very versatile in this setting. The width must exceed a lower bound for better performance while all the widths larger than that show similar performance. However, the optimal width is still a finite number, in this case, near $1000$.

\paragraph{Activation Function} We experiment with replacing the activation function of the first layer and find that while all the activation function can make~\Cref{alg:restricted} work given enough sample, the ReLU activation performs better than other candidates. The results are shown in~\Cref{tab:ablation_3_activation}.

\paragraph{Loss} We experiment with two types of loss, the MSE loss and the logistic loss in~\Cref{tab:ablation_3_loss}. Training using these two loss functions are usually referred to as \emph{regression} and \emph{classification} in machine learning literature. We find out that logistic loss performs slightly better than MSE loss.

\paragraph{Batch Size} One may naturally wonders why in~\Cref{setting:moderate} and~\Cref{setting:abundant}, we both use fix batch size, while in the~\Cref{setting:restricted}, we recommend using full batch training. One important issue to notice is that the batch size in the two other settings is typically larger than the total dataset itself in the current setting. In our experiments with smaller batch sizes,  the results are almost identical to those using the full training set size.

\paragraph{Learning Rate} We observe a similar phenomenon as in~\Cref{sec:abundant_hyper} for the learning rate, i.e., it should be neither too large nor too small to achieve the best performance. However, it is worth noticing that the best learning rate in this regime is typically a magnitude smaller than the best learning rate for~\Cref{alg:abundant}. We have two hypotheses for the reason behind this. First, loosely based on our theory prediction in~\Cref{thm:optimization}, a higher noise rate calls for a higher learning rate. Second, the dependency between the model weight and the sample we used for each step calls for a smaller learning rate to reduce the caveat of overfitting.

\paragraph{Weight Decay} As mentioned in~\Cref{sec:restricted_case}, generalization is the key obstacle for neural networks to learn to distinguish LPN data from random data in~\Cref{setting:restricted}. We experiment with several different regularization functions and find out that weight decay, or L2 regularization, is of significance in reducing the sample size. This coincides with previous theoretical prediction~\Cref{thm:wd} in a simpler setting. The best weight decay parameter we find for our settings is $2e-3$. We observe empirically that when increasing this parameter to $6e-3$, the weight of the model will quickly collapse to all zeroes. In this sense, the weight decay parameter needs to be large, but not to the extent that interferes with optimization.

\paragraph{Sparsity Regularization} We also try regularization methods like L1 regularization and architectures with hard-coded sparsity (by fixing some of the parameters to $0$ or using a convolutional neural network), the results are not shown in~\Cref{tab:ablation_3} because all these methods increase the required sample complexity by a large amount. Although in our~\Cref{thm:rep}, the ground truth model we constructed is sparse when the secret is sparse, the ways we utilize sparsity hurt the optimization performance of~\Cref{alg:optim_example}. It remains open whether one can design better architectures or find other regularization functions that better suit the LPN problems.

\subsection{Moderate Sample Setting}\label{sec:moderate}

In~\Cref{setting:moderate}, we aim at finding ways to use neural networks as a part of the classical \emph{reduction-decoding} scheme of LPN solving. We have observed in~\Cref{sec:abundant} that neural networks show resilience to noise when the dimension is small, which is further validated by~\Cref{thm:optimization}. It would then be tempting to use neural networks in the decoding phase where typically the dimension is low and the noise rate is high. However, to make neural networks feasible for solving the problem after reduction, we can no longer assume the number of training samples is infinite. Instead, we should utilize all the training samples generated after the reduction phases and try reducing the time complexity based on the samples we are given.

In this section, we use~\Cref{alg:moderate} to solve for the last $26$ bits of the secret for LPN instance  $\text{LPN}_{125,1.2e10,0.25}$ in \textbf{106 minutes}. As a comparison, in~\cite{DBLP:conf/crypto/EsserKM17}, more than 3 days are taken to recover the $26$ bits of the secret. We believe this result shed light on the potential of utilizing neural networks and GPUs as components of faster solving algorithms of LPN in practice.

In~\Cref{sec:moderate_case}, we show the exact time component for solving the last $26$ bits as well as important phenomenons we observe under~\Cref{setting:moderate}. In~\Cref{sec:moderate_hyper}, we mainly discuss the hyperparameter tuning required for the post-processing part of~\Cref{alg:moderate}. We will show that the hypothesis testing threshold $\tau'$ is not very versatile for the algorithm performance and hence is easy to tune.

\subsubsection{Case Study}
\label{sec:moderate_case}

We will first introduce how we solve for the last $26$ bits of the LPN instance $\text{LPN}_{125,1.2e10,0.25}$. On our server with 128 cores, we can reduce this problem to $\text{LPN}_{26, 1.1e8, 0.498}$ in 40 minutes. We will then enumerate the last $6$ bits of the secret to further reduce the problem to $\text{LPN}_{20, 1.1e8, 0.498}$ and apply our~\Cref{alg:moderate} on it. We specified the time constraint $\rtime$ in~\Cref{tab:hyperparameter} for~\Cref{alg:moderate} as $20$ minutes and hypothesis testing threshold $\tau'$ as $0.483$ under this setting. We further restricted the running time of the rebalance and post-processing step by $2$ minutes. By using $8$ 3090 GPUs to enumerate secret in parallel, we try the correct postfix in the third round of enumeration and the post-processing step return the right secret in the pool of secrets. The applications of enumeration and~\Cref{alg:moderate} take less than $66$ minutes. In total, we solve for the last $26$ bits in less than $106$ minutes. 

In designing~\Cref{alg:moderate}, two primary factors affect the performance. The first factor is the probability of~\Cref{alg:optim_example} procedure returns model with high accuracy on test data, which we dub as \emph{success rate}. The second factor is the accuracy the model can reach given the time limit under such a scenario, which we dub as \emph{mean accuracy}.  The repeat number for starting with different initializations $repeat$ mainly depends on the success rate and the mean accuracy affects the time complexity of our post-processing step. We will now introduce some important phenomenons we observe in applying~\Cref{alg:optim_example} on LPN instance with dimension $20$ and error rate $0.498$ that guides us in selecting the hyperparameters of~\Cref{alg:moderate}.

\begin{Empirical}
    Under~\Cref{setting:moderate}, we find that (i) Both the success rate and the mean accuracy increases with the size of the training set. (ii) An imbalance in the output distribution exists in all models we trained in this setting (predicting one outcome with a significantly higher probability than the other over random inputs) (iii) Different than the case in~\Cref{setting:restricted}, the training accuracy typically plateau at a low level.
\end{Empirical}

\paragraph{The Size of Training Set}
We first show sample complexity affects these two factors mutually.
As most of the samples consumed are used to perform~\Cref{alg:optim_example}, we experiment with varying the sample in~\Cref{tab:ablation_3_m}. It is observed that both the success rate and the mean accuracy increase with the sample complexity. Under the scenario where $1e8$ samples are provided, the success rate reaches $87.5\%$, allowing us to set $repeat$ as 1 to get a final algorithm that succeeds with high probability. Given the randomness here is over the initialization of the model instead of the dataset itself, our results also show that we can compensate for the lack of samples by trying more initializations. 

\begin{table}[h]
    \centering
    \begin{tabular}{ccc}
        \toprule
        $m$ & \quad Success Rate & \quad Mean Accuracy \\
        \midrule 
        $4e7$ & $25.0\%$ & $51.1\%$\\
        $6e7$ & $62.5\%$& $51.7\%$ \\
        $8e7$ & $62.5\%$& $51.8\%$\\
        $1e8$& $87.5\%$ & $52.2\%$ \\
        \bottomrule
    \end{tabular}
    \caption{\textbf{Success Rate and Mean Accuracy w.r.t. Training Set Size $m$ on $\text{LPN}_{20,m,0.498}$}.
    We define a successful run as a model with random initialization that converges to a model with an accuracy greater than $51.5\%$ on clean data in the time limit. The success rate means the probability of a run being successful over the randomness of initialization. The mean accuracy is only calculated on successful runs.
    We clearly observe that with an increasing number of samples, both the success rate and the mean accuracy increase significantly.}
    \label{tab:ablation_3_m}
    \vspace{-0.3in}
\end{table}

\paragraph{Discussion on Output Distribution} 
Reader may find the rebalance step in~\Cref{alg:moderate} seemingly unnecessary. However, we observe in experiments that the trained neural network typically has a bias towards $1$ or $0$. The proportion of two different results given random inputs can be as large as $1.5$. This fact makes the testing of Gaussian Elimination steps hard and leans toward providing a wrong secret. We mitigate this effect by performing the rebalance step and believe that if this phenomenon can be addressed through better architecture designs or training methods, the performance of neural network based decoding method can be further boosted.

\paragraph{Optimization Obstacle} We plot the training and test accuracy curve for applying~\Cref{alg:optim_example} over $\text{LPN}_{20,1e8,0.498}$ in~\Cref{fig:moderate_case}. Here the test is performed on clean data to provide better visualization. Different from~\Cref{setting:restricted}, where the training set is greatly limited and the network can overfit the training set, here we observe that the network can hardly fit the training data.
This fact has two sides. Firstly, it shows that overfitting ceases given enough samples and generalization won't be a severe issue. However, as we observe in~\Cref{fig:restricted_case}, the generalization performance may boost when overfitting happens, hence the low training accuracy may also be a reason behind the low final converged accuracy.

\begin{figure}[h]
    \centering
    \begin{subfigure}{0.49\textwidth}
    \centering
        \includegraphics[scale = 0.35]{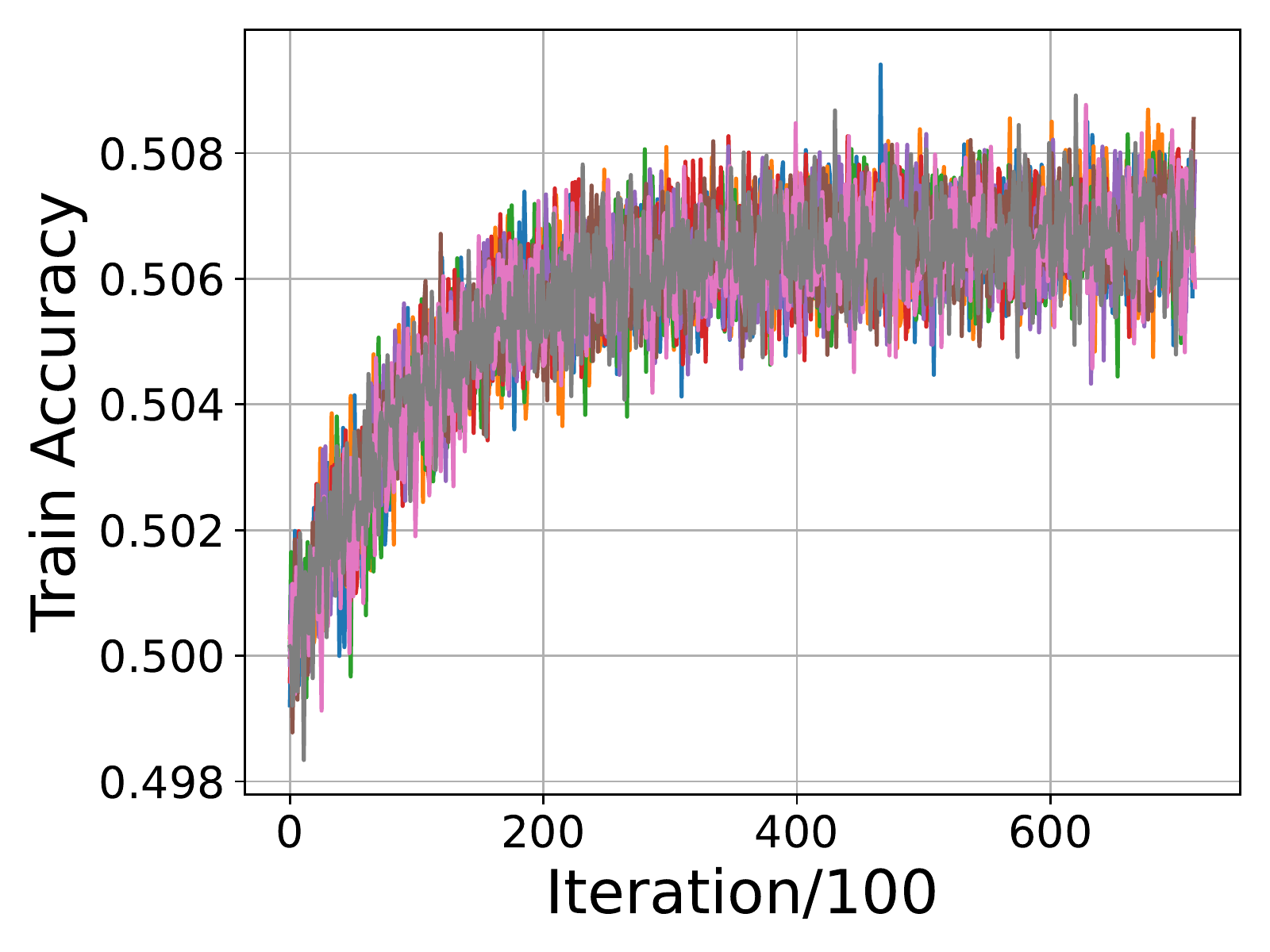}
        \caption{Training Accuracy}
    \end{subfigure}
    \begin{subfigure}{0.49\textwidth}
    \centering
        \includegraphics[scale = 0.35]{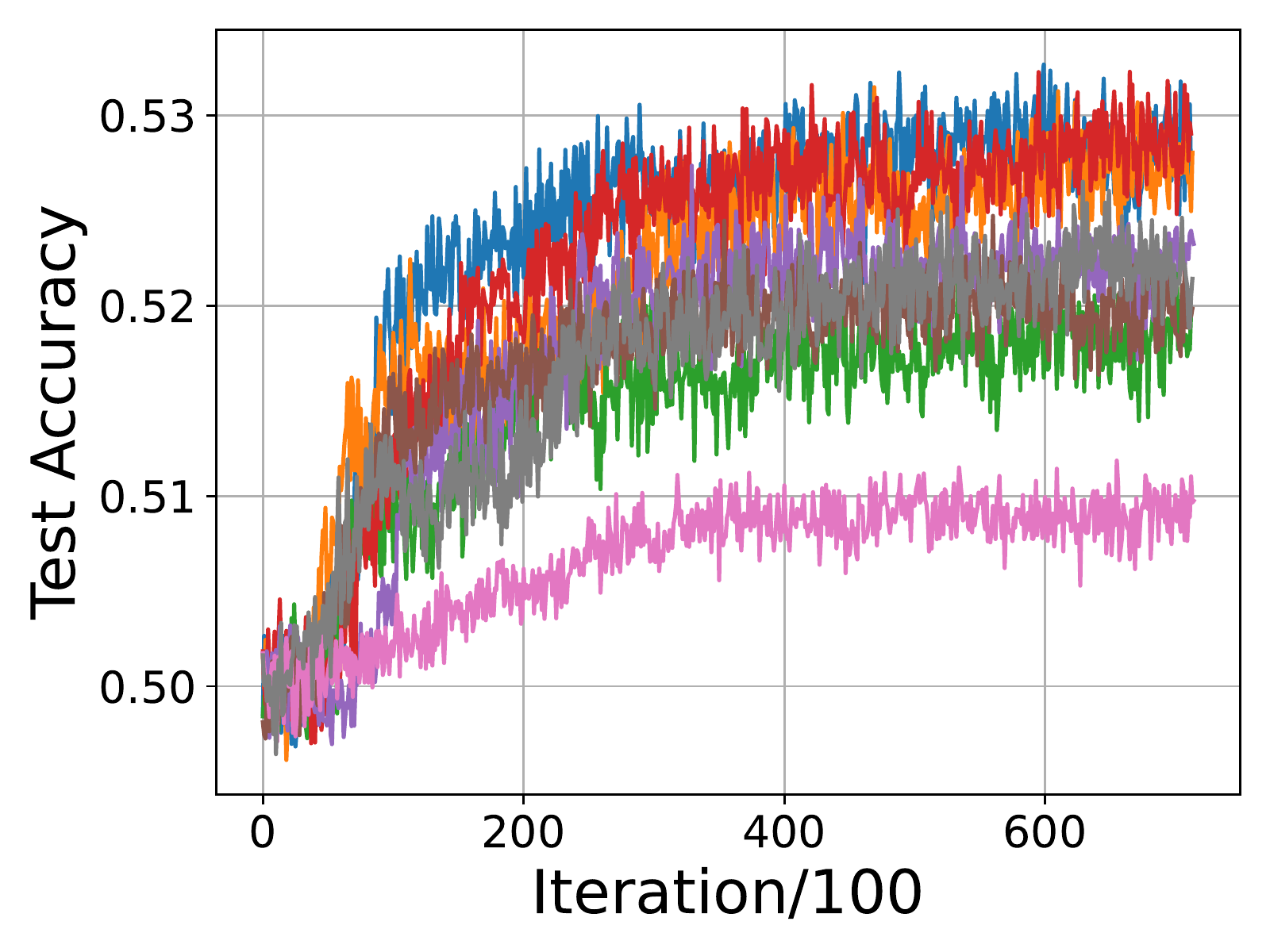}
        \caption{Test Accuracy}
    \end{subfigure}    \caption{\textbf{Experiments on $\text{LPN}_{20,1e8,0.498}$} We observe that the training accuracy is low, showing a vast contract with~\Cref{setting:restricted}. One should notice here the test accuracy is over clean data while the training accuracy is over the noisy training set.}. 
    \label{fig:moderate_case}
\end{figure}

\subsubsection{Hyperparameter Selection}
\label{sec:moderate_hyper}

For the ease of hyperparameter tuning, we use hyperparameters for~\Cref{alg:optim_example} we found in~\Cref{sec:abundant_hyper} and our experiments show that this choice can already let us get models with enough accuracy to perform boosting. Hence here we will focus on discussing the hyperparameters for the post processing the Gaussian Elimination step.

\begin{Empirical}
Under~\Cref{setting:moderate}, we find that (i) Hyperparameters for the network training can follow the hyperparameters under~\Cref{setting:abundant}. 
(ii) The hypothesis testing threshold $\tau$ in~\Cref{alg:moderate} has large tolerance and has little impact on time complexity even when it's $1\%$ larger than the ground-truth error rate in the test set for the post processing step.
\end{Empirical}

\paragraph{Hypothesis Testing Threshold} In the post processing step, the samples are labeled by the neural networks and are in this sense \emph{free}. Under our experiments, we choose the size of boosting set $m' = 231072$, with $131072$ samples as the sample pool where Gaussian elimination input are sampled and $100000$ samples for hypothesis testing, which are more than sufficient for the estimated error rate $\approx 0.48$.

The remaining question for hyperparameter tuning may be that how should we set the hypothesis testing threshold given that it may be hard to estimate on the fly. We propose to use a meta run to estimate the converged accuracy. As one can observe that from~\Cref{fig:moderate_case}, the final converged accuracy of successful run (with final accuracy greater than $51.5\%$) center closely. 

We further show through experiments that the hypothesis testing threshold $\tau'$ in the Gaussian step are not critical for the performance of~\Cref{alg:moderate} in~\Cref{tab:ablation_3_tau}. The estimation of hypothesis threshold can be off by almost $1\%$ while still returning the correct secret in $20$ repetitive pooled Gaussian run. However, the results in this table also shows that the final testing on noisy real data is necessary as all the pools of secrets contain wrong secrets. The reader should notice that the final testing step takes almost negligible time, as for each model, at most $20$ secrets need to be tested.

\begin{table}[h]
    \centering
    \begin{tabular}{cccc}
        \toprule
         0.480 & 0.483  & 0.486 & 0.489 \\
        \midrule 
        6 & 6 & 1 & 2  \\
        \bottomrule
    \end{tabular}
    \caption{\textbf{Number of Occurrences of the Correct Secret in 20 runs w.r.t. Estimate Error Rate $\tau'$} A subtlety in hyperparameter tuning for~\Cref{alg:moderate} is that in general $\tau'$ is hard to know precisely. We propose to use a meta run to first estimate $\tau'$ of the converged model. As one can infer from~\Cref{fig:moderate_case}, the test accuracy of the converged model in successful runs tends to be stable. With the ground truth error rate in the test set being about $0.479$, our experiments shows that in fact the post-processing step has high tolerance for the hypothesis testing threshold, ranging from $0.480$ to $0.489$.} 
    \label{tab:ablation_3_tau}
\end{table}

\section{Theoretical Understanding}
\label{sec:theory}

In this section, we will show some primary efforts on understanding the effect of neural networks on LPN problems.
Because the general understanding of the optimization power and generalization effect of the neural network is very limited, the results in this 
section can't fully explain all the features of our algorithms and our hyperparameter choices. 
However, this theoretical analysis improves our understanding of our empirical findings and hence is recorded here.

This section is organized according to the decomposition in~\Cref{eq:decompose}. 
\Cref{sec:theory_representation} shows that the representation gap of the model architecture MLP over all loss is optimal over the whole function class $f:\R^d \to [0,1]$.
\Cref{sec:theory_optimization} shows that despite the inevitable exponential dependency on the dimension, the time complexity of~\Cref{alg:abundant_theory} scaled optimally with respect to noise.
\Cref{sec:theory_generalization} introduces some prior results that partially explain why weight decay is a powerful regularization as discovered by our ablation study in~\Cref{sec:restricted}.  \Cref{sec:theory_hardness} provides a detailed discussion on the hardness of LPN problem using gradient-based methods.

\subsection{Representation Power}
\label{sec:theory_representation}

The main result of this section is the following theorem, which shows that an MLP with a width equivalent to the input dimension is sufficient to represent the best prediction of the LPN inputs.

\begin{theorem}\label{thm:rep}
    For any continuous loss function $l: [0,1] \times \{0,1\} \to \R^+$, dimension $n$ and error rate $\tau$, there exists a weight $\weight$ for a depth 1 MLP $\model$ with width $n$, $\sigma_1 = ReLU$, $\sigma_2 = \sigmoid$, such that the representation gap $\populoss(f)$ 
    of the specified function $f = \model[\weight]$ is approximately minimized over the function class $\functionclass = \{ f':\R^d \to [0,1] \}$. Quantitatively, for any $\epsilon > 0$, there exists $\weight_{\epsilon}$, such that 
    \begin{align*}
        \populoss(\model(\weight_\epsilon)) - \min_{f \in \functionclass} \populoss(f) \le \epsilon.
    \end{align*}
\end{theorem}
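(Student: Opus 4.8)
The plan is to first pin down the minimizer over the full function class $\functionclass = \{f' : \R^n \to [0,1]\}$, and then exhibit an explicit width-$n$ $\relu$--$\sigmoid$ network that matches it on the support of the input distribution, the only place the population loss sees.

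First I would characterize the Bayes-optimal predictor. Since the query $\ary{a}$ is uniform over $\{0,1\}^n$ and the conditional law of the label $y$ given $\ary{a}$ depends only on the parity bit $c = \ipd{\ary{s}}{\ary{a}} \bmod 2$ (the label equals $c$ with probability $1-\tau$ and $1-c$ with probability $\tau$), the population loss decouples pointwise. Setting
\begin{equation*}
p^*(c) = \argmin_{p \in [0,1]} \left[(1-\tau)\,\ell(p,c) + \tau\,\ell(p,1-c)\right], \qquad c \in \{0,1\},
\end{equation*}
which exists because $\ell$ is continuous on the compact set $[0,1]$, the function $f^*(\ary{a}) = p^*(\ipd{\ary{s}}{\ary{a}} \bmod 2)$ attains $\min_{f \in \functionclass}\populoss(f)$: an arbitrary $f \in \functionclass$ may be optimized independently at each point of the support $\{0,1\}^n$, and $f^*$ already equals that pointwise optimum. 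So it suffices to realize $f^*$, up to $\epsilon$, with the prescribed architecture.

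The heart of the argument is the width bound. Here I would exploit that $f^*$ depends on $\ary a$ not through the full mod-$2$ parity function over $\{0,1\}^n$ — which would seem to demand exponential width — but only through the single linear feature $z = \ipd{\ary{s}}{\ary{a}}$, viewed as an integer in $\{0,1,\ldots,k\}$ with $k = \HW(\ary s) \le n$. I set every row of $\weight_1$ equal to $\ary s^{\!\top}$ (with distinct biases), so each hidden unit computes $\relu(z - j)$, a function of $z$ alone. I then pick the thresholds and output layer so that the pre-sigmoid value is a continuous piecewise-linear ``triangle wave'' $\phi$ with $\phi(j) = v_{j \bmod 2}$ at every integer $j \in \{0,\ldots,k\}$, where $v_c = \sigmoid^{-1}(p^*(c))$. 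Concretely $\phi(z) = v_0 + (v_1 - v_0)\,g(z)$ with $g(z) = z + \sum_{j=1}^{k-1} 2(-1)^j \relu(z-j)$, whose slope flips at each integer so that $g$ lands on $0$ at even $j$ and $1$ at odd $j$; the linear term $z$ is supplied by one extra unit $\relu(z - t_0)$ with $t_0 < 0$ (active on all of $z \ge 0$), and the overall constant by the output bias $b_2$. This uses $(k-1)+1 = k \le n$ hidden units, padding the remainder with zero weights. Because the input distribution is supported on integer values of $z$, the behaviour of $\phi$ between integers is irrelevant, so $\model[\weight](\ary a) = \sigmoid(\phi(z)) = p^*(c_{\ary a})$ for every $\ary a$ in the support, giving $\populoss(\model[\weight]) = \min_{f \in \functionclass}\populoss(f)$ exactly.

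The one remaining obstacle is that $v_c = \sigmoid^{-1}(p^*(c))$ is infinite when $p^*(c) \in \{0,1\}$. I would handle this by replacing $v_c$ with a large finite value, so that $\model[\weight_\epsilon](\ary a)$ lies within some $\delta$ of $p^*(c_{\ary a})$; since $\ell(\cdot,0)$ and $\ell(\cdot,1)$ are uniformly continuous on $[0,1]$, choosing $\delta$ small enough forces the excess population loss below $\epsilon$, yielding the quantitative statement. The genuinely conceptual difficulty is exactly the width claim — recognizing that the $n$-dimensional parity target collapses, through the linear projection $z = \ipd{\ary s}{\ary a}$, to a one-dimensional integer-parity (square-wave) function representable with only $\HW(\ary s)$ kinks. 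The boundary $\epsilon$-bookkeeping and the verification that the slopes flip correctly at each integer are routine.
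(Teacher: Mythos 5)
Your proposal is correct and takes essentially the same route as the paper's proof: a pointwise Bayes-optimality lower bound under the conditional label distribution, followed by an exact width-$n$ realization of the integer parity of $z=\ipd{\ary{s}}{x}$ using ReLU units whose incoming weights all equal $\ary{s}^t$ with staggered biases, then an affine rescaling and shift before the sigmoid so the output hits near-optimal values in the interior of $(0,1)$. Your closed-form triangle-wave coefficients $(1,-2,2,-2,\dots)$ are precisely the solution of the recursion defining $\weight_2$ in the paper's Lemma~\ref{lem:rep}, and your $\delta$-perturbation for the case $p^*(c)\in\{0,1\}$ is the same move as the paper's choice of $\epsilon$-near-optimal interior values $a_{b,\epsilon}\in(0,1)$.
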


\begin{proof}\label{proof:rep}
    For LPN problems with dimension $n$, secret $s$ and error rate $\tau$, it holds that 
    \begin{align*}
        \populoss(f) &= \E_{x \sim U(\Z_2^n)}[\tau \ell(f(x), s^t x \mod 2) + (1 - \tau) \ell(f(x), (s^t x + 1) \mod 2)] \\
        &\ge \E_{x \sim U(\Z_2^n)}[\min_{p \in [0,1]} \tau \ell(p, s^t x \mod 2) + (1 - \tau) \ell(p, (s^t x + 1) \mod 2) ]. \\
    \end{align*}

    We will now show that in fact this lower bound can be reached approximately when $f$ is specified by a weight $\weight$ for a depth 1 MLP $\model$ with width $n$, $\sigma_1 = \relu$, $\sigma_2 = \sigmoid$.

    By~\Cref{lem:rep}, there exists a weight $\weight'$  for a depth 1 MLP $\model'$ with width $n$, $\sigma_1 = \relu$, $\sigma_2 = \identity$ such that
    \begin{align*}
        \forall x \in \{0,1\}^n, \model'[\weight'](x) = s^tx \mod 2.
    \end{align*}

    Given $l$ is continuous, for $b \in \{0,1\}$, we can find $a_{b,\epsilon} \in (0,1)$, such that
    \begin{align*}
        \tau \ell(a_{b,\epsilon}, b) + (1 - \tau) \ell(a_{b,\epsilon}, 1 - b) \le \min_{p \in [0,1]} \ell(p, b) + (1 - \tau) \ell(p, 1 - b)  + \epsilon.
    \end{align*}

    Further define $\gamma_{b,\epsilon}$ satisfies $\sigmoid(\gamma_{b,\epsilon}) = a_{b, \epsilon}$.
    For $\weight$ satisfying
    \begin{align*}
        \weight_1 &= \weight'_1, \\
        b_1 &= b'_1, \\
        \weight_2 &= (\gamma_{1,\epsilon} - \gamma_{0,\epsilon})\weight'_2, \\
        b_2 &= \gamma_{0,\epsilon},
    \end{align*}
    it holds that $\model[\weight](x) = a_{s^tx \mod 2, \epsilon}$. This implies $\populoss(\model[\weight]) \le \epsilon + \min_{f \in \functionclass} \populoss(f)$. The proof is complete.
\end{proof}

The above proof relies on~\Cref{lem:rep}, whose proof is as follows.

\begin{lemma}\label{lem:rep}
    For any secret $s \in \{0,1\}^n$, there exists a weight $\weight$ for a depth 1 MLP $\model$ with width $n$, $\sigma_1 = \relu$, $\sigma_2 = \identity$, such that
    \begin{align*}
        \forall x \in \{0,1\}^n, \model[\weight](x) = s^tx \mod 2  
    \end{align*}
\end{lemma}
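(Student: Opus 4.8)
The plan is to reduce the problem to representing the parity of a single integer-valued quantity, and then to realize that parity as a piecewise-linear triangle wave assembled from $\relu$ units. First I observe that for $x \in \{0,1\}^n$ the target $s^t x \bmod 2$ is precisely the parity of the integer $t := \sum_{i=1}^n s_i x_i = s^t x$, which, as $x$ ranges over $\{0,1\}^n$, takes values only in $\{0,1,\ldots,\HW(s)\}$. So it suffices to build a depth-$1$ MLP whose output on these inputs equals $t \bmod 2$, and I may freely ignore the behavior of the network at non-reachable integer values of $t$.

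The key structural observation is that every hidden neuron can share the \emph{same} first-layer weight row, namely $s^t$, and differ only through its bias. Concretely I set each row of $W_1$ equal to $s^t$ and take $b_1 = (0,-1,-2,\ldots,-(n-1))^t$, so that the $i$-th hidden unit computes $\relu(s^t x - (i-1)) = \relu(t - (i-1))$. With $\sigma_2 = \identity$ and $b_2 = 0$, it then remains only to choose the output weights $W_2$ so that the signed combination of these shifted $\relu$'s equals $t \bmod 2$ on the reachable integers.

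For this I invoke the triangle-wave identity $g(t) = \relu(t) + 2\sum_{j\ge 1}(-1)^j \relu(t-j)$, whose slope is $+1$ on $(0,1)$ and flips sign at each positive integer; one verifies directly that $g(k) = k \bmod 2$ for every integer $k \ge 0$. Reading off the coefficients, I set $W_{2,1} = 1$ for the breakpoint-$0$ unit and $W_{2,i} = 2(-1)^{i-1}$ for the units with breakpoints $1,2,\ldots,\HW(s)-1$, leaving all remaining output weights equal to $0$.

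The one point that genuinely requires care is the width budget, and I expect it to be the main (though modest) obstacle. Since $t$ never exceeds $\HW(s)$, the value $g(k)$ at any reachable $k$ depends only on the $\relu$ terms with breakpoint $j \le k-1 \le \HW(s)-1$; hence the breakpoints $0,1,\ldots,\HW(s)-1$ — that is, exactly $\HW(s)$ hidden units — already reproduce the parity on the whole reachable set. Because $\HW(s) \le n$, this fits inside a width-$n$ hidden layer, with the unused units padded by zero output weights (the degenerate case $s = \mathbf{0}$ being handled by the all-zero network, which correctly outputs $0$). This is where one must exploit that the inner product is bounded by the Hamming weight rather than by the ambient dimension, so that the number of required breakpoints does not overflow the allotted width.
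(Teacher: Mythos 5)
Your proof is correct and is essentially the paper's own construction: identical first layer ($W_1 = es^t$ with biases $0,-1,\ldots,-(n-1)$), and your closed-form triangle-wave coefficients $W_{2,1}=1$, $W_{2,i}=2(-1)^{i-1}$ are exactly what the paper's recursion $\weight_{2,i} = i \bmod 2 - \sum_{j=1}^{i-1}\weight_{2,j}(i-j+1)$ produces. Your additional observation that only $\HW(s)$ hidden units are strictly needed (padding the rest with zero output weights) is a harmless, slightly sharper accounting of the width budget than the paper's, which simply uses all $n$ units.
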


\begin{proof}
    Recall in~\Cref{def:mlp}, the weight $\weight$ is a list of two matrices tuples specifying the affine transformation on each layer. 
    
    We will choose $\weight_1 = es^t$ with $e$ as the all $1$ $n-$dimensional vector, $b_1 = [0, -1 ,...,-n + 1]^t$. Then we would have 
    \begin{align*}
        \sigma_1(T[\weight_1, b_1](x)) = \sigma_1((s^tx) e - b_1) = [\max\{(s^tx) - i + 1, 0\}]^t_{i \in [1:n]}. 
    \end{align*}

    We will further define $b_2 = 0$ and recursively $\weight_2 \in \R^n$ as 
    \begin{align*}
        &\weight_{2,1} = 1. \\
        &\weight_{2,i} = i \mod 2 - \sum_{j = 1}^{i - 1} \weight_{2,j} (i - j + 1). \quad d -1 \ge i \ge 2
    \end{align*}

    It is then easy to check
    \begin{align*}
        T[\weight_2, b_2]\left( \sigma_1(T[\weight_1, b_1](x)) \right) = \sum_{i = 1}^n \weight_{2,i} \max\{(s^tx) - i + 1, 0\} = s^tx \mod 2.
    \end{align*}
    The proof is then complete.
\end{proof}

\subsection{Optimization Power}
\label{sec:theory_optimization}
While~\Cref{sec:theory_representation} shows that an MLP with width $n$ is sufficient to minimize the representation gap, it does not 
show how may we find such representation. As mentioned in~\Cref{sec:prelim}, it is common practice to use the gradient method to optimize 
neural networks. Pitifully,~\cite{abbe2020poly} has shown that it requires at least $n^{\Omega(\text{Hamming weight of secret})}$ time complexity to 
use any gradient method to solve LPN instance.~\cite{barak2022hidden}  shows through experimental and theoretical analysis 
that the time complexity to solve LPN instance with noise rate $0$ (known as the \emph{parity problem}) with neural networks seems to match this lower 
bound closely. In this section, we show that despite this exponential reliance on the problem dimension, the training of neural networks is robust to 
the noise rate $\tau$ in~\Cref{setting:abundant} by the following theorem.

\begin{theorem}
\label{thm:optimization}
    If there exists weight initialization $\weight_0$,  learning rate $\eta$, weight decay parameter $\lambda < 1/\eta$, and step threshold $T$ such that~\Cref{alg:abundant_theory} can return a model with 
    accuracy at least $\gamma > \frac{1}{2}$ on LPN problem with dimension $n$, secret $s$ and noise rate $0$, for any batch size $B$ greater than threshold $B_{th}$ with constant probability $p > 0$, then for any $\gamma' < \gamma, p' < p$,
    there exists another threshold $B_{th, \tau} = \max \left(O(\frac{1}{(1 - 2\tau)^2}) , B_{th} \right)$, such that~\Cref{alg:abundant_theory} can return a model with 
    accuracy at least $\gamma' \tau + (1 - \gamma')(1 - \tau)$ on LPN problem with dimension $n$, secret $s$ and noise rate $\tau$ when batch size $B \ge B_{th, \tau}$ and learning rate $\eta' = \frac{\eta}{1 - 2\tau}$ with probability $p'$ with all other hyperparameters fixed. 
\end{theorem}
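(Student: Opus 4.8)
The plan is to exploit a simple but crucial algebraic identity for the gradient of the mean absolute error loss, which shows that injecting label noise only rescales the expected gradient by the factor $1-2\tau$. First I would record the following observation. Since $\sigma_2 = \sigmoid$, the network output satisfies $f(x) = \model[\weight](x) \in (0,1)$, while every label lies in $\{0,1\}$; hence the subgradient of $\maeloss(f(x),y) = |f(x)-y|$ is well defined and equals $\sgn(f(x)-y) = 1-2y$. Writing the noisy label as $y = b \oplus e$ with $b = s^t x \mod 2$ and $e \sim \eta_\tau$, one checks $1-2y = (1-2e)(1-2b)$, so the per-sample gradient factorises as
\[
\nabla_\weight \maeloss(f(x),y) = (1-2e)(1-2b)\,\nabla_\weight f(x) = (1-2e)\,\nabla_\weight \maeloss(f(x),b).
\]
Taking expectation over the noise and using $\E_e[1-2e] = 1-2\tau$ shows that the expected noisy per-sample gradient equals $(1-2\tau)$ times the clean one. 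Consequently, running \Cref{alg:sgd} on the noisy problem with learning rate $\eta' = \eta/(1-2\tau)$ reproduces, in expectation, exactly the clean gradient step taken with learning rate $\eta$.

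Next I would turn this expectation statement into a trajectory coupling. I would run the clean process (noise rate $0$, learning rate $\eta$) and the noisy process (noise rate $\tau$, learning rate $\eta'$) from the common initialization $\weight_0$, feeding both the \emph{same} batch of fresh inputs $\{x_i\}_{i\in[B]}$ at each step while drawing independent noise $\{e_i\}$ for the noisy run. Using the factorisation above, whenever the two runs sit at the same weight the difference between the noisy update $\eta'\,g^{\mathrm{noisy}}$ and the clean update $\eta\,g^{\mathrm{clean}}$ is $\frac{\eta}{1-2\tau}\cdot\frac1B\sum_i\bigl((1-2e_i)-(1-2\tau)\bigr)(1-2b_i)\nabla_\weight f(x_i)$, an average of $B$ independent mean-zero terms of bounded norm. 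A vector Hoeffding/Bernstein bound then gives a per-step deviation of order $\frac{1}{(1-2\tau)\sqrt{B}}$ (up to a bound on $\|\nabla_\weight f\|$, which holds because inputs lie in $\{0,1\}^n$, $\sigma$ is smooth, and the iterates stay bounded over the finite horizon). Forcing this below a target $\eta\epsilon$ requires $B = \Omega\!\left(\tfrac{1}{(1-2\tau)^2\epsilon^2}\right)$, which is precisely the origin of the $O\bigl((1-2\tau)^{-2}\bigr)$ term in $B_{th,\tau} = \max\bigl(O((1-2\tau)^{-2}),B_{th}\bigr)$.

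I would then propagate this per-step closeness across all $T$ iterations by a standard stability argument: the expected update map is Lipschitz in $\weight$ (again by smoothness of $\sigma$ and boundedness of the iterates), so a union bound over the $T$ steps, together with a sufficiently small $\epsilon$, keeps the final noisy weight within any prescribed tolerance of the clean weight on a concentration event of probability $\ge 1-\delta$. Intersecting this event with the clean-success event (probability $p$ by hypothesis) and shrinking $\delta$ by enlarging $B$ yields success of the noisy run with probability $\ge p-\delta \ge p'$. Finally, continuity of $f$ in $\weight$ turns weight-closeness into prediction-closeness, so the clean accuracy of the noisy model is at least $\gamma' < \gamma$; translating a clean accuracy of $\gamma'$ into accuracy against noisy labels (a correct clean prediction survives the noise with probability $1-\tau$, an incorrect one with probability $\tau$) produces the claimed threshold.

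The hardest part will be the bookkeeping of the weight-decay term together with the error accumulation over $T$ steps. The weight-decay contribution $-\eta\lambda\weight$ is \emph{deterministic} and therefore does not pick up the $(1-2\tau)$ factor, so under the rescaled learning rate the noisy decay $-\eta'\lambda\weight$ and the clean decay $-\eta\lambda\weight$ differ by a non-vanishing amount that no batch size can suppress; one must argue (using $\lambda<1/\eta$ and the finite horizon $T$, or simply specialising to the $\lambda=0$ configuration of the theory row of \Cref{tab:shared_hyperparameter}) that this discrepancy does not derail the coupling. Equally delicate is ensuring the stability constant does not blow up exponentially in $T$, which is the usual obstruction when coupling two stochastic-gradient trajectories and is what ultimately constrains how small $\epsilon$—and hence how large $B$—must be.
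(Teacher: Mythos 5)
Your proposal is correct and takes essentially the same route as the paper's proof: the same sign-flip identity for the MAE gradient under label noise (the paper's \Cref{lem:estimate_gradient}), the same Hoeffding concentration yielding $B = \Omega\left(\frac{1}{(1-2\tau)^2\epsilon^2}\right)$, the same coupling of the clean and noisy trajectories on shared input batches with rescaled learning rate $\eta/(1-2\tau)$, and the same Lipschitz propagation over $T$ steps with an exponential-in-$T$ stability constant absorbed by choosing $\epsilon \le \frac{r}{2}e^{-C_2 T}$, followed by the same probability bookkeeping against the clean-success event and the same clean-to-noisy accuracy conversion. The weight-decay discrepancy you flag is genuine --- the rescaled learning rate multiplies $\lambda \weight$ as well, producing a per-step drift of order $\frac{2\tau}{1-2\tau}\eta\lambda\|\weight_{t,\tau}\|$ that no batch size can suppress and that the paper's induction silently omits --- so on this point your write-up is actually more careful than the paper's own proof, which in effect only handles $\lambda = 0$.
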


As the time complexity of~\Cref{alg:abundant_theory} is $O(BT)$,~\Cref{thm:optimization} shows that with neural network can solve the LPN problem with noise rate $\tau$ in time complexity $O(\frac{1}{(1 - 2\tau)^2})$ under~\Cref{setting:abundant}, 
given the underlying parity problem can be solved by the same neural network with constant probability. This rate coincides with the sample complexity of hypothesis testing on whether a boolean vector is $s$ and is in this sense \emph{optimal}.

One would naturally ask whether corresponding results exist under the two other settings. We observe empirically that as apposed to the high converging accuracy in~\Cref{setting:abundant}, under~\Cref{setting:moderate}, the final converging model would have low train accuracy (though greater than $50\%$) despite the training time.
In~\Cref{setting:abundant}, however, the training accuracy can tend to $1$ in a short period of time while the testing accuracy increases slowly, in many cases after the convergence of training accuracy. 
These interesting phenomenons show that the optimization dynamics may be highly different under the three settings and it remains an open question to fully characterize 
how different sample complexity shapes the optimization landscape of neural networks in the LPN problem.

\begin{proof}[Proof of~\Cref{thm:optimization}]

    We will denote the weight sequence generating by applying~\Cref{alg:abundant_theory} with batch size $B$ and learning rate $\eta$ on LPN instance with dimension $n$ and noise rate $\tau$ as $W_t$.
    Assuming the corresponding batch is $\dataset_t = \{(x_{t,i}, y_{t,i})\}$ for $t \le T$. 
    We will further define $\dataset_{t,\tau} = \{(x_{t,i}, y_{t,i} + f_{t,i} \mod 2)\}$ where $f_{t,i}$ are independent boolean variables which equals to $1$ with probability $\tau$.
    Finally, we will define the coupled weight sequence $\weight_{t, \tau}$ as  weight sequence generating by applying~\Cref{alg:abundant_theory} with learning rate $\eta/(1 - 2\tau)$ and batches $\dataset_{t,\tau}$.

    Consider the following sequence of weights,
    \begin{align*}
        \tw_0 &= W_0 \\
        \tw_t &= \tw_{t-1} - \eta\lambda \tw_{t-1} - \eta \nabla_{\weight} \E_{ {x \sim U(\Z_2^n)}} l(\model[\tw_{t-1}](x), s^t x \mod 2 ), 1 \le t \le T, t\in \Z.
    \end{align*}

    By standard approximation theorem, when the batch size $B$ tends to infinity, we would have $W_t \to \tw_t$ in probability. This implies $\tw_t$ corresponds to a function with accuracy at least $\gamma$ on LPN data without noise.
    We can choose $r$ small enough such that for any weight $\weight$ in $r$ neighbors of $\tw_T$ all correspond to a function with accuracy at least $\gamma'$ on LPN data without noise.
    Suppose now when $B \ge B_r > B_{th}$, we would have with probability $1 - \frac{p - p'}{2}$, $\|\weight_{t} - \tw_t \| \le \frac{r}{2}$.
    Here $\tilde W_l$ and $\tilde b_l$ are the corresponding weight and bias of $\tilde W$.
    We would now choose a compact convex set $\mathcal{C}$ large enough such that it contains $\{\tw_t\}_{t \in [0:T]}$ and their $r-$neighbor. By our assumption, we would have for any fixed $x \in \{0,1\}^n, y \in \{0,1\}^n$, and $w \in \{W_1, W_2, b_1, b_2\}$,
    $\frac{\partial l(M[\weight](x), y)}{\partial w}$ is first-order differentiable functions, hence we can assume that there exists constant $C_1$ and $C_2$, such that,
    \begin{align*}
        &C_1 \ge \max_{\weight = ((W_1,b_1),(W_2,b_2)) \in \mathcal{C}} \max_{w \in \{W_1, W_2, b_1, b_2\}} \max_{x \in \{0,1\}^n, y \in \{0,1\}} \left\| \frac{\partial \model[\weight](x_i)}{\partial w} \right \|_{\infty} \\
        &C_2 \|W_a - W_b \|_2 \ge \max_{x \in \{0,1\}^n, y \in \{0,1\}} \left\| \frac{\partial l(\model[\weight](x),y)}{\partial \weight} \mid_{\weight = \weight_a} - \frac{\partial l(\model[\weight](x),y)}{\partial \weight} \mid_{\weight = \weight_b} \right\|
    \end{align*}

    We will first fix $\epsilon$ be a small constant such that $e^{C_2T} \epsilon \le \frac{r}{2}$ and then choose $B_{th,\tau} = \max\{\frac{2 \ln T + \ln nd - \ln(p - p')}{(1 - 2\tau)^{2}} \frac{32ndC_1^2}{\epsilon^2}, B_{r} \}$.
    When $B \ge B_{th,\tau}$, we will inductively prove that for step $t \le T$, event $E_t: \|\weight_{t,\tau} - \weight_{t}\|_2 \le e^{\eta C_2 t} t \epsilon $ happens with probability $1 - \frac{(p - p')t}{2T} - \frac{(p - p')}{2}$.
    
    We would first suppose $\|W_t - \tw_t\| \le \frac{r}{2}$ by the definition of $B_{r}$.
    Suppose $E_t$ happens, as $e^{\eta C_2 t} \epsilon \le \frac{r}{2}$, we would have the $\weight_{t,\tau} \in C$.
    By~\Cref{lem:estimate_gradient},  it holds that with probability $1 - (p - p')/2T$, for any parameter $w$ in $\weight_{t,\tau}$, it holds
    that
    \begin{align*}
        \left \| \frac{1}{1 - 2 \tau}\frac{\partial \frac{1}{B} \sum_{i} l\left(\model[\weight](x_{t,i}), f_{t,i} + y_{t,i} \mod 2 \right)}{ \partial w} - \frac{\partial \frac{1}{B} \sum_{i} l\left(\model[\weight](x_{t,i}), y_{t,i} \right)}{ \partial w} \right \|_2 \le \frac{\epsilon}{4}.
    \end{align*}
    Considering the update rule of the SGD optimizer,
    \begin{align*}
        \left \| \weight_{t+1, \tau} - \weight_{t,\tau} - \eta \lambda \weight_{t,\tau} - \eta \frac{\partial \frac{1}{B} \sum_{i}l\left(\model[\weight](x_{t,i}), y_{t,i} \right)}{ \partial \weight} \mid_{\weight = \weight_{t,\tau}} \right\|_2 \le \epsilon.
    \end{align*}

    This then implies
    \begin{align*}
        \| \weight_{t + 1, \tau} - \weight_{t + 1} \|_2 &\le 
        \epsilon + \| \weight_{t, \tau} - \weight_t \|_2 + \eta C_2 \| W_{t,\tau} - W_t\|_2 \\
        &\le \epsilon + \exp(\eta C_2) \exp(\eta C_2 t) t \epsilon \\
        &\le e^{\eta C_2 (t + 1)} (t + 1) \epsilon.
    \end{align*}
    The induction is then complete.

    Considering the induction conclusion when $t = T$ and combining the definition of $r$, the proof is complete.
\end{proof}

\begin{lemma}
\label{lem:estimate_gradient}
    With loss function $l$ being the MAE loss and $\model$ being a one-layer MLP with weight $d$ and smooth activation function, suppose $(x_i,y_i)_{i \in [1:B]}$ are i.i.d sample from LPN problem with dimension $n$ and noise rate $0$. Suppose further $f_i$ are i.i.d
    random variables following distribution $p(f_1 = 0) = 1 - p(f_1 = 1) = \tau$, for any weight $\weight = ((W_1,b_1),(W_2,b_2))$ for $\model$, it holds that for $w \in \{W_1, W_2, b_1, b_2\}$, with probability $1 - 8nd\exp \left( \frac{- \epsilon^2 (1 - 2\tau)^2B}{2ndC^2}\right)$,
    \begin{align*}
        \left \| \frac{1}{1 - 2 \tau}\frac{\partial \frac{1}{B} \sum_{i = 1}^B l\left(\model[\weight](x_i), f_i + y_i \mod 2 \right)}{ \partial w} - \frac{\partial \frac{1}{B} \sum_{i = 1}^B l\left(\model[\weight](x_i), y_i \right)}{ \partial w} \right \|_2 \le \epsilon.
    \end{align*}
    Here $C = \max_{w \in \{W_1, W_2, b_1, b_2\}}\max_{x \in \{0,1\}^n, y \in \{0,1\}} \| \frac{\partial \model[\weight](x_i)}{\partial w} \|_{\infty}$.
\end{lemma}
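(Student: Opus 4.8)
The plan is to reduce the whole statement to a coordinatewise concentration inequality by first establishing an \emph{exact} pointwise relation between the noisy-label gradient and the clean-label gradient, and then invoking Hoeffding's inequality together with a union bound.

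First I would record the algebraic identity that makes the MAE loss special. Assuming, as in the base model, that the output activation is the sigmoid so that $\model[\weight](x_i)\in(0,1)$, and using $y_i\in\{0,1\}$, the MAE subgradient $\frac{\partial}{\partial p}|p-y_i|=\sgn(p-y_i)$ equals the constant $1-2y_i$ at $p=\model[\weight](x_i)$, and the non-differentiable point $p=y_i$ never occurs. Replacing $y_i$ by $y_i+f_i\bmod 2$ flips this sign exactly when $f_i=1$, so by the chain rule
$$\frac{\partial l(\model[\weight](x_i), y_i + f_i \bmod 2)}{\partial w} = (1 - 2f_i)\,\frac{\partial l(\model[\weight](x_i), y_i)}{\partial w}.$$
Writing $g_i := \partial l(\model[\weight](x_i), y_i)/\partial w$ (a tensor shaped like $w$, with entries bounded by $C$ in absolute value since $|1-2y_i|=1$ and by the definition of $C$), the quantity to be bounded becomes a single mean-zero average:
$$\frac{1}{1-2\tau}\cdot\frac{1}{B}\sum_i (1 - 2f_i) g_i - \frac{1}{B}\sum_i g_i = \frac{1}{B}\sum_i \left(\frac{1 - 2f_i}{1 - 2\tau} - 1\right) g_i.$$

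Next I would condition on the draws $(x_i,y_i)$, which fix every $g_i$, and treat the $f_i$ as the only randomness. The coefficients $\frac{1-2f_i}{1-2\tau}-1$ are independent, have mean zero because $\Ebb[1-2f_i]=1-2\tau$, and take the two values $\frac{2\tau}{1-2\tau}$ and $\frac{-2(1-\tau)}{1-2\tau}$, hence lie in an interval of length $\frac{2}{1-2\tau}$. Therefore, for a fixed scalar coordinate $k$, the terms $\bigl(\frac{1-2f_i}{1-2\tau}-1\bigr)g_{i,k}$ are independent with range at most $\frac{2C}{1-2\tau}$, and Hoeffding's inequality gives
$$\Pr\left[\left|\frac{1}{B}\sum_i\left(\frac{1-2f_i}{1-2\tau}-1\right)g_{i,k}\right| \ge \frac{\epsilon}{\sqrt{nd}}\right] \le 2\exp\left(-\frac{B\epsilon^2(1-2\tau)^2}{2ndC^2}\right).$$
Finally I would union-bound over coordinates and pass to the $\ell_2$ norm. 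The four blocks $W_1,W_2,b_1,b_2$ together contain at most $4nd$ scalar coordinates (bounding each block's count by $nd$), so a two-sided union bound over all of them produces the failure probability $8nd\exp\!\bigl(-\epsilon^2(1-2\tau)^2B/(2ndC^2)\bigr)$ stated in the lemma. On the complementary event every coordinate of the difference is at most $\epsilon/\sqrt{nd}$, and since the relevant weight block has at most $nd$ coordinates its $\ell_2$ norm is at most $\sqrt{nd}\cdot\epsilon/\sqrt{nd}=\epsilon$. Because the conditional bound is uniform over the realizations of $(x_i,y_i)$, it also holds unconditionally.

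I expect the only genuine obstacle to be the first step: obtaining the exact factor $(1-2f_i)$ relating the two gradients, which relies on $\model[\weight](x_i)$ staying inside $(0,1)$ so that the MAE subgradient is the constant $1-2y_i$ (the identity would fail for outputs outside $[0,1]$). Once this pointwise relation is in hand, the remainder is a routine Hoeffding-plus-union-bound argument, and the precise constants ($8nd$ and $2ndC^2$) follow from the range computation $\tfrac{2C}{1-2\tau}$ above and from bounding each weight block's coordinate count by $nd$.
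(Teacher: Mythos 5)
Your proposal is correct and follows essentially the same route as the paper's proof: the exact sign-flip identity for the MAE gradient under label flipping, coordinatewise Hoeffding conditional on the $(x_i,y_i)$ with threshold $\epsilon/\sqrt{nd}$, a two-sided union bound over at most $4nd$ coordinates giving the factor $8nd$, and $\ell_2$ aggregation. If anything, your write-up is slightly more careful than the paper's, which states the key identity with a sign/argument typo (pairing $n_i = 2f_i-1$ with $l(\cdot\,,f_i)$ on the right-hand side) and does not remark, as you do, that the sigmoid output lying in $(0,1)$ keeps the MAE loss differentiable at the relevant points.
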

\begin{proof}
    Denote $n_i = 2f_i - 1  \in \{-1,1\}$.
    The key observation of this proof is that for the MAE loss, it holds that 
    \begin{align*}
        \frac{\partial l\left(\model[\weight](x_i), f_i + y_i \mod 2 \right)}{\partial w} = n_i\frac{\partial l\left(\model[\weight](x_i), f_i  \right)}{\partial w}.
    \end{align*}

    Now consider any index $k$ of $w$ ($k$ can be two dimensional for a matrix), $n_i\frac{\partial l\left(\model[\weight](x_i), f_i  \right)}{\partial w_k}$ are bounded random variable in $[-C,C]$, then by Hoeffding's bound,  it holds for any $t > 0$
    \begin{align}\label{eq:hoeffding}
        &\pr \left( \left| \sum_{i = 1}^B n_i\frac{\partial l\left(\model[\weight](x_i), f_i  \right)}{\partial w_k} - \E\left(\sum_{i = 1}^B n_i\frac{\partial l\left(\model[\weight](x_i), f_i  \right)}{\partial w_k}\right) \right| \ge t\right)  \notag
        \\&\le 2\exp\left( -\frac{t^2}{2BC^2}\right).
    \end{align}

    Now we have
    \begin{align*}
        \sum_{i = 1}^B n_i\frac{\partial l\left(\model[\weight](x_i), f_i  \right)}{\partial w_k} &= \frac{\partial \frac{1}{B} \sum_{i = 1}^B l\left(\model[\weight](x_i), f_i + y_i \mod 2 \right)}{\partial w_k}. \\
        \E\left[\sum_{i = 1}^B n_i\frac{\partial l\left(\model[\weight](x_i), f_i  \right)}{\partial w_k}\right] &= (1 - 2\tau) \frac{\partial \frac{1}{B} \sum_{i = 1}^B l\left(\model[\weight](x_i), y_i  \right)}{\partial w_k}.
    \end{align*}

    By~\Cref{eq:hoeffding}, by choosing $t = (1 - 2\tau)\frac{B\epsilon}{\sqrt{nd}}$, we would have with probability $1 - 8nd\exp \left( \frac{- \epsilon^2 (1 - 2\tau)^2B}{2ndC^2}\right) $, it holds that 
    \begin{align*}
        \left \| \frac{1}{1 - 2 \tau}\frac{\partial \frac{1}{B} \sum_{i = 1}^B l\left(\model[\weight](x_i), f_i + y_i \mod 2 \right)}{ \partial w} - \frac{\partial \frac{1}{B} \sum_{i = 1}^B l\left(\model[\weight](x_i), y_i \right)}{ \partial w} \right\|_2 \le \epsilon.
    \end{align*}
    The proof is then complete.
\end{proof}

\subsection{Generalization Effect}
\label{sec:theory_generalization}

In modern deep learning theory, explaining the generalization effect of neural network is a long standing open problem. 
Under~\Cref{setting:abundant}, the generalization gap is naturally zero as the data distribution coincides with the population distribution.
However, under~\Cref{setting:moderate} and~\Cref{setting:restricted}, it would be necessary for us to consider generalization effect.
As we observed in~\Cref{sec:experiment}, under~\Cref{setting:moderate}, the generalization gap is still small and the key difficulty lies in the optimization 
of the neural network. Under~\Cref{setting:restricted}, however, the gap between training accuracy and test accuracy is typically large.
This result is expected because as the sample complexity decreases, the optimization landscape over population distribution and data distribution tends 
to differ. Our experiment shows that applying L2 regularization, or equivalently, weight decay, helps to mitigate this problem and reduce the learning rate.

Although the optimization dynamics of neural networks over the general LPN problem is hard to track, existing literature contains results showing the provable benefit of weight decay on sample complexity under a special setting. We include this result in this subsection for completeness. It remains an open problem to extend this result to secret $s$ with hamming weight proportional to the problem dimension.

\begin{theorem}[Informal Version of Theorem 1.1 in~\cite{wei2019regularization}]
\label{thm:wd}
     If a gradient-based optimization algorithm uses SGD optimizer, logistic loss, MLP with depth $1$ initialized by Kaiming initialization, weight decay $0$, and a small enough learning rate, the sample complexity required to learn the secret of a parity problem with dimension $n$ and Hamming weight $2$ secret is $\Omega(n^2)$.
    However, with proper weight decay constant $\lambda > 0$, the sample complexity of the same problem could be reduced to $\tilde O(n)$.
\end{theorem}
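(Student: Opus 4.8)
The plan is to follow the two-regime separation of Wei, Lee, Liu, and Ma~\cite{wei2019regularization} and prove the two halves of the statement by independent arguments. First I would fix the target: a Hamming-weight-$2$ secret $s$ supported on coordinates $\{i,j\}$ makes the clean label $s^t x \bmod 2$ equal to $x_i \oplus x_j$, which under the $\pm 1$ encoding is the quadratic monomial $x_i x_j$. The whole theorem is then an instance of the general phenomenon that a two-layer network in the \emph{kernel} (lazy) regime needs quadratically many samples to fit a degree-$2$ function, whereas the same network trained with weight decay sits in the \emph{rich} regime and can exploit the low effective dimension of the target.

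For the lower bound (weight decay $0$), I would first argue that with Kaiming initialization, zero weight decay, and a sufficiently small learning rate, the weights move only an $o(1)$ amount, so the trained network is uniformly close to its first-order (neural tangent kernel) linearization about initialization; hence the learned predictor lies, up to negligible error, in the RKHS of the fixed tangent kernel $K$. Because $K$ is rotation invariant on the Boolean cube, its eigenfunctions are the parity characters organized by degree, and the degree-$2$ eigenspace has dimension $\Theta(n^2)$. I would then invoke a kernel sample-complexity lower bound: since the $\Theta(n^2)$ weight-$2$ parities are orthonormal and receive comparable eigenvalue, a kernel predictor trained on $m$ samples lives in an $m$-dimensional span and cannot correlate with a uniformly random one of them beyond $O(m/n^2)$ in squared norm (a second-moment / Fourier counting argument), so reaching accuracy above $1/2 + \Omega(1)$ forces $m = \Omega(n^2)$.

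For the upper bound (proper $\lambda > 0$), the first step is a small-norm representation. Restricted to $\{\pm1\}^2$ the function $x_i x_j$ is piecewise linear, so a constant number of $\relu$ neurons whose input weights are supported on coordinates $i$ and $j$ represent it exactly with total weight norm $O(1)$, independent of $n$. Consequently the weight-decay-regularized objective has a global optimum of norm $O(1)$. The second step is a norm-based generalization bound: for inputs on the cube with $\|x\|_2=\sqrt{n}$ and a two-layer network of weight norm $O(1)$, the Rademacher complexity over $m$ samples is $O(\sqrt{n/m})$, so $m=\tilde O(n)$ samples drive the generalization gap below any constant and the regularized solution attains clean accuracy bounded away from $1/2$.

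The main obstacle is the optimization side of the upper bound: the regularized objective is non-convex, so one must show that the weight-decay dynamics actually reaches (a neighborhood of) the small-norm global optimum rather than merely that such an optimum exists. Wei et al.~\cite{wei2019regularization} close this gap with a dedicated convergence analysis of the regularized landscape, and since the present statement is explicitly the informal version I would import that analysis and defer the full convergence proof to~\cite{wei2019regularization}, presenting only the representation and generalization pillars in detail. A secondary technical point is making the ``stays in the kernel regime'' claim quantitative enough that the finite-width, finite-step, small-learning-rate SGD trajectory provably tracks the NTK predictor, which is exactly where the Kaiming-initialization and small-learning-rate hypotheses are consumed.
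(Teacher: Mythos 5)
The paper offers no proof of this statement: it is imported verbatim as an informal restatement of Theorem 1.1 of~\cite{wei2019regularization}, included ``for completeness'' in~\Cref{sec:theory_generalization}, so there is no internal proof to compare against. Judged on its own terms, your reconstruction faithfully mirrors the two-pillar strategy of the cited reference: the lower bound treats the zero-weight-decay trajectory as a kernel method for the tangent kernel and exploits permutation/sign-flip invariance (on the cube the NTK depends only on $\langle x, x'\rangle$, so its eigenfunctions are parities with degree-dependent eigenvalues), giving the $\Theta(n^2)$-dimensional degree-$2$ eigenspace counting argument; the upper bound combines an $O(1)$-norm exact representation of $x_i x_j$ by constantly many $\relu$ units supported on two coordinates with a norm-based Rademacher bound of order $\sqrt{n/m}$ (the $\sqrt{n}$ coming from $\|x\|_2 = \sqrt{n}$ on the cube), yielding $\tilde O(n)$ samples. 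Deferring the non-convex optimization analysis to~\cite{wei2019regularization} is appropriate given that the statement is explicitly labeled informal, and you correctly identify that as the genuinely hard step rather than sweeping it under the rug.

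Two technical caveats are worth flagging. First, in your lower bound the step from ``squared correlation of the kernel predictor with a random weight-$2$ parity is $O(m/n^2)$'' to ``classification accuracy is $1/2 + o(1)$'' is not immediate, because the classifier is $\sgn(g)$, which does not lie in the $m$-dimensional span even when $g$ does; one must either argue at the level of squared loss or use the rotation-invariance argument of~\cite{wei2019regularization}, which bounds the error of the kernel predictor directly. Second, your ``weights move only $o(1)$, hence NTK coupling'' claim needs the training horizon controlled as well as the learning rate: with zero weight decay and unbounded steps the iterates can eventually leave the lazy regime, which is precisely why the original theorem is phrased as a lower bound against kernel methods rather than against the literal SGD dynamics, with the coupling established separately. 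You acknowledge the second point yourself; making both quantitative is where the informal statement hides its real work.
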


\subsection{Discussion on the Hardness of Using Neural Networks to Solve LPN}
\label{sec:theory_hardness}

As mentioned in~\Cref{sec:theory_optimization}, it is proved in~\cite{abbe2020poly,shalev2017failures} that it requires $n^{\Omega(k)}$ sample complexity to solve the parity problem without noise using the full batch gradient descent method, which is also the regime our~\Cref{thm:optimization} falls into. 

However, this hardness constraint fails to hold for the stochastic gradient descent method. In fact~\cite{abbe2020poly} show that if a family of distribution is polynomial-time learnable, then there exists a polynomial-size neural network architecture, with a polynomial-time computable initialization that only depends on the family of the distribution, such that when stochastic gradient descent with batch size $1$ is performed on the network, it can learn the distribution in polynomial time complexity. We would like to remark some implications of these results.
\begin{itemize}
    \item These results suggest that whether there exists an architecture and initialization scheme such that~\Cref{alg:abundant_theory} can solve the LPN problem in polynomial time remains open and is inherently equivalent to whether LPN is in P.
    \item The construction of the neural network that simulates the polynomial time learning algorithm given by~\cite{abbe2020poly} relies heavily on deterministic initialization and it conjectures that SGD on random initialization will still require super polynomial time or sample complexity. 
    \item Although the full-batch gradient method will require $n^{\Omega(k)}$ time and sample complexity to solve the parity problem, this does not exclude the gradient method from being used to solve the LPN problem. First, it is still possible to use the gradient method as a building block of a larger algorithm that can effectively solve LPN. Second, given the best exponent component is not known, it is still possible that neural networks will show supreme performance over other classical algorithms, especially in the medium dimension with high noise regime, which our~\Cref{alg:moderate} tries to address.
\end{itemize}

\bibliography{lpn_ref}

\end{document}